\newtheorem{theorem}{Theorem}[section]
\newtheorem{assumption}[theorem]{Assumption}
\newtheorem{lemma}[theorem]{Lemma}
\newtheorem{proposition}[theorem]{Proposition}
\newtheorem{corollary}[theorem]{Corollary}
\newtheorem{definition}[theorem]{Definition}
\begin{document}

\twocolumn
\title{A Mirror Descent Perspective of Smoothed Sign Descent}

\author{Shuyang Wang$^{1}$ \and Diego Klabjan$^{2}$}
\date{\normalsize $^1$Department of Engineering Sciences and Applied Mathematics,\\
Northwestern University, Evanston, Illinois, USA \\
$^2$Department of Industrial Engineering and Management Sciences, \\
Northwestern University, Evanston, Illinois, USA}

\maketitle

\begin{abstract}
  Recent work by \citet{woodworth2020kernel} shows that the optimization dynamics of gradient descent for overparameterized problems can be viewed as low\hskip0pt-\hskip0pt dimensional dual dynamics induced by a mirror map, explaining the implicit regularization phenomenon from the mirror descent perspective. However, the methodology does not apply to algorithms where update directions deviate from true gradients, e.g. ADAM. We use the mirror descent framework to study the dynamics of smoothed sign descent with a stability constant $\varepsilon$ for regression problems. We propose a mirror map that establishes equivalence to dual dynamics under some assumptions. By studying dual dynamics, we characterize the convergent solution as an approximate KKT point of minimizing a Bregman divergence style function, and show the benefit of tuning the stability constant $\varepsilon$ to reduce the KKT error.
\end{abstract}

\section{Introduction}
Mirror descent (MD) is an optimization method that extends gradient descent (GD) beyond Euclidean geometries \citep{DarzentasJohn1984PCaM}. Central to the MD framework is a mirror map that facilitates transformation between a primal space where iterates exist and a dual space where updates are performed. By defining an appropriate mirror map, MD can adapt to the geometry of the problem for efficient optimization. Since its introduction, MD has attracted considerable research interest in its regularization properties and has motivated development of efficient optimization algorithms \citep{beck2003mirror, radhakrishnan2020linear, azizan2021stochastic, gunasekar2021mirrorless, sun2022mirror, sun2023unified}.

Recent studies reveal the power of adopting an MD perspective to interpret the optimization dynamics of GD for overparameterized problems \citep{woodworth2020kernel, li2022implicit}. Given a parameterization of a problem, they formulate mirror maps that establish equivalence between GD dynamics and low-dimensional MD dynamics. The simplified dual dynamics lead to a characterization of the convergent solution among all solutions in terms of the Bregman divergence. The convergent GD solution minimizes the Bregman divergence from the starting point. This method is further used to analyze the effects of the initialization shape \citep{azulay2021implicit} and stochasticity \citep{pesme2021implicit} on the convergent solution.

Such results have been shown on data where optimal solutions are easy to find, yet the underlying optimization dynamics are non-trivial. The MD framework provides a powerful and elegant tool for analyzing high-dimensional optimization dynamics. However, the existence of such mirror maps is highly dependent on both the problem parameterization and the optimization algorithm. Existing analyses do not extend to many popular algorithms beyond (stochastic) GD. The challenges arise from both the formulation of a mirror map and the analysis of dual dynamics. For instance, for adaptive gradient methods with coordinate-wise adaptive learning rates, the update directions deviate from the true gradients. The adaptivity alters the fundamental structure of the underlying dynamics, rendering the current methodology inapplicable. Our work addresses this limitation and proposes a method of applying the MD framework to study optimization dynamics when update directions do not follow true gradients. 

Among adaptive gradient descent methods, we examine a prototypical algorithm, smoothed sign descent, which can be viewed as a smoothed version of sign descent with a stability constant $\varepsilon$. Recent work reveals a deep connection between smoothed sign descent and popular optimizers such as ADAM and RMSProp \citep{kunstner2023noise, ma2022qualitative, balles2018dissecting, bernstein2018signsgd}. While sign descent has been studied as a proxy to understand the dynamics of more complex adaptive gradient methods \citep{ma2023understanding, balles2020geometry}, studies \citep{wang2021implicit, wang2022does} show that the stability constant plays a key role in determining the convergence direction for classification problems. This underscores the importance of studying smoothed sign descent and investigating the effect of the stability constant $\varepsilon$, which has been underexplored in literature. We study the dynamics of smoothed sign descent for a quadratically parameterized regression problem. Our results highlight the distinct properties in contrast to GD dynamics, and explicitly show the relationship between the stability constant $\varepsilon$ and the convergent solution.

In this work, we present an analysis of MD to interpret the optimization dynamics of smoothed sign descent. We identify an initial warm-up stage unique to smoothed sign descent, which allows us to formulate a mirror map for the main stage of the dynamics. Using the mirror map, we project the complex primal dynamics onto the dual space with a simplified structure. We further decompose the dual dynamics into a sign descent stage and a convergence stage. The dual dynamics interpretation enables us to connect the convergent solution to the approximate KKT point of minimizing a Bregman divergence style function. An in-depth analysis of the stability constant $\varepsilon$ reveals its effect on reducing the KKT error, corroborating the empirical findings on the sensitivity of the training and testing performance to the stability constant \citep{de2018convergence,liu2019variance,choi2019empirical}.

Our contributions are as follows.
\begin{itemize}
    \item We introduce the dual dynamics of smoothed sign descent for a quadratically parameterized regression problem using the MD framework.
    \item We show that after a warm-up stage, the dual dynamics begin a sign descent stage characterized by approximately linear growth with similar rates in all coordinates, and then transition into a convergence stage characterized by diminishing magnitude of gradients.
    \item We prove that the convergent solution satisfies the approximate KKT conditions for minimizing a Bregman divergence style function, in contrast to the already known exact Bregman divergence minimization property of GD dynamics. The convergent solution found by smoothed sign descent is the one that approximately minimizes the Bregman divergence style function from the starting point.
    \item We theoretically analyze the effect of the stability constant $\varepsilon$ on bounding the deviation from the exact KKT point, emphasizing the benefit of tuning the stability constant. 
\end{itemize}
In Section 2, we review previous research on the properties of MD and smoothed sign descent. In Section 3, we present our main results, including the formulation of dual dynamics and the characterization of convergent solutions. We conclude the paper in Section 4. 

\section{Related Work}
Recent work applies the MD framework to interpret dynamics of neural network training. \citet{woodworth2020kernel} discover the equivalent low-dimensional MD dynamics for the optimization dynamics of GD for overparameterized models, focusing on the effect of initialization scale. However, extending their methodology to more general cases remains a challenge. \citet{li2022implicit} identify a commutative property of neural network parameterization that enables the formulation of equivalent MD dynamics. \citet{pesme2021implicit} use a time-varying mirror map for stochastic GD and show the benefit of stochasticity for inducing sparsity of the convergent solution. \citet{azulay2021implicit} propose a warping technique to study the effect of the initialization shape on the equivalent MD dynamics of GD. We contribute to this line of research dealing with strict gradients by extending the framework beyond GD to a case where the adaptive learning rate breaks the gradient structure and showing distinct properties of the dual dynamics.

Research on regularization properties of MD algorithms dates back to the work \citep{beck2003mirror}, which reveals a local regularization effect in terms of Bregman divergence at each iteration. Recent study \citep{gunasekar2018characterizing} shows that MD converges to the solution that minimizes the associated Bregman divergence from the starting point among all solutions. Subsequent work \citep{azizan2018stochastic, azizan2021stochastic} extends this analysis to stochastic MD for nonlinear models and prove the Bregman divergence minimization property. Research so far primarily focuses on standard MD settings, where the dynamics follow the gradient directions in the dual space. In contrast, we study the case where the dual dynamics deviate from the gradients. We show that the convergent solution of smoothed sign descent satisfies the approximate KKT condition of minimizing a Bregman divergence style function by bounding the cumulative deviation. 

The stability constant $\varepsilon$, designed to ensure numerical stability for algorithms such as ADAM and RMSProp, is typically set to a negligible value by default. Its impact on optimization dynamics is underexplored. \citet{de2018convergence} experiment with different values of $\varepsilon$ for ADAM and RMSProp and observe that training and testing performance is sensitive to $\varepsilon$. Studies \citep{nado2020evaluating,liu2019variance,choi2019empirical} also provide empirical evidence supporting the benefit of tuning the stability constant $\varepsilon$. \citet{yuan2020eadam} study the effect of modifying the location of $\varepsilon$ in ADAM and proposes an alternative optimizer for improved performance. We provide a theoretical justification for tuning the stability constant $\varepsilon$ by explicitly showing its role in reducing the KKT error of the convergent solution.

Our work also contributes an MD perspective to the ongoing discussion on the implicit regularization phenomenon in neural network training \citep{neyshabur2014search, zhang2021understanding}. While many studies \citep{soudry2018implicit, arora2019implicit, lyugradient} focus on GD, fewer have investigated adaptive gradient methods despite the performance gap observed in the paper \citep{wilson2017marginal}. Notably, studies \citep{wang2021implicit, wang2022does} find that ADAM achieves the same convergent direction as GD in classification problems, while we prove a distinct regularization property for smoothed sign descent compared to GD in regression problems. Recent study \citep{xie2024implicit} characterizes the convergent solution of AdamW as training time approaches infinity. In contrast, we characterize the entire dynamics of smoothed sign descent by formulating the equivalent dual dynamics which reveal an intrinsically simplified structure.
\section{Dual Dynamics of Smoothed Sign Descent}
\subsection{Background}
Let us consider the update rule of GD for minimizing a loss function $L(\bm\beta)$ with step size $\eta > 0$:
\begin{equation}
    \bm\beta_{t+1} = \bm\beta_t - \eta \nabla L(\bm\beta_t).\label{eq:GDupdaterule}
\end{equation}
We suppose that the iterates $\bm\beta_t$ lie in the Euclidean space $\mathbb{R}^D$. Formally, the gradients $\nabla L(\bm\beta_t)$ lie in the dual space $\mathbb{R}^D$. In GD, we obtain the updated point by directly taking a linear combination of the iterate and the gradient as in \eqref{eq:GDupdaterule}. MD, however, formally distinguishes the primal and the dual spaces using a mirror map to transform between them. A mirror map $\nabla \Phi: \mathbb{R}^D \to \mathbb{R}^D$ is defined as the gradient of a potential function $\Phi: \mathbb{R}^D \to \mathbb{R}$, which is any differentiable and strictly convex function. The mirror map $\nabla \Phi$ maps the primal variable $\bm\beta$ to the dual variable denoted by $\bm\phi \in \mathbb{R}^D$. Each iteration of MD for minimizing $L(\bm\beta)$ follows the following steps, where the step size $\eta > 0$:
\begin{align}
    \bm\phi_t &= \nabla \Phi (\bm\beta_t) \label{eq:dualvariable} \\
    \bm\phi_{t+1} &= \bm\phi_t - \eta \nabla L(\bm\beta_t) \label{eq:discreteMDupdate}\\
    \bm\beta_{t+1} &= (\nabla \Phi)^{-1}(\bm\phi_{t+1}). \label{eq:backtoprimalvariable}
\end{align}
By plugging in \eqref{eq:dualvariable}, we can rewrite the MD update \eqref{eq:discreteMDupdate} in the dual space as: 
\begin{equation}
    \nabla \Phi(\bm\beta_{t+1}) = \nabla \Phi(\bm\beta_t) - \eta \nabla L(\bm\beta_t)).
\end{equation}
In the continuous-time limit when $\eta \to 0$, we get the \textbf{dual dynamics} of $\bm \beta(t)$:
\begin{equation}
    \frac{d\nabla \Phi(\bm\beta(t))}{dt} = -\nabla L(\bm\beta(t)). \label{eq:MDdualdynamics}
\end{equation}

A key element of MD is the Bregman divergence that serves as the notion of measuring the distance between two points in the primal space.
\begin{definition}[Bregman divergence]
For $\bm\beta_1, \bm\beta_2 \in \mathbb{R}^D$, the Bregman divergence associated with a potential function $\Phi$ from $\bm\beta_1$ to $\bm\beta_2$ is defined as
\begin{equation*}
    D_{\Phi}(\bm\beta_1, \bm\beta_2) = \Phi(\bm\beta_1) - \Phi(\bm\beta_2) - \langle \bm\beta_1 - \bm\beta_2, \nabla \Phi(\bm\beta_2)\rangle.
\end{equation*}
\end{definition} Bregman divergence generalizes squared Euclidean distance and captures different geometric structure of the space through the choice of $\Phi$. When $\Phi(\bm\beta) = \frac{1}{2}\Vert \bm\beta \Vert_2^2$, the associated Bregman divergence reduces to the squared Euclidean distance, the mirror map $\nabla \Phi$ becomes an identity map, and MD simplifies to GD.

\subsection{Problem Setup}
We suppose that there are $N$ examples with $D > N$ features $\{(\bm{x}^{(i)}, y^{(i)})\}_{i=1, ..., N}$, where $\bm{x}^{(i)} \in \mathbb{R}^D, y^{(i)} \in \mathbb{R}$. Let us denote the data matrix by $X \in \mathbb{R}^{N \times D}$, where each row is $\bm{x}^{(i)}$, and denote the labels of the examples by $\bm{y} \in \mathbb{R}^N$. The Hadamard product is denoted by $\odot$. We consider a regression problem of minimizing the following loss function with respect to $\bm{w} := \begin{bmatrix}
    \bm{w}^+ \\ \bm{w}^-
\end{bmatrix} \in \mathbb{R}^{2D}$, where $\bm{w}^+, \bm{w}^- \in \mathbb{R}^D$:
\begin{align}\label{eq:regressionproblem}
    L(\bm{w}) = \frac{1}{4}&\left(X\left(\bm{w}^+ \odot \bm{w}^+ - \bm{w}^- \odot \bm{w}^-\right) - \bm{y}\right)^\top \nonumber \\
    &\left(X\left(\bm{w}^+ \odot \bm{w}^+ - \bm{w}^- \odot \bm{w}^-\right) - \bm{y}\right).
\end{align}
We let $\bm{\beta} := \bm{w}^+ \odot \bm{w}^+ - \bm{w}^- \odot \bm{w}^- \in \mathbb{R}^D$ denote the regression parameter, and $L(\bm\beta) = \frac{1}{4}\left(X\bm\beta - \bm y\right)^\top\left(X\bm\beta - \bm y\right)$ is the standard quadratic loss. This parameterization of $\bm\beta$ by $\bm w$ can also be viewed as a 2-layer diagonal linear neural network with weights $\bm{w} \in \mathbb{R}^{2D}$ (see Section 4 of the paper \citep{woodworth2020kernel} for a detailed study of the model). Despite its simplicity, this setup has been used to prove insightful results for neural networks training \citep{woodworth2020kernel,pesme2021implicit,nacson2022implicit,vivien2022label}. 

When GD is applied to minimize loss \eqref{eq:regressionproblem} with respect to $\bm w$, from the GD update rule with infinitesimal step size $\eta$ we get\begin{equation*}
    \frac{d\bm w^+(t)}{dt} = -\nabla_{\bm w^+}L(\bm w(t)),~\frac{\bm w^-(t)}{dt} = -\nabla_{\bm w^-}L(\bm w(t)).
\end{equation*} Then, by chain rule we get the optimization dynamics of $\bm\beta(t)$: \begin{align}
    \frac{d\bm\beta(t)}{dt} = &-2\bm w^+(t) \odot \nabla_{\bm w^+} L(\bm w(t))\nonumber \\
    &+ 2\bm w^-(t) \odot \nabla_{\bm w^-} L(\bm w(t)).
    \label{eq:GDprimaldynamics}
\end{align}
Previous work \citep{woodworth2020kernel} shows that by defining a potential function:\begin{equation}
    \Psi_{\alpha}(\bm\beta) := \frac{1}{4}\left (\sum_{i=1}^D \beta_i \operatorname{arcsinh}\left(\frac{\beta_i}{2\alpha^2}\right) - \sqrt{\beta_i^2 + 4\alpha^4}\right), \label{eq:GDpotential}
\end{equation}
where $\alpha > 0$ is the initialization scale, we can project the dynamics \eqref{eq:GDprimaldynamics} onto the dual space using the mirror map $\nabla\Psi_{\alpha}$. Here the gradient is taken with respect to $\bm\beta$. Then, by derivation in Appendix~\ref{appendixC}, it follows that the dual dynamics are given by: 
\begin{equation}
    \frac{d\nabla\Psi_{\alpha}(\bm\beta(t))}{dt} = -\nabla_{\bm\beta} L(\bm\beta(t)).\label{eq:GDdualdynamics}
\end{equation}
Since \eqref{eq:GDprimaldynamics} and \eqref{eq:GDdualdynamics} are equivalent, in the continuous-time limit, the evolution of $\bm\beta(t)$ using GD can be interpreted as following the MD algorithm \eqref{eq:dualvariable}-\eqref{eq:backtoprimalvariable} with mirror map $\nabla\Psi_{\alpha}$.

The dual dynamics \eqref{eq:GDdualdynamics} reveal an intrinsically low-dimensional structure of the dynamics of $\bm\beta(t)$ in the overparameterized setting where $N < D$. Specifically, the gradients $\nabla_{\bm\beta}L(\bm\beta)$ in the right hand side of \eqref{eq:GDdualdynamics} are confined in a subspace $\text{span}\{\bm{x}^{(1)}, ..., \bm{x}^{(N)}\}$, which has dimension of at most $N$. Furthermore, by analyzing the dual dynamics, previous work \citep{woodworth2020kernel} proves that the convergent solution $\bm{\beta}^{\infty} := \lim_{t\to\infty}\bm\beta(t)$ satisfies the KKT conditions of the constrained optimization problem:

\begin{equation}\label{eq:GDconvergentsolution}
    \bm\beta^{\infty} = \underset{\bm\beta \in \mathbb{R}^D \text{ s.t. } X\bm\beta = \bm{y}}{\operatorname{argmin}}\,D_{\Psi_\alpha}(\bm\beta, \bm\beta(0)).
\end{equation}
\sloppy
In this work, we study the dynamics of smoothed sign descent for minimizing \eqref{eq:regressionproblem}. For smoothed sign descent, the weights are updated according to 
\begin{equation*}
    \bm{w}_{t+1} = \bm{w}_t - \eta \cdot \frac{\nabla_{\bm{w}} L(\bm{w}_t)}{|\nabla_{\bm{w}} L(\bm{w}_t)| + \varepsilon \bm{1}},
\end{equation*}
where $\varepsilon > 0$ is the stability constant and the operations are taken element-wise. Smoothed sign descent can be viewed as an adaptive gradient method with coordinate-wise adaptive learning rate $\eta_{i, t} = \frac{\eta}{|\left[\nabla_{\bm{w}}L(\bm{w}_t) \right]_i| + \varepsilon}$ for each $i$.  We suppose that the weights are initialized by $\bm{w}(0) = \alpha \mathbf{1}$, $\alpha > 0$. In the continuous-time limit, the dynamics of the weights become
\begin{equation}
    \frac{d\bm{w}(t)}{dt} = -\frac{\nabla_{\bm{w}}L(\bm{w}(t))}{|\nabla_{\bm{w}}L(\bm{w}(t))| + \varepsilon\mathbf{1}} \label{eq:weightdynmics}.
\end{equation}
This yields the dynamics of the regression parameter $\bm\beta(t)$ as follows, with $\bm\beta(0) = \bm{0}$:
\begin{align}
    \frac{d\bm{\beta}(t)}{dt} = &-2\bm w^+(t) \odot \frac{\nabla_{\bm{w}^+}L(\bm{w}(t))}{|\nabla_{\bm{w}^+}L(\bm{w}(t))| + \varepsilon\mathbf{1}} \nonumber \\
    &+ 2\bm w^-(t) \odot \frac{\nabla_{\bm{w}^-}L(\bm{w}(t))}{|\nabla_{\bm{w}^-}L(\bm{w}(t))| + \varepsilon\mathbf{1}}. \label{eq:betadynmics}
\end{align}

With coordinate-wise adaptive learning rate, the update direction deviates from the true gradients and the mirror map $\nabla \Psi_{\alpha}$ for GD no longer holds. It leads to two interesting questions: \begin{enumerate}
    \item Can we formulate a mirror map to show equivalent dual dynamics for \eqref{eq:betadynmics}? 
    \item Can we use the dual dynamics to characterize the convergent solution among all solutions?
\end{enumerate}

\subsection{Main Results}
In this section, we present our answers to the two questions. We construct a mirror map for smoothed sign descent that reveals a simplified structure of the dual dynamics. We analyze different stages of the induced dual dynamics, and prove that the convergent solution satisfies approximate KKT conditions for minimizing a Bregman divergence style function, which is also defined in \citep{pesme2021implicit}. The weight dynamics \eqref{eq:weightdynmics} form a coupled system of nonlinear ODEs, with the stability constant $\varepsilon$ adding another layer of complexity. Solving this ODE system analytically is intractable. Therefore, we make the following assumptions to facilitate our analysis.
\begin{assumption}\label{assumption:data}
    We assume that $y^{(n)}$ are non-zero, and that there exists a permutation of the columns of $X$ such that $X^\top X$ is block-diagonal with $N$ rank-1 blocks denoted by $B^{(n)} \in \mathbb{R}^{D_n \times D_n}$ for $n = 1, \dots, N$.
\end{assumption}

It is easy to see that this condition is equivalent to requiring that each row of $X$ has $D_n \geq 1$ non-zero elements denoted by $x^{(n)}_1, \dots, x^{(n)}_{D_n}$, where $\sum_{n=1}^N D_n = D$. While this assumption yields an easy optimization problem in the primal space, the dynamics of smoothed sign descent are very complex and intriguing. We require the stability constant $\varepsilon$ to be small relative to components of the initial gradient so that it does not overshadow the essential behavior of the dynamics as a smoothed version of sign descent. We notice that $\bm{w} = \bm{0}$ is a stationary point of the weight dynamics \eqref{eq:weightdynmics}. Since the weights are initialized as $\bm{w}(0) = \alpha \bm{1}$ where $\alpha > 0$, we assume that $\alpha$ is chosen not so small to avoid being stuck near a stationary point, and also not so large that it dominates the final convergent solution. 

\begin{assumption}\label{assumption:epsilonalpha}
\sloppy
We assume that for each $n \in \{1, \dots, N\}$ and $i \in \{1, \dots, D_n\}$, the stability constant $\varepsilon$ and the initialization scale $\alpha$ satisfy: 
\begin{align*}
    &0 \leq \varepsilon \leq \frac{1}{9}\frac{|x^{(n)}_i| |y^{(n)}|^{\frac{3}{2}}}{\sqrt{2\sum_{k=1}^{D_n} |x_k^{(n)}|}}, \\
    &\frac{9\varepsilon}{4\left|x^{(n)}_i y^{(n)}\right|} \leq \alpha \leq \frac{1}{3}\sqrt{\frac{|y^{(n)}|}{2\sum_{k=1}^{D_n} |x_k^{(n)}|}}.
\end{align*}
\end{assumption}

\subsubsection{Three Stages}\label{section:threestages}
We begin by studying the sign and monotonicity of $\bm w^+(t)$ and $\bm w^-(t)$ by the following lemma assuming they satisfy \eqref{eq:weightdynmics}. Proofs of the results in this section can be found in Appendix~\ref{appendixA}.
\begin{proposition}\label{proposition:signandmonotone}
For each coordinate $i \in \{1, \dots, D\}$, 
\begin{itemize}
    \item $w_i^+(t)$ and $w_i^-(t)$ are always non-negative,
    \item if $w_i^+(0)' > 0$, then $w^+_i(t)' \geq 0$ and $w^-_i(t)' \leq 0$ for all $t$,
    \item if $w_i^+(0)' \leq 0$, then $w^+_i(t)' \leq 0$ and $w^-_i(t)' \geq 0$ for all $t$.
\end{itemize}
\end{proposition}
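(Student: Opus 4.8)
The plan is to collapse the coupled ODE system \eqref{eq:weightdynmics}--\eqref{eq:betadynmics} into a family of scalar residual equations, one per training example, that turn out to be \emph{linear} and hence sign-preserving. As setup, first note that for fixed $\varepsilon>0$ the right-hand side of \eqref{eq:weightdynmics} is $C^1$ in $\bm w$ (the scalar map $z\mapsto z/(|z|+\varepsilon)$ is $C^1$ with derivative bounded by $1/\varepsilon$, composed with the polynomial $\nabla_{\bm w}L$) and bounded in norm by $\sqrt{2D}$, so a unique solution $\bm w(t)$ exists for all $t\ge 0$; everything below is evaluated along this trajectory. Next, compute gradients explicitly: with $\bm g:=X^\top(X\bm\beta-\bm y)=2\nabla_{\bm\beta}L$, the chain rule gives $\nabla_{w_i^+}L=w_i^+g_i$ and $\nabla_{w_i^-}L=-w_i^-g_i$; and under Assumption~\ref{assumption:data}, if coordinate $i$ lies in block $n$ with nonzero row entry $\tilde x_i:=X_{ni}$ (one of the $x^{(n)}_1,\dots,x^{(n)}_{D_n}$), then $g_i=\tilde x_i\,r^{(n)}$, where $r^{(n)}:=(X\bm\beta)_n-y^{(n)}=\sum_{j\in\text{block }n}\tilde x_j\beta_j-y^{(n)}$ is the residual on example $n$.

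For the first bullet I would view $w_i^+(\cdot)$ as a solution of the scalar nonautonomous ODE $\dot u=-u\,g_i(t)/(|u|\,|g_i(t)|+\varepsilon)$, whose right-hand side is locally Lipschitz in $u$ (uniformly on compact $t$-intervals, since $\varepsilon>0$) and vanishes identically when $u=0$. By uniqueness, a solution equal to $0$ at any time is identically $0$; since $w_i^+(0)=\alpha>0$ we conclude $w_i^+(t)>0$ for all $t$, and likewise $w_i^-(t)>0$. (Alternatively one integrates $\tfrac{d}{dt}\ln w_i^+$, whose integrand is bounded for $\varepsilon>0$, and rules out reaching $0$ in finite time.)

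The crux is controlling the sign of $g_i(t)$, which by the block structure equals $\operatorname{sign}(\tilde x_i)\operatorname{sign}(r^{(n)}(t))$. From \eqref{eq:betadynmics} and the gradient formulas, $\dot\beta_j=-g_j\mu_j$ with $\mu_j:=2\big((w_j^+)^2/(w_j^+|g_j|+\varepsilon)+(w_j^-)^2/(w_j^-|g_j|+\varepsilon)\big)>0$ (strict positivity from $w_j^\pm>0$). Differentiating $r^{(n)}$ and substituting $g_j=\tilde x_j r^{(n)}$ for $j$ in block $n$ yields
\[
  \dot r^{(n)}=\sum_{j\in\text{block }n}\tilde x_j\dot\beta_j
  =-\Big(\sum_{j\in\text{block }n}\tilde x_j^2\mu_j\Big)r^{(n)}=:-\lambda_n(t)\,r^{(n)},
\]
with $\lambda_n(t)>0$ since every $\tilde x_j\ne 0$. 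Hence $r^{(n)}(t)=r^{(n)}(0)\exp\!\big(-\!\int_0^t\lambda_n\big)$ keeps the sign of $r^{(n)}(0)=-y^{(n)}\ne 0$ (using Assumption~\ref{assumption:data}) for all $t$; in particular $g_i(t)$ never vanishes and $\operatorname{sign}g_i(t)=\operatorname{sign}g_i(0)=-\operatorname{sign}(\tilde x_i y^{(n)})$.

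For the monotonicity bullets, since $w_i^\pm>0$ we have $\dot w_i^+=-w_i^+g_i/(w_i^+|g_i|+\varepsilon)$ with sign $-\operatorname{sign}g_i$, and $\dot w_i^-=w_i^-g_i/(w_i^-|g_i|+\varepsilon)$ with sign $\operatorname{sign}g_i$. Evaluating at $t=0$, $w_i^+(0)'=-\alpha g_i(0)/(\alpha|g_i(0)|+\varepsilon)$, so $w_i^+(0)'>0\Leftrightarrow g_i(0)<0$ and, since $g_i(0)\ne 0$, $w_i^+(0)'\le 0\Leftrightarrow g_i(0)>0$. Combining with the sign preservation of $g_i$ just established, $\dot w_i^+$ and $\dot w_i^-$ retain their signs for all $t$, which is exactly the claim. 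I expect the third step to be the main obstacle: the key realization is that Assumption~\ref{assumption:data} makes the per-example residuals decouple into scalar linear ODEs with strictly positive time-varying rates $\lambda_n(t)$, which is what forces each $r^{(n)}$ — and hence each gradient coordinate $g_i$ — to preserve its initial sign; the non-negativity and monotonicity parts are then bookkeeping plus a standard ODE uniqueness argument.
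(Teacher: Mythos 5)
Your proof is correct, and it takes a genuinely different route from the paper's. The paper argues both bullets by contradiction via the intermediate value theorem: for non-negativity it observes that a zero-crossing of $w_i^+$ would force a negative derivative at a point where the right-hand side vanishes; for monotonicity it shows that if $x_i r(t)$ ever changed sign, the residual would have to pass through zero, making $\bm w^{(n)}$ an equilibrium of the autonomous block ODE \eqref{eq:odesystem1block} and freezing the trajectory thereafter, a contradiction. You instead invoke ODE uniqueness directly (the constant $0$ solves the scalar equation for $w_i^+$, so a solution starting at $\alpha>0$ can never touch it), and, more importantly, you close the argument by observing that the per-example residual satisfies a scalar linear ODE $\dot r^{(n)} = -\lambda_n(t)\,r^{(n)}$ with $\lambda_n(t)>0$, so its sign is preserved by the explicit exponential formula. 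The paper in fact derives the same derivative identity for $r$ (Lemma~\ref{lemma:residualmonotone}), but uses it only to show $|r|$ is non-increasing and relies on the equilibrium-contradiction argument for sign preservation in the proposition itself. Your linear-ODE route is more constructive and makes the sign preservation and non-vanishing of $r^{(n)}$ (hence of each $g_i$) transparent in one step, at the cost of explicitly establishing well-posedness and strict positivity $w_i^\pm>0$ first so that the rate $\lambda_n(t)=\sum_j \tilde x_j^2\mu_j$ is strictly positive. Both approaches lean on the same structural fact that Assumption~\ref{assumption:data} decouples the dynamics into per-example scalar residuals.

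Two small remarks: you prove the slightly stronger statements $w_i^\pm(t)>0$ and $\dot w_i^\pm(t)\neq 0$ for all $t$, which of course imply the stated weak inequalities; and your sign conventions for $r^{(n)}$ and $g_i$ are the negatives of the paper's (the paper sets $r^{(n)}=y^{(n)}-\sum_i x_i^{(n)}\beta_i^{(n)}$), but the bookkeeping is internally consistent.
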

For each $i$, based on this proposition, either $w^+_i(t)$ or $w^-_i(t)$ is monotonically non-decreasing. We denote the dominating weight that is monotonically non-decreasing by $u_i$, and we denote the one that is non-increasing by $v_i$. 
A key identity in the derivation of the mirror map for GD is that $w^+_i(t)w^-_i(t) = \alpha^2$ holds throughout the dynamics. However, this quantity is not conserved when coordinate-wise adaptivity is applied. In fact, we can show that $w^+_i(t) w^-_i(t) < \alpha^2$ for $t > 0$. The adaptive learning rate ensures similar rate of change across all coordinates, and enables sufficient updates even when the gradient magnitude is relatively small. In particular, this allows the non-dominating weight $v_i(t)$ to diminish to negligible values early on. Based on this observation, we identify an initial warm-up stage of the dynamics where $v_i(t)$ decreases to and remains below a value on the order $\varepsilon$ across all coordinates. The following lemma also shows that this warm-up stage lasts no longer than $t=2\alpha$.
\begin{proposition}\label{proposition:initialstage}
There exists $T_0 \in (0, 2\alpha]$ such that for all $t \geq T_0$, $v_i(t) \leq \frac{2\varepsilon}{|x^{(n)}_i y^{(n)}|}$ for all $i$. 
\end{proposition}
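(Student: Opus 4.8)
The plan is to use Assumption~\ref{assumption:data} to decouple the dynamics \eqref{eq:weightdynmics} block by block and then, within a fixed block $n$, to show that the non\hskip0pt-\hskip0pt dominating weight $v_i$ of each coordinate $i$ is driven down at a rate bounded away from zero until it crosses the claimed threshold. Since each column of $X$ is supported on a single row, a short chain-rule computation from \eqref{eq:regressionproblem} gives $\nabla_{w_i^+}L = w_i^+\, x_i^{(n)}\, g^{(n)}$ and $\nabla_{w_i^-}L = -\,w_i^-\, x_i^{(n)}\, g^{(n)}$, where $g^{(n)}(t) := \sum_{k=1}^{D_n} x_k^{(n)}\beta_k(t) - y^{(n)}$ is the residual of example $n$ and $g^{(n)}(0) = -y^{(n)} \ne 0$. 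Combining this with the sign and monotonicity statements of Proposition~\ref{proposition:signandmonotone}, the non\hskip0pt-\hskip0pt dominating weight obeys
\[
  v_i'(t) = -\,\frac{v_i(t)\,|x_i^{(n)}|\,|g^{(n)}(t)|}{v_i(t)\,|x_i^{(n)}|\,|g^{(n)}(t)| + \varepsilon},
\]
so controlling $v_i'$ from below (in magnitude) reduces to keeping the residual $|g^{(n)}|$ bounded away from $0$.

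The next step is a uniform lower bound on $|g^{(n)}|$ over the warm-up window $[0,2\alpha]$. Because the smoothed sign map has magnitude strictly less than $1$, every weight changes at rate at most $1$, so each dominating weight satisfies $u_k(t) \le \alpha + t \le 3\alpha$ for $t \le 2\alpha$, while Proposition~\ref{proposition:signandmonotone} gives $0 \le v_k(t) \le \alpha$; hence $|\beta_k(t)| = |u_k(t)^2 - v_k(t)^2| \le 9\alpha^2$ on that interval. Summing over the block and invoking the upper bound on $\alpha$ in Assumption~\ref{assumption:epsilonalpha},
\[
  \Big|\sum_{k=1}^{D_n} x_k^{(n)}\beta_k(t)\Big| \le 9\alpha^2\sum_{k=1}^{D_n}|x_k^{(n)}| \le \tfrac12|y^{(n)}|,
\]
so the reverse triangle inequality yields $|g^{(n)}(t)| \ge \tfrac12|y^{(n)}|$ for all $t \in [0,2\alpha]$ (in particular $g^{(n)}$ keeps a fixed sign there, consistently with Proposition~\ref{proposition:signandmonotone}).

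Finally I would argue by contradiction. Suppose $v_i(t) > \frac{2\varepsilon}{|x_i^{(n)}y^{(n)}|}$ for every $t \in [0,2\alpha]$. Then, using the bound from the second step, the numerator of the displayed expression for $v_i'$ exceeds $\frac{2\varepsilon}{|x_i^{(n)}y^{(n)}|}\cdot|x_i^{(n)}|\cdot\tfrac12|y^{(n)}| = \varepsilon$ throughout $[0,2\alpha]$, and since $x\mapsto x/(x+\varepsilon)$ is increasing this forces $|v_i'(t)| > \tfrac12$ on the whole window, hence $v_i(2\alpha) \le v_i(0) - \tfrac12\cdot 2\alpha = 0$, contradicting both $v_i \ge 0$ (Proposition~\ref{proposition:signandmonotone}) and the standing assumption that $v_i(2\alpha)$ exceeds a positive threshold. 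Therefore for each $i$ there is a first time $T_0^{(i)} \in (0,2\alpha]$ with $v_i(T_0^{(i)}) \le \frac{2\varepsilon}{|x_i^{(n)}y^{(n)}|}$ (it is positive because the lower bound $\alpha \ge \frac{9\varepsilon}{4|x_i^{(n)}y^{(n)}|}$ in Assumption~\ref{assumption:epsilonalpha} makes the threshold strictly smaller than $v_i(0)=\alpha$), and since $v_i$ is non-increasing it remains below that value afterwards; taking $T_0 = \max_i T_0^{(i)} \le 2\alpha$ completes the argument. The main obstacle is the second step: everything rests on keeping $g^{(n)}$ away from $0$ over a window long enough for $v_i$ to reach the threshold, which is precisely where the upper bound on $\alpha$ is needed and why the length $2\alpha$ is the right choice — the rate bound $\tfrac12$ exactly guarantees the crossing by time $2\alpha$.
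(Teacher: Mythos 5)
Your proof is correct and follows essentially the same strategy as the paper: use the upper bound on $\alpha$ from Assumption~\ref{assumption:epsilonalpha} to keep the residual above $\tfrac12|y^{(n)}|$ on $[0,2\alpha]$, then use the rate bound $|v_i'|>\tfrac12$ whenever $h_i>\varepsilon$ together with non-negativity of $v_i$ from Proposition~\ref{proposition:signandmonotone} to force the threshold crossing by $t=2\alpha$. The only cosmetic difference is that you bound the residual uniformly on $[0,2\alpha]$ and argue by contradiction at $t=2\alpha$, whereas the paper locates the first time $t_i$ with $h_i(t_i)=\varepsilon$ and pulls the residual bound back from $2\alpha$ to $t_i$ via the monotonicity of $|r(t)|$ (Lemma~\ref{lemma:residualmonotone}); your variant does not need that lemma.
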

\sloppy
The proof hinges on upper bounding the value of $v_i(t)$ when the gradient component $[\nabla_{\bm v}L(\bm w(t))]_i$ reaches $\varepsilon$ at $t=t_i$. Before $t_i$, the absolute value of the derivative $|v'_i(t)|$ is always greater than $\frac{1}{2}$, ensuring rapid decreasing of $v_i(t)$. Meanwhile, the non-negativity of $v_i(t)$ by Proposition~\ref{proposition:signandmonotone} guarantees that the rapid decreasing stage lasts no longer than $2\alpha$. Based on the expression $[\nabla_{\bm v}L(\bm w(t))]_i = v_i(t)|x^{(n)}_i r^{(n)}(t)|$, we continue to lower bound the residual $|r^{(n)}(t)|$ using the maximal growth of $u_i(t)$ during this short time period. Finally, the lower bound of $|r^{(n)}(t_i)|$ leads to the upper bound of $v_i(t_i)$ at $t_i$. We complete the proof by letting $T_0$ be the largest $t_i$ across all coordinate $i$. 

During the warm-up stage, both $\bm{u}(t)$ and $\bm{v}(t)$ follow sign descent approximately, which allows us to approximate the primal dynamics of $\bm\beta(t)$ by sign descent. After $T_0$, the dynamics of $\bm\beta(t)$ transition into the main stage, where $\bm v(t)$ remains small and the magnitude of $\bm \beta(t)$ is denominated by $\bm u(t)$. While the primal dynamics become complex, we formulate a mirror map so that the dual dynamics have a simplified structure that closely aligns with the sign of $\nabla_{\bm u} L(\bm w(t))$.
\begin{proposition}[Dual dynamics of smoothed sign descent]\label{proposition:dualdynamics}
    For $t > 0$, we define a potential function $\Phi_t(\bm\beta) = \frac{2}{3}\sum_{i=1}^{D} \left(|\beta_i| + v_{i, t}^2\right)^{\frac{3}{2}}$. The induced mirror map $\nabla \Phi_t: \mathbb{R}^D \to \mathbb{R}^D$ maps $\bm\beta(t)$ to the dual space. The dynamics in the dual space follow \begin{equation}\label{eq:dualdynamics}
        \frac{d\nabla \Phi_t (\bm{\beta}(t))}{dt} = -\operatorname{sgn}(\bm\beta(t)) \odot \frac{\nabla_{\bm{u}}L(\bm w(t))}{|\nabla_{\bm{u}}L(\bm w(t))| + \varepsilon\bm{1}}.
    \end{equation}
\end{proposition}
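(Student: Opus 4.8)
The plan is to exploit the fact that $\Phi_t$ is engineered so that each coordinate's gradient, evaluated along the trajectory, collapses to the dominating weight. First I would compute the mirror map explicitly: for $\beta_i\neq 0$,
\[
\bigl[\nabla\Phi_t(\bm\beta)\bigr]_i=\operatorname{sgn}(\beta_i)\sqrt{|\beta_i|+v_{i,t}^2},
\]
where the constant $\tfrac23$ is calibrated precisely to cancel the factor $\tfrac32$ produced by differentiating the $3/2$ power. Each summand $\tfrac23(|s|+v_{i,t}^2)^{3/2}$ has a strictly increasing derivative in $s$ (a jump of $2v_{i,t}$ at $s=0$, positive slope elsewhere), so $\Phi_t$ is strictly convex and $\nabla\Phi_t$ is a legitimate mirror map; it is smooth away from the coordinate hyperplanes, which is all we need since we will only evaluate it at points $\bm\beta(t)$ with all $\beta_i(t)\neq0$.

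The key step is to evaluate $\nabla\Phi_t$ along the trajectory using Proposition~\ref{proposition:signandmonotone}. For each $i$ the roles of $u_i$ and $v_i$ are fixed for all $t$: $u_i$ is the weight that is monotonically non-decreasing (hence $u_i(t)\ge u_i(0)=\alpha$) and $v_i$ the one that is non-increasing (hence $0\le v_i(t)\le v_i(0)=\alpha$), so $u_i(t)\ge v_i(t)\ge 0$. Since $\beta_i=(w_i^+)^2-(w_i^-)^2=\pm(u_i^2-v_i^2)$ with a fixed sign, this gives $|\beta_i(t)|=u_i(t)^2-v_i(t)^2$ and therefore
\[
|\beta_i(t)|+v_{i,t}^2=u_i(t)^2,\qquad \bigl[\nabla\Phi_t(\bm\beta(t))\bigr]_i=\operatorname{sgn}(\beta_i(t))\,u_i(t).
\]
I would also note that by Assumption~\ref{assumption:data} the initial gradient component $[\nabla_{\bm u}L(\bm w(0))]_i=-\alpha\,|x^{(n)}_i y^{(n)}|$ is nonzero, so $\beta_i$ leaves $0$ immediately and (being monotone off $0$) stays strictly away from $0$ for $t>0$; thus $\operatorname{sgn}(\beta_i(t))$ is a fixed nonzero constant on $(0,\infty)$, which is what licenses differentiating the product above.

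The remaining computation is then immediate: differentiating the total $t$-dependence of $\nabla\Phi_t(\bm\beta(t))$ coordinatewise, with $\operatorname{sgn}(\beta_i(t))$ held constant, gives $\tfrac{d}{dt}[\nabla\Phi_t(\bm\beta(t))]_i=\operatorname{sgn}(\beta_i(t))\,u_i'(t)$, and the weight dynamics \eqref{eq:weightdynmics} yield $u_i'(t)=-[\nabla_{\bm u}L(\bm w(t))]_i/(|[\nabla_{\bm u}L(\bm w(t))]_i|+\varepsilon)$; assembling the coordinates reproduces \eqref{eq:dualdynamics}. (Equivalently, one can differentiate $\operatorname{sgn}(\beta_i)\sqrt{|\beta_i|+v_i^2}$ directly, substitute the primal dynamics \eqref{eq:betadynmics} for $\beta_i'$ rewritten as $\operatorname{sgn}(\beta_i)(2u_iu_i'-2v_iv_i')$, and observe that the $v_iv_i'$ terms cancel.) The main obstacle is conceptual rather than computational: unlike the GD case, $w_i^+w_i^-$ is not conserved, so no static potential can reproduce the dynamics, and the nontrivial point is recognizing that the time-dependent shift $v_{i,t}^2$ is exactly the one for which $|\beta_i|+v_{i,t}^2$ trivializes to $u_i^2$ along the trajectory — so that the explicit $t$-dependence of $\nabla\Phi_t$ and the implicit dependence through $\bm\beta(t)$ combine into the clean expression $\operatorname{sgn}(\beta_i)\,u_i'$. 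Once that identity and the sign bookkeeping from Proposition~\ref{proposition:signandmonotone} are in place, the statement follows directly from \eqref{eq:weightdynmics}.
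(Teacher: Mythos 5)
Your proof is correct and follows essentially the same route as the paper's: both hinge on the identity $|\beta_i(t)|+v_{i,t}^2=u_i(t)^2$ along the trajectory (so the dual variable is $\operatorname{sgn}(\beta_i)\,u_i$) together with the constancy of $\operatorname{sgn}(\beta_i(t))$, and your parenthetical alternative is exactly the paper's chain-rule computation in which the $v_iv_i'$ contributions from the Hessian term and the explicit $t$-dependence cancel. If anything, you are slightly more careful than the paper in justifying that $\beta_i(t)\neq 0$ for $t>0$ and in addressing the non-differentiability of $\Phi_t$ on the coordinate hyperplanes.
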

The potential function is time-varying with a time-dependent parameter $v_{i, t} := v_i(t)$. \citet{pesme2021implicit} also employ a time-varying potential function to construct a mirror map for the dynamics of stochastic GD. \citet{radhakrishnan2020linear} conduct a thorough analysis of the convergence of MD with time-dependent mirrors. For $t \geq T_0$, since the non-dominating weights $v_i(t)$ diminish to small values by Proposition~\ref{proposition:initialstage}, the potential function has a close connection with the $l_{3/2}$-norm of $\bm\beta(t)$, in contrast with the potential function \eqref{eq:GDpotential} for GD.

The dual dynamics \eqref{eq:dualdynamics} indeed reveal a greatly simplified structure compared to the primal dynamics \eqref{eq:betadynmics}. However, it differs from standard MD dynamics \eqref{eq:MDdualdynamics} where the updates in the dual space align with the gradients exactly. The alignment has allowed previous work to show that the convergent solution satisfies the KKT conditions for Bregman divergence minimization as in \eqref{eq:GDconvergentsolution}. Therefore, further analysis of the dual dynamic \eqref{eq:dualdynamics} is required to understand the deviation from following the true gradients. 
\begin{proposition}\label{proposition:mainstage}
    There exists $T > T_0$ such that we can divide the dynamics into two stages: 
    \begin{itemize}
        \item Sign descent stage: for $t \in [T_0, T)$, $\left|\nabla_{\bm u} L(\bm w(t))\right|_i > \varepsilon$ for all $i$,
        \item Convergence stage: for $t \in [T, \infty)$, $\min_i \left|\nabla_{\bm u} L(\bm w(t))\right|_i \leq \varepsilon$.
    \end{itemize}
\end{proposition}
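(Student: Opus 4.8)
The plan is to let $T$ be the first time after $T_0$ at which some coordinate of the gradient $\nabla_{\bm u}L$ falls to $\varepsilon$, and then to verify the two required properties: that $T>T_0$ (so the sign descent stage is non-degenerate) and that, once $\min_i\bigl|\nabla_{\bm u}L(\bm w(t))\bigr|_i$ reaches $\varepsilon$, it never climbs back above it (so the convergence stage is all of $[T,\infty)$). Everything is done block by block: by Assumption~\ref{assumption:data}, $L(\bm\beta)=\tfrac14\sum_{n=1}^N (r^{(n)})^2$ with $r^{(n)}=\sum_{k=1}^{D_n}x^{(n)}_k\beta^{(n)}_k-y^{(n)}$, and for the dominating weight $u_k$ in block $n$ one has $\nabla_{u_k}L=u_k x^{(n)}_k r^{(n)}$, hence $\bigl|\nabla_{\bm u}L\bigr|_k=u_k|x^{(n)}_k|\,|r^{(n)}|$ and $u_k'=\bigl|\nabla_{\bm u}L\bigr|_k/(\bigl|\nabla_{\bm u}L\bigr|_k+\varepsilon)$. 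The first structural ingredient is the residual dynamics: differentiating $r^{(n)}$ along \eqref{eq:betadynmics} gives $\dot r^{(n)}=-2G_n(t)\,r^{(n)}$ with $G_n(t)=\sum_k (x^{(n)}_k)^2\bigl(\tfrac{(w^+_k)^2}{w^+_k|x^{(n)}_k r^{(n)}|+\varepsilon}+\tfrac{(w^-_k)^2}{w^-_k|x^{(n)}_k r^{(n)}|+\varepsilon}\bigr)\ge 0$. So $|r^{(n)}(t)|$ is non-increasing and $\operatorname{sgn}r^{(n)}$ is constant; equivalently the prediction moves monotonically toward $y^{(n)}$ without overshoot, so each $x^{(n)}_k\beta^{(n)}_k$ keeps the sign of $y^{(n)}$ and $\sum_k|x^{(n)}_k|\,|\beta^{(n)}_k|=|y^{(n)}|-|r^{(n)}|$. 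Using Proposition~\ref{proposition:initialstage} to discard the $O(\varepsilon^2)$ contribution of $v_k^2$, this gives the two facts I will use repeatedly: $\sum_k|x^{(n)}_k|u_k(t)^2\le|y^{(n)}|$ up to $O(\varepsilon^2)$ (so each $u_k$ stays bounded), and $|r^{(n)}|$ decreases monotonically.

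For the first property, I would combine Proposition~\ref{proposition:initialstage} and $T_0\le 2\alpha$ with the upper bound on $\alpha$ in Assumption~\ref{assumption:epsilonalpha} to show $|r^{(n)}(T_0)|$ is still a definite fraction of $|y^{(n)}|$; since $u_k(T_0)\ge\alpha\ge\tfrac{9\varepsilon}{4|x^{(n)}_k y^{(n)}|}$ (Assumption~\ref{assumption:epsilonalpha}), this yields $\bigl|\nabla_{\bm u}L(\bm w(T_0))\bigr|_k>\varepsilon$ for every $k$. Assuming $\varepsilon>0$, the right-hand side of \eqref{eq:weightdynmics} is locally Lipschitz in $\bm w$ (as $g\mapsto g/(|g|+\varepsilon)$ is $\tfrac1\varepsilon$-Lipschitz and $\nabla_{\bm w}L$ is polynomial), so $t\mapsto\nabla_{\bm u}L(\bm w(t))$ is continuous and $T:=\inf\{t\ge T_0:\ \min_i\bigl|\nabla_{\bm u}L(\bm w(t))\bigr|_i\le\varepsilon\}$ satisfies $T>T_0$, with all coordinates exceeding $\varepsilon$ on $[T_0,T)$ and $\min_i\bigl|\nabla_{\bm u}L(\bm w(T))\bigr|_i=\varepsilon$ by continuity. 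Finiteness of $T$ follows from boundedness: were all coordinates above $\varepsilon$ forever, every $u_i$ would grow at rate $>\tfrac12$ and be unbounded, contradicting $|x^{(n)}_i|u_i^2\lesssim|y^{(n)}|$. (When $\varepsilon=0$ the claim is immediate: each block's residual then decreases at a rate bounded below by $\approx 2\alpha\sum_k|x^{(n)}_k|$, so it reaches zero in finite time and stays there.)

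The core of the second property is a differential inequality. From $\dot r^{(n)}=-2G_n r^{(n)}$, the lower bound $G_n\ge(x^{(n)}_k)^2 u_k^2/(\bigl|\nabla_{\bm u}L\bigr|_k+\varepsilon)$ (dropping the non-negative $w^-$ terms), and $u_k'=\bigl|\nabla_{\bm u}L\bigr|_k/(\bigl|\nabla_{\bm u}L\bigr|_k+\varepsilon)$, a short calculation gives
\begin{equation*}
\frac{d}{dt}\bigl(u_k|r^{(n)}|\bigr)\ \le\ u_k'\bigl(|r^{(n)}|-2|x^{(n)}_k|u_k^2\bigr),
\end{equation*}
while $\tfrac{d}{dt}\bigl(|r^{(n)}|-2|x^{(n)}_k|u_k^2\bigr)=-2G_n|r^{(n)}|-4|x^{(n)}_k|u_k u_k'\le 0$, so $|r^{(n)}|-2|x^{(n)}_k|u_k^2$ is non-increasing. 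Hence once $|r^{(n)}(t^*)|\le 2|x^{(n)}_k|u_k(t^*)^2$ the inequality persists, and from $t^*$ on $\bigl|\nabla_{\bm u}L\bigr|_k=|x^{(n)}_k|u_k|r^{(n)}|$ is non-increasing. It therefore suffices to show that the coordinate $j$ (say in block $m$) attaining the minimum at $T$ satisfies $|r^{(m)}(T)|\le 2|x^{(m)}_j|u_j(T)^2$: then for all $t\ge T$, $\min_i\bigl|\nabla_{\bm u}L\bigr|_i\le\bigl|\nabla_{\bm u}L\bigr|_j(t)\le\bigl|\nabla_{\bm u}L\bigr|_j(T)=\varepsilon$, which is exactly the convergence stage. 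Since $\bigl|\nabla_{\bm u}L\bigr|_j(T)=\varepsilon$ forces $|r^{(m)}(T)|=\varepsilon/(|x^{(m)}_j|u_j(T))$, the target is equivalent to the lower bound $u_j(T)^3\ge\varepsilon/(2|x^{(m)}_j|^2)$.

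I expect this lower bound on $u_j(T)$ to be the main obstacle: it says that by the time the first gradient coordinate reaches $\varepsilon$, the corresponding weight has already grown well past $\alpha$ (equivalently, the block's residual has dropped well below $|y^{(m)}|$), and the naive estimate $u_j(T)\ge\alpha$ is not enough. The argument must quantify the growth: $u_j$ grows at rate $>\tfrac12$ throughout $[T_0,T)$; by monotonicity of $u_j$ and the lower bound on $\bigl|\nabla_{\bm u}L(\bm w(T_0))\bigr|_j$ from step one, $|r^{(m)}|$ must have fallen by a controlled factor for the product $u_j|x^{(m)}_j||r^{(m)}|$ to have reached $\varepsilon$; and the no-overshoot identity $\sum_k|x^{(m)}_k|u_k^2=|y^{(m)}|-|r^{(m)}|$ ties $u_j(T)$ to $|r^{(m)}(T)|$. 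Matching these against the two-sided bounds on $\alpha$ and the smallness bound on $\varepsilon$ in Assumption~\ref{assumption:epsilonalpha} is the delicate quantitative bookkeeping at the heart of the proof; no further conceptual ingredient should be needed.
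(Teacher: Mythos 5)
Your first property (existence, non\hskip0pt-\hskip0pt degeneracy, and finiteness of the sign-descent stage) follows the paper's route and is correct, and the forward-invariance observation — that $|r^{(n)}|-2|x_k|u_k^2$ is non-increasing because both $|r^{(n)}|'$ and $-4|x_k|u_ku_k'$ are non-positive — is a genuinely clean idea. The gap is in how you try to use it. You lower-bound the residual decay rate $G_n$ by keeping only the single $k$-th coordinate's $u$-contribution, $G_n\geq x_k^2u_k^2/(|\nabla_{\bm u}L|_k+\varepsilon)$, which gives $\tfrac{d}{dt}(u_k|r^{(n)}|)\leq u_k'\bigl(|r^{(n)}|-2|x_k|u_k^2\bigr)$ and reduces the claim to verifying $u_j(T)^3\geq\varepsilon/(2|x_j|^2)$ at the minimizing coordinate $j$. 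That verification, which you flag as bookkeeping, does not close with the tools you name. By Lemma~\ref{lemma:xixjcomparison}, the first coordinate to reach $\varepsilon$ is precisely the one with the smallest $|x_j|$ in its block, so $|x_j|u_j^2$ is the \emph{smallest} term in $\sum_k|x_k|u_k^2$; the available lower bounds ($u_j(T)\geq\alpha$, $u_j(T)>\tfrac12u_k(T)$, $\sum_k|x_k|u_k(T)^2>\tfrac12|y|$) give only $u_j(T)^2>|y|/\bigl(8\sum_k|x_k|\bigr)$, and combined with $\alpha\geq\tfrac{9\varepsilon}{4|x_jy|}$ this produces $2|x_j|^2u_j(T)^3$ on the order of $\varepsilon\cdot|x_j|/\sum_k|x_k|$, which falls below $\varepsilon$ whenever $|x_j|$ is small relative to the other $|x_k|$ in its block or $D_n$ is large. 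So the per-coordinate invariant condition at $T$ is not guaranteed by Assumption~\ref{assumption:epsilonalpha}, and without it the proof that $f_j$ stays at or below $\varepsilon$ for $t\geq T$ does not go through.

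The paper's proof sidesteps this by not discarding the cross-coordinate terms. It keeps $|r|'\leq -2\sum_k|x_k|u_ku_k'$, lower-bounds $2u_i\sum_k|x_k|u_ku_k'$ by $u_i'\cdot\tfrac{\min_j|x_j|}{|x_i|}\sum_k|x_k|u_k(T_i)^2$ using Lemma~\ref{lemma:xixjcomparison} together with $u_i(T_i)>\tfrac12u_k(T_i)$, and then matches this against $|r(T_i)|<\tfrac12\cdot\tfrac{\min_j|x_j|}{|x_i|}\cdot|y|$ and $\sum_k|x_k|u_k^2(T_i)>\tfrac12|y|$, which cancel exactly to yield $f_i'\leq 0$ for $t\geq T_i$, with no residual slack depending on $|x_j|/\sum_k|x_k|$ and no need for a lower bound on $u_j(T)$ of order $\varepsilon^{1/3}$. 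The fix to your argument is the same: use the full $G_n\geq\sum_k x_k^2u_k^2/(|\nabla_{\bm u}L|_k+\varepsilon)$ (which recovers exactly the paper's $|r|'$ bound) and then carry out the coordinate comparison across the block rather than isolating the single coordinate $j$.
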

At the beginning, the dual dynamics resemble sign descent when gradient components are relatively large compared to $\varepsilon$. The stability constant comes into effect when $\left|\nabla_{\bm u} L(\bm w )\right|_i$ becomes small. In Proposition~\ref{proposition:mainstage}, we prove the transition between the two stages by studying the evolution of the magnitude of each gradient component. Importantly, Proposition~\ref{proposition:mainstage} shows that once a gradient value reaches $\varepsilon$, it remains small for the duration of the dynamics. The dynamics then enter a convergence stage with diminishing magnitude of gradients. Eventually, the dynamics approximate the direction of $\nabla_{\bm u} L(\bm w)$ as all gradient components approach zero (see Lemma~\ref{lemma:convergence} in Appendix~\ref{appendixA}). 

We illustrate the transition of the three stages in Figure~\ref{fig:threestages}. We randomly generate a dataset with $N=2$ and $D=5$ that satisfies Assumption~\ref{assumption:data} and set the initialization scale $\alpha=0.1$. We simulate the dynamics \eqref{eq:weightdynmics} of smoothed sign descent using the ODE solver in \texttt{SciPy} and visualize the evolution of primal and dual variables. In the experiments, $T_0$ is calculated as the value when $\max_i |\nabla_{\bm v}L(\bm w(t))|_i$ first becomes $\varepsilon$, while $T$ is calculated as the value when $\min_i |\nabla_{\bm u}L(\bm w(t))|_i$ first becomes $\varepsilon$. Based on smoothed sign descent (see \eqref{eq:dualdynamics}) and Proposition~\ref{proposition:mainstage}, we expect the change to be linear in $[T_0, T]$, and incoherent behavior in $[T, \infty)$.
In the initial stage when $t  < T_0$, we observe that the primal variable has linear change across all coordinates. During the sign descent stage when $T_0 \leq t < T$, the dual variable continues growing linearly with approximately uniform rate in all coordinates, while $\bm\beta(t)$ no longer changes linearly. After $T$, the dynamics enter the convergence stage, where the primal and dual variables gradually approach the convergent point. We also observe that the value of $\varepsilon$ plays a key role in shaping the dynamics. For smaller $\varepsilon$, the dual variable follows the sign descent more closely and converges to values concentrated around two distinct points across all coordinates; while for larger $\varepsilon$, the dual variable shows greater dispersion among the values across all coordinates at convergence. We quantify the relationship between the value of $\varepsilon$ and the convergent solution in the following analysis.

\begin{figure*}[ht]
    \centering
    \begin{subfigure}[b]{0.4\textwidth}
    \centering
    \includegraphics[width=\textwidth]{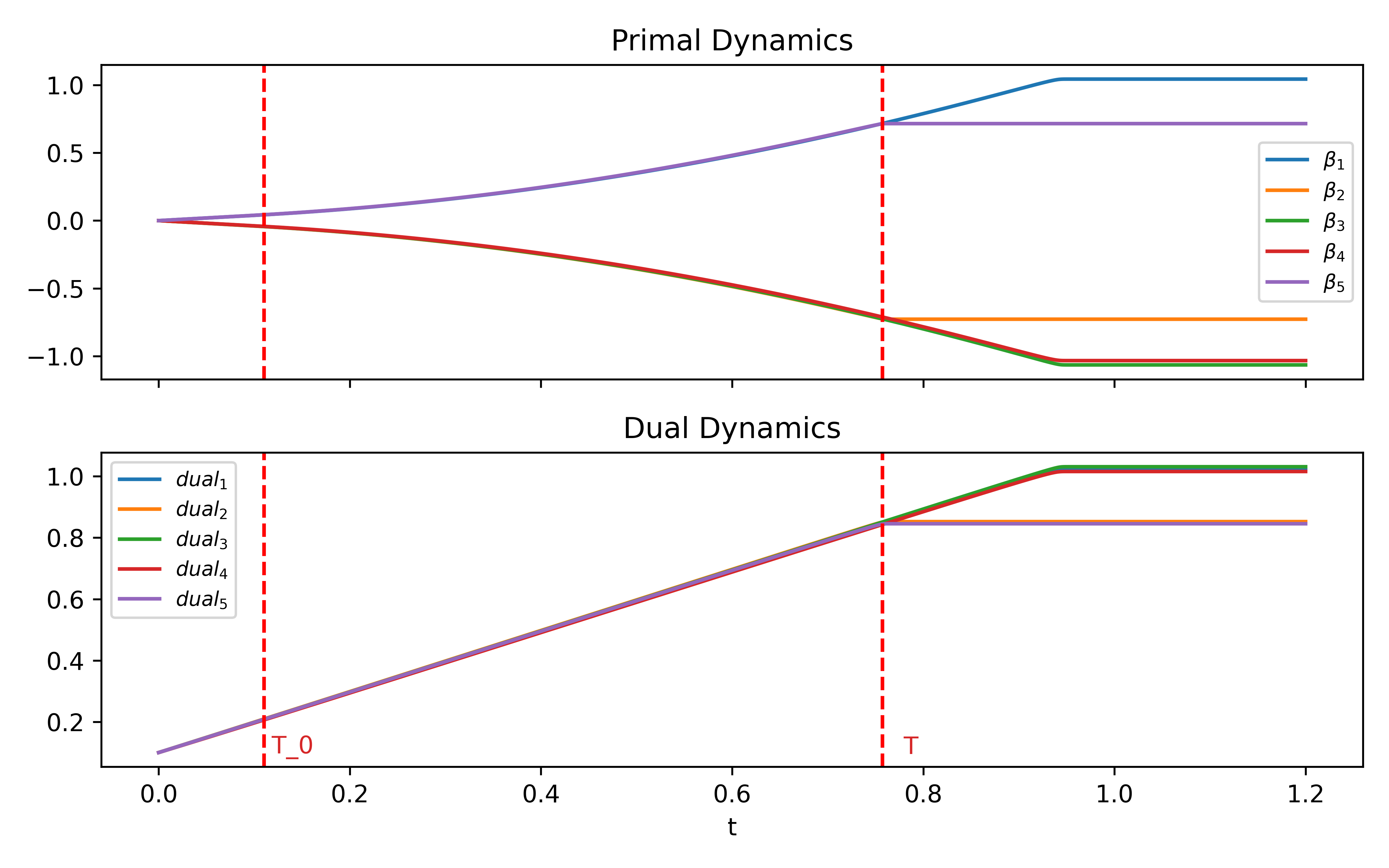}
    \caption{$\varepsilon = 0.001$}
    \end{subfigure}
    \begin{subfigure}[b]{0.4\textwidth}
    \centering
    \includegraphics[width=\textwidth]{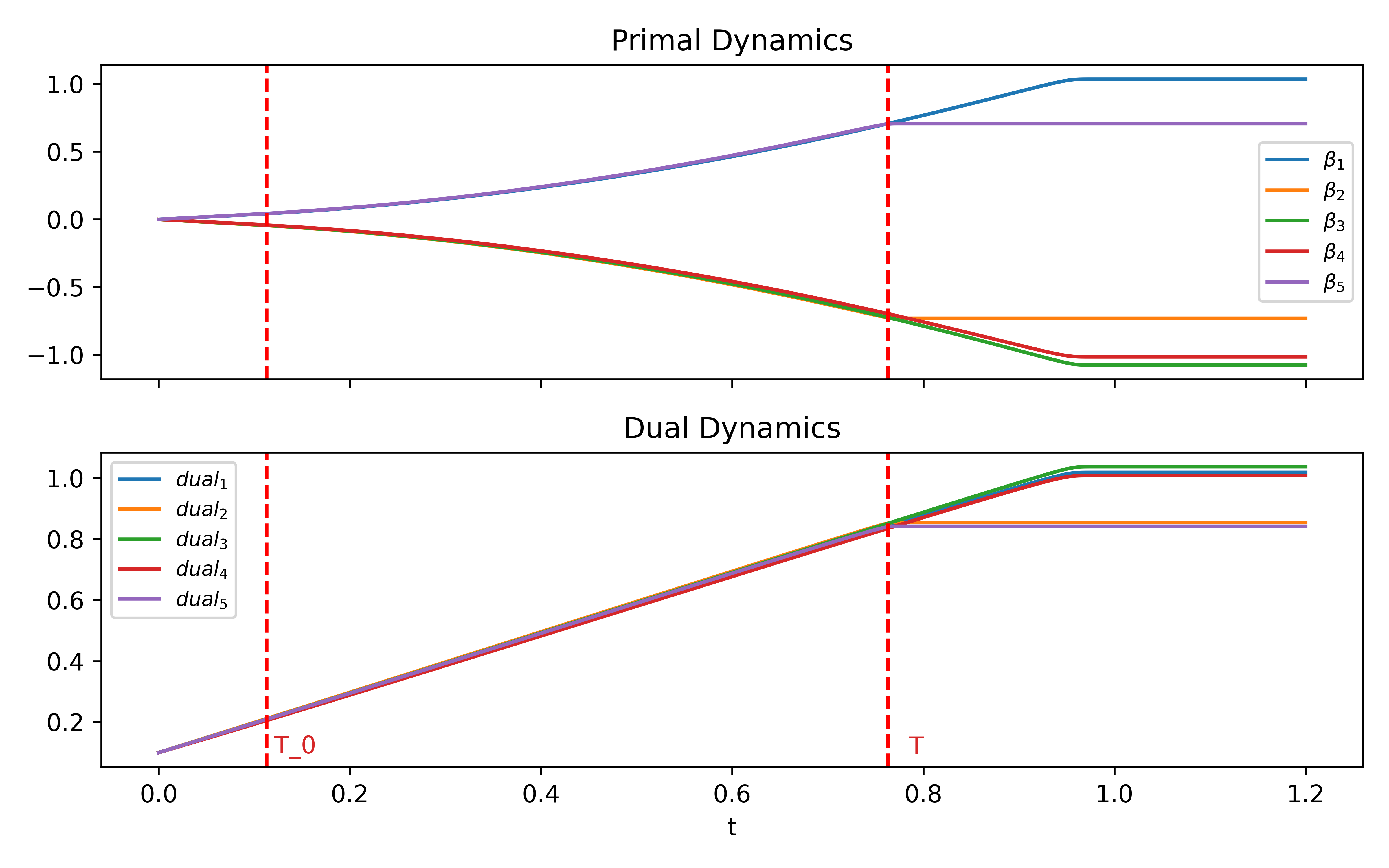}
    \caption{$\varepsilon = 0.002$}
    \end{subfigure}
    \vfill
     \begin{subfigure}[b]{0.4\textwidth}
    \centering
    \includegraphics[width=\textwidth]{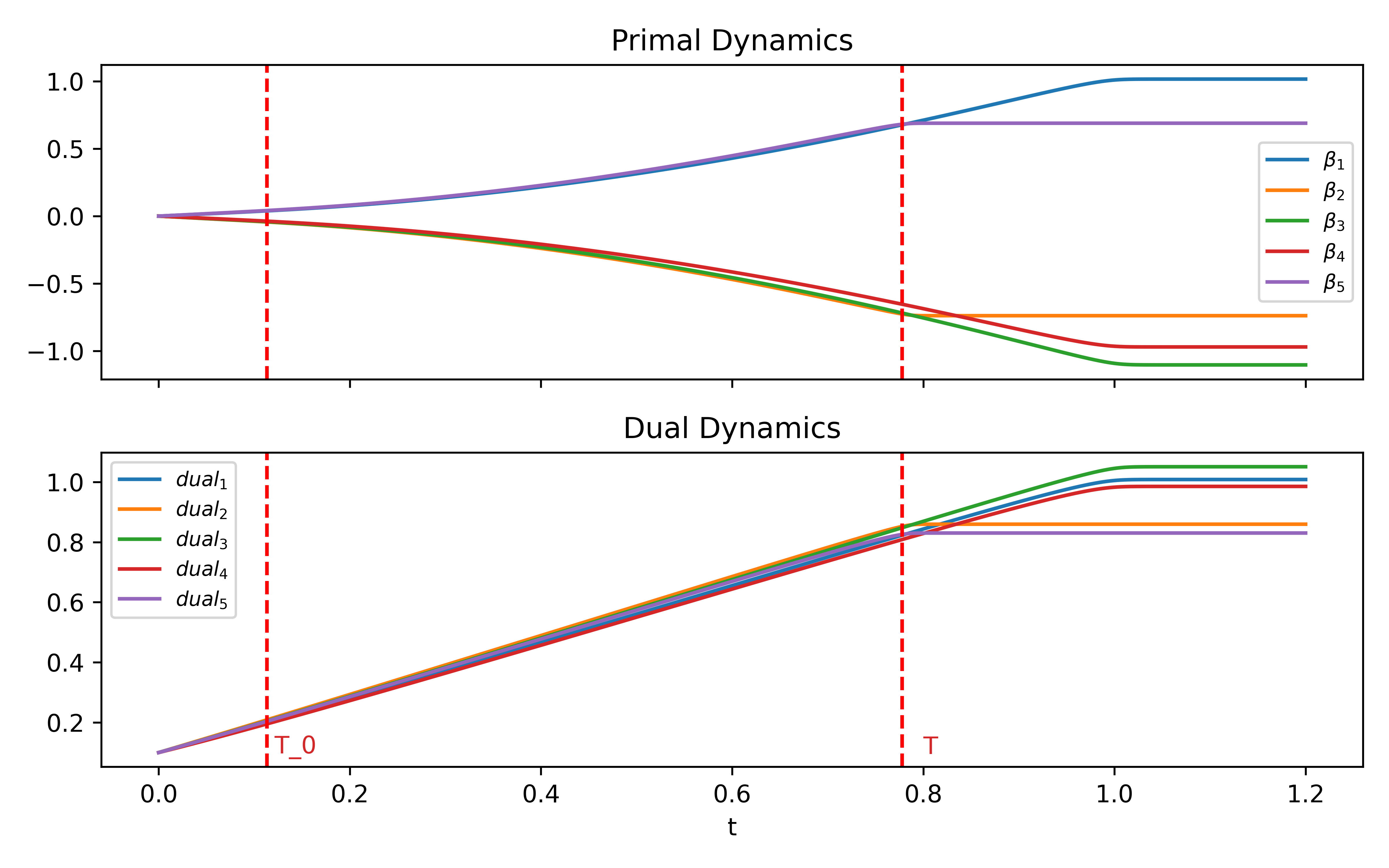}
    \caption{$\varepsilon = 0.005$}
    \end{subfigure}
    \begin{subfigure}[b]{0.4\textwidth}
    \centering
    \includegraphics[width=\textwidth]{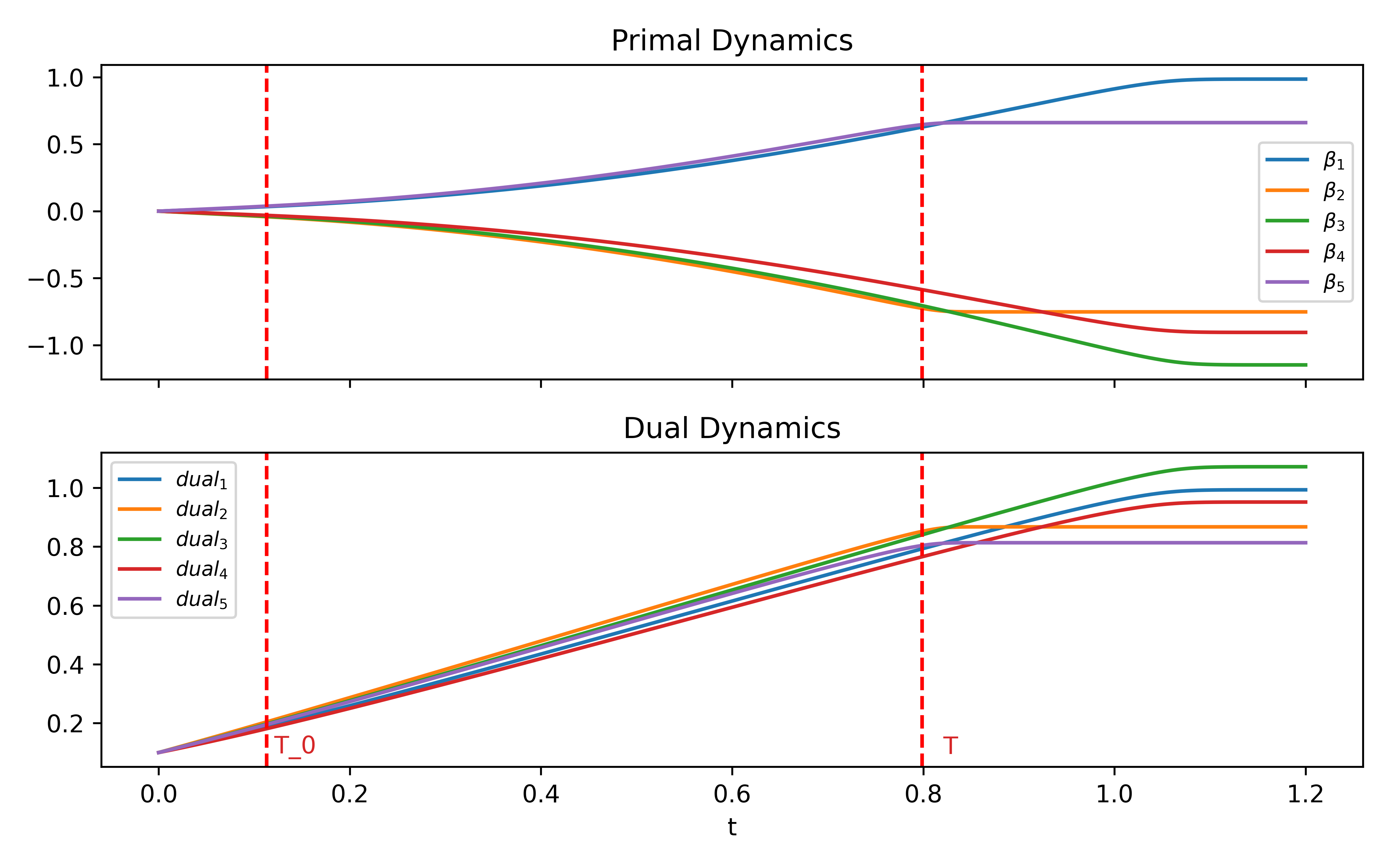}
    \caption{$\varepsilon = 0.01$}
    \end{subfigure}
    \caption{Evolution of primal variable $\bm\beta(t)$ and dual variable $\nabla \Phi_t (\bm\beta(t))$ in $\mathbb{R}^5$ of smoothed sign descent with different values of stability constant $\varepsilon$. The vertical line $t=T_0$ marks the transition from warm-up stage to the sign descent stage, and the line $t=T$ marks the transition to the convergence stage.}
    \label{fig:threestages}
\end{figure*}

\subsubsection{Characterization of Convergent Solution by Bregman Divergence}\label{section:characterization}
The convergent solution of smoothed descent dynamics deviates from the exact KKT point of Bregman divergence minimization. However, we can show that it satisfies the $\delta$-KKT conditions for a Bregman divergence style function. In this section, we build on the results about stage transitions and conduct an in-depth analysis to quantify and bound the error $\delta$. To emphasize the role of $\varepsilon$ in bounding the error, we impose an additional assumption on the block-diagonal structure from Assumption~\ref{assumption:data}.
\begin{assumption}\label{assumption:data2}
    We assume that each block $B^{(n)}$ of the block-diagonal matrix $X^\top X$ has size $D_n=2$.
\end{assumption}
The 2D block structure enables us to derive an explicit dependence of the bounds for $\delta$ on the stability constant $\varepsilon$, while keeping the overparameterization setting for smoothed sign descent. By the spectral theorem, we can write $X^\top X = Q \Lambda Q^\top$ for an orthogonal matrix $Q$ and a diagonal matrix $\Lambda$. The matrix $Q$ is block-diagonal, where each block is expressed as a 2D rotation matrix parameterized by $\theta_n$. Then, we have
\begin{align*}
    B^{(n)} = \begin{bmatrix}
        \cos\theta_n & -\sin\theta_n \\
        \sin\theta_n & \cos\theta_n 
    \end{bmatrix} \begin{bmatrix}
        \lambda_n & 0 \\
        0 & 0 
    \end{bmatrix}
    \begin{bmatrix}
        \cos\theta_n & \sin\theta_n \\
        -\sin\theta_n & \cos\theta_n 
    \end{bmatrix},
\end{align*}
where $\lambda_n > 0$ and $\cos\theta_n, \sin\theta_n$ are non-zero by Assumption~\ref{assumption:data}. Without loss of generality, we assume that $|\cos\theta_n| \geq |\sin\theta_n|$, which can be achieved by ordering the columns of $X$. We first present the result for $N=1$ to illustrate the key findings and then generalize the results to $N > 1$. To this end, we show in Appendix~\ref{appendixA} that there exists $\bm v^{\infty} = \lim_{t\to \infty}\bm v(t)$ and we let $\Phi_{\infty}(\bm\beta) = \frac{2}{3}\sum_{i=1}^D\left(|\beta_i| + (v^\infty_{i})^2\right)^{\frac{3}{2}}$.
\begin{theorem}\label{maintheorem}
        As $t \to \infty$, the regression parameter converges to an interpolating solution. We let $\bm\beta^{\infty} := \lim_{t \to \infty}{\bm\beta(t)}$, which exists by Lemma~\ref{lemma:convergence} in Appendix~\ref{appendixA}, and let $\bm\beta_0 := \bm\beta(0)$. We define a Bregman divergence style function $E$ associated with the potential function $\Phi$ for smoothed sign descent by
        \begin{equation*}
            E(\bm\beta, \bar{\bm\beta}) := \Phi_{\infty}(\bm\beta) - \Phi_0(\bar{\bm\beta}) + \langle \nabla\Phi_{0}(\bar{\bm\beta}), \bar{\bm\beta} - \bm\beta \rangle.
        \end{equation*}
        The convergent solution $\bm\beta^\infty$ satisfies the $\delta$-KKT conditions for the constrained optimization problem: \begin{equation}
            \underset{{\bm\beta \in \mathbb{R}^D \text{ s.t. } X\bm\beta = \bm y}}{\min} E(\bm\beta, \bm\beta_0)\label{eq:Bregmanminimization}
        \end{equation}
        with the error $\delta(\varepsilon)$ bounded by $\max\left\{|M_+|, |M_-|\right\}$, where
        \begin{align*}
            M_+ := &\left( |\cos\theta_1| - |\sin\theta_1| \right) \left(\lambda_1^{-\frac{1}{4}}|y^{(1)}|^\frac{1}{2} - \frac{\sqrt{2}\varepsilon}{4\lambda_1^\frac{1}{2}|y^{(1)}|}\right), \\
            M_- := &\left(|\cos\theta_1|-|\sin\theta_1|\right)\left(\left(2\lambda_1\right)^{-\frac{1}{4}}|y^{(1)}|^{\frac{1}{2}}-\alpha\right) \\ &-2\sqrt{\frac{2\varepsilon}{\lambda_1^{\frac{3}{4}}|\sin\theta_1||y^{(1)}|^{\frac{1}{2}}}} \\
            &- \frac{3\sqrt{2}\varepsilon}{\lambda_1^{\frac{1}{2}}|\sin\theta_1 y^{(1)}|}\ln \left(\frac{\lambda_1^{\frac{1}{4}}|\sin\theta_1||y^{(1)}|^{\frac{3}{2}}}{\sqrt{2}\varepsilon}\right).
        \end{align*}
\end{theorem}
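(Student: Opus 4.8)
The plan is to integrate the dual dynamics of Proposition~\ref{proposition:dualdynamics} over all time and read off the KKT residual. By Lemma~\ref{lemma:convergence} the limits $\bm\beta^\infty$ and $\bm v^\infty$ exist, so integrating \eqref{eq:dualdynamics} from $0$ to $\infty$ gives
\begin{equation*}
    \nabla\Phi_\infty(\bm\beta^\infty) - \nabla\Phi_0(\bm\beta_0) = -\int_0^\infty \operatorname{sgn}(\bm\beta(t))\odot\frac{\nabla_{\bm u}L(\bm w(t))}{|\nabla_{\bm u}L(\bm w(t))| + \varepsilon\bm 1}\,dt,
\end{equation*}
where $\nabla\Phi_0(\bm\beta_0)$ is read as $\lim_{t\to0^+}\nabla\Phi_t(\bm\beta(t))$, whose $i$-th entry is $\operatorname{sgn}(\beta_i^\infty)\alpha$ since each coordinate's sign is fixed from the outset by Proposition~\ref{proposition:signandmonotone}. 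Because $\nabla_{\bm\beta}E(\bm\beta,\bm\beta_0) = \nabla\Phi_\infty(\bm\beta) - \nabla\Phi_0(\bm\beta_0)$, the stationarity part of the $\delta$-KKT conditions for \eqref{eq:Bregmanminimization} asks precisely that the vector above lie within distance $\delta$ of $\operatorname{span}\{\bm x^{(1)},\dots,\bm x^{(N)}\}$; primal feasibility $X\bm\beta^\infty = \bm y$ holds because Lemma~\ref{lemma:convergence} forces $\nabla_{\bm u}L(\bm w(t))\to\bm 0$ while $u_i(t)\ge\alpha>0$, so each $r^{(n)}(t)\to0$. Under Assumption~\ref{assumption:data} the block structure lets me treat each block $n$ separately, and using the identity $\operatorname{sgn}(\beta_i)[\nabla_{\bm u}L(\bm w)]_i = u_i x_i^{(n)} r^{(n)}$ the $i$-th integrand, for $i$ in block $n$, becomes $-\operatorname{sgn}(x_i^{(n)}r^{(n)}(t))\,g_i(t)/(g_i(t)+\varepsilon)$ with $g_i(t) := u_i(t)|x_i^{(n)}||r^{(n)}(t)| = |[\nabla_{\bm u}L(\bm w(t))]_i|$.

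Next I would split the integral along the three stages and bound the deviation from the row space contributed by each. On the warm-up interval $[0,T_0]$ the integrand is bounded by $1$ and $T_0\le2\alpha$ by Proposition~\ref{proposition:initialstage}, so this piece has size $O(\alpha)$; tracking its approximately-sign-descent direction (Proposition~\ref{proposition:signandmonotone}) and combining with the $\operatorname{sgn}(\beta_i^\infty)\alpha$ boundary term produces the "$-\alpha$" appearing in $M_-$. On the sign-descent interval $[T_0,T)$ Proposition~\ref{proposition:mainstage} gives $g_i(t)>\varepsilon$, so $g_i/(g_i+\varepsilon) = 1-\varepsilon/(g_i+\varepsilon)$; once $\operatorname{sgn}(r^{(n)})$ is shown not to flip there, the constant part integrates to $-(T-T_0)\operatorname{sgn}(x_i^{(n)}r^{(n)})$, a multiple of $\operatorname{sgn}(\bm x^{(n)})$, whose mismatch with the row space — $\operatorname{sgn}(\bm x^{(n)})$ is not proportional to $\bm x^{(n)}$ unless $|x_1^{(n)}|=|x_2^{(n)}|$ — is the source of the leading $|\cos\theta_1|-|\sin\theta_1|$ factor after invoking Assumption~\ref{assumption:data2} to write $|x_1^{(1)}|=\sqrt{\lambda_1}|\cos\theta_1|$, $|x_2^{(1)}|=\sqrt{\lambda_1}|\sin\theta_1|$. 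The correction $\int_{T_0}^T\varepsilon/(g_i+\varepsilon)\,dt$ is controlled by lower-bounding $g_i$ via the growth of $u_i$ and the decay of $|r^{(n)}|$ on this stage, giving the $\varepsilon\ln(\cdot)$ term of $M_-$. On the convergence interval $[T,\infty)$ I would bound $\int_T^\infty g_i/(g_i+\varepsilon)\,dt$ using the diminishing-gradient behavior (Proposition~\ref{proposition:mainstage}, Lemma~\ref{lemma:convergence}) and the fact that $\bm\beta$ moves little once a gradient coordinate reaches $\varepsilon$, producing the $\sqrt{2\varepsilon/(\lambda_1^{3/4}|\sin\theta_1||y^{(1)}|^{1/2})}$-type term.

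Assembling the pieces, the block-$1$ part of the KKT residual has $i$-th entry $\operatorname{sgn}(\beta_i^\infty)\bigl(\sqrt{|\beta_i^\infty|+(v_i^\infty)^2}-\alpha\bigr)$ plus the sign-descent and convergence corrections above, with $v_i^\infty=O(\varepsilon)$ by Proposition~\ref{proposition:initialstage}. The quantity $\sqrt{|\beta_i^\infty|}$ equals the total dual displacement $\int_0^\infty g_i/(g_i+\varepsilon)\,dt$, which is nearly the same across the block for small $\varepsilon$ and which the constraint $X\bm\beta^\infty=\bm y$ forces into the interval $[(2\lambda_1)^{-1/4}|y^{(1)}|^{1/2},\,\lambda_1^{-1/4}|y^{(1)}|^{1/2}]$ because $|\cos\theta_1|+|\sin\theta_1|\in[1,\sqrt2]$. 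Since $\operatorname{sgn}(\bm\beta^\infty)$ is, up to the global sign $\operatorname{sgn}(y^{(1)})$, the sign pattern of $\bm x^{(1)}$, the residual is approximately a scalar multiple of $\operatorname{sgn}(\bm x^{(1)})$ and its component orthogonal to $\bm x^{(1)}$ has size equal to that scalar times $|\cos\theta_1|-|\sin\theta_1|$; evaluating the scalar at the two ends of the displacement interval and carrying along the $\alpha$-, $\sqrt\varepsilon$- and $\varepsilon\ln$-corrections — the side on which the dominating weight is $\bm w^+$ giving $M_+$ and the side on which it is $\bm w^-$ giving $M_-$ — yields $\delta(\varepsilon)\le\max\{|M_+|,|M_-|\}$. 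For $N>1$ the block-diagonal structure decouples the dynamics, so the same estimate holds block by block and one takes the maximum over $n$.

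I expect the main obstacle to be the convergence stage. After time $T$ the dual dynamics no longer resemble sign descent, the residual $r^{(n)}$ can a priori change sign, and $u_i$ keeps slowly increasing, so getting the sharp two-sided control of $g_i(t)=u_i(t)|x_i^{(n)}||r^{(n)}(t)|$ needed to bound $\int_T^\infty g_i/(g_i+\varepsilon)\,dt$ — and to confirm that $\bm\beta$ actually converges to an exact interpolating solution rather than stalling — is the delicate part. A secondary difficulty is showing that $\operatorname{sgn}(r^{(n)})$ is eventually constant (or that any sign changes cost negligibly), so that the sign-descent-stage integral telescopes cleanly, and that the within-block spread of the displacements $\int_0^\infty g_i/(g_i+\varepsilon)\,dt$ is small enough (order $\varepsilon$) to be absorbed into the stated error.
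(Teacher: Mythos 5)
Your opening moves match the paper: integrate the dual dynamics from Proposition~\ref{proposition:dualdynamics}, identify $\nabla_{\bm\beta}E(\bm\beta^\infty,\bm\beta_0) = \nabla\Phi_\infty(\bm\beta^\infty) - \nabla\Phi_0(\bm\beta_0)$ with the integrated dual displacement, and measure the KKT error as the distance of this vector from $\operatorname{span}\{\bm x^{(1)},\dots,\bm x^{(N)}\}$ via orthogonal projection. But from there the plan diverges from what actually works, and some claims are incorrect.

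The central simplification you miss is that the total integral is already exact and needs no decomposition: because the $i$-th integrand equals $\operatorname{sgn}(\beta_i)\,u_i'(t)$ (shown in the proof of Proposition~\ref{proposition:dualdynamics}), integrating from $0$ to $\infty$ gives $\operatorname{sgn}(\bm\beta)\odot(\bm u^\infty - \bm u(0))$ directly. After the orthogonal projection this collapses to $\delta = |\Delta|$ with $\Delta = |\cos\theta_1|(u_2^\infty - u_2(0)) - |\sin\theta_1|(u_1^\infty - u_1(0))$, and the entire remaining work is to bound this single scalar two-sidedly. The paper does this not by attributing integral pieces to stages, but by (i)~proving the derivative-ratio inequalities $u_1'(t)\ge u_2'(t)$ for $t<T$ and $u_1'(t)\ge\frac{2|\cot\theta_1|}{1+|\cot\theta_1|}u_2'(t)$ for $t\ge T$ (Lemma~\ref{lemma:ratio}), (ii)~sandwiching the stage-transition value $u_2(T)$ between explicit cubic-comparison bounds, and (iii)~invoking the convergence constraint $|\tilde y| = |\cos\theta_1|((u_1^\infty)^2 - (v_1^\infty)^2) + |\sin\theta_1|((u_2^\infty)^2 - (v_2^\infty)^2)$ to trade $u_1^\infty$ against $u_2^\infty$. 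Your term-by-term attribution is off: the ``$-\alpha$'' in $M_-$ comes from $u_2(0)=\alpha$ in the displacement, not from bounding the warm-up integral; the $\sqrt{\varepsilon}$ term arises from lower-bounding $\alpha+T'$ (where $T'$ is the time the comparison function $f_-$ hits $\varepsilon$), not from the convergence-stage tail; the $\varepsilon\ln(\cdot)$ term comes from the explicit integral $J$ over $[0,T']$, not $[T_0,T)$. Your interpretation that $M_+$ and $M_-$ correspond to ``the side on which the dominating weight is $\bm w^+$ vs.\ $\bm w^-$'' is wrong --- they are simply the upper and lower bounds on the single quantity $\Delta$.

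Two of your stated obstacles are not actually obstacles, and your $N>1$ claim is wrong. Lemma~\ref{lemma:residualmonotone} already proves $r^{(n)}(t)$ never changes sign, so there is no sign-flip issue on the convergence stage. The convergence-stage integral never needs to be bounded on its own because the total integral telescopes to $u_i^\infty - u_i(0)$; what you need instead is the ratio control from Lemma~\ref{lemma:ratio}. And for $N>1$ the bound in Corollary~\ref{corollary:NDcorollary} is the \emph{sum} over blocks, $\sum_n\max\{|M^{(n)}_+|,|M^{(n)}_-|\}$, not the maximum --- the triangle inequality for the Euclidean norm on the concatenated KKT residual gives a sum of per-block deviations.
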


We present the main idea of the proof here and provide the full proof in Appendix~\ref{appendixB}. First, we observe the connection between the gradient of $E$ and the integral of the dual dynamics \eqref{eq:dualdynamics} with respect to $t$. 
The dual dynamics structure enables us to calculate the deviation $\delta$ from satisfying the stationary condition using the dominating weights $\bm u^\infty$. Next, using an orthogonal projection, we reduce the problem to bounding the absolute value of $\Delta := |\cos\theta_1|\left(u_2^{\infty}-u_2(0)\right) - |\sin\theta_1|\left(u_1^{\infty}-u_1(0)\right)$. Then, to bound $\Delta$, we leverage the ratios between $u'_1(t)$ and $u'_2(t)$ in different stages of the dual dynamics, and focus on bounding the key quantity $u_2(T)$ at the transition between the two stages. During the sign descent stage, the leading terms of $u'_1(t)$ and $u'_2(t)$ are both $1$ in the Taylor expansion at $\varepsilon=0$, which guarantees a lower bound for $u_2(T)$. Being in the convergence stage, $u_1(t)$ dominates the growth, which allows us to derive an upper bound for $u_2^\infty$. Finally, a lower bound for $u_2^{\infty}$ leads to $\Delta \geq M_-$, while an upper bound leads to $\Delta \leq M_+$. 

The derivation relies on the key quantity of $u_2(T)$ at the stage transition when the smallest gradient component reaches $\varepsilon$. The value of $\varepsilon$ is crucial in determining the stage transition and it eventually affects the convergent solution. We further reveal the relationship between $\varepsilon$ and the upper bound of $\delta$ in the following corollary. We provide the proof in Appendix~\ref{appendixB}.
\begin{corollary}\label{corollary:epsilon}
    We let $\mathcal{I}_\varepsilon$ be the range of $\varepsilon$ implied by Assumption~\ref{assumption:epsilonalpha}. Then, there exists a non-degenerate interval $\mathcal{I}' \subseteq \mathcal{I}_\varepsilon$ such that for all $\varepsilon \in \mathcal{I}'$, \begin{equation*}
        \delta(\varepsilon) \leq \bar{M} - \left( |\cos\theta_1| - |\sin\theta_1| \right)\frac{\sqrt{2}\varepsilon}{4\lambda_1^\frac{1}{2}|y^{(1)}|},
    \end{equation*}
    where $\bar{M} := \left( |\cos\theta_1| - |\sin\theta_1| \right)\lambda_1^{-\frac{1}{4}}|y^{(1)}|^\frac{1}{2}$ is a quantity independent of $\varepsilon$.
\end{corollary}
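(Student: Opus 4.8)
The first move is to observe that the bound claimed in the corollary is nothing but $M_+$ rewritten. Expanding $\bar M$,
\[
  \bar M - \bigl(|\cos\theta_1| - |\sin\theta_1|\bigr)\frac{\sqrt2\,\varepsilon}{4\lambda_1^{1/2}|y^{(1)}|}
  = \bigl(|\cos\theta_1| - |\sin\theta_1|\bigr)\Bigl(\lambda_1^{-1/4}|y^{(1)}|^{1/2} - \frac{\sqrt2\,\varepsilon}{4\lambda_1^{1/2}|y^{(1)}|}\Bigr) = M_+ .
\]
Since Theorem~\ref{maintheorem} gives $\delta(\varepsilon)\le\max\{|M_+|,|M_-|\}$, it therefore suffices to exhibit a non-degenerate interval $\mathcal{I}'\subseteq\mathcal{I}_\varepsilon$ on which $\max\{|M_+|,|M_-|\}=M_+$, equivalently on which $M_+\ge 0$ and $-M_+\le M_-\le M_+$. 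If $|\cos\theta_1|=|\sin\theta_1|$ then $M_+=M_-=\bar M=0$ and the statement is vacuous ($\delta\le 0$ forces $\delta=0$); so assume $|\cos\theta_1|>|\sin\theta_1|$ and set $c_1:=|\cos\theta_1|-|\sin\theta_1|>0$, a factor common to $M_\pm$ and $\bar M$.

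The plan is to verify the required inequalities at $\varepsilon=0$ and propagate them by continuity. Note $M_+$ is affine in $\varepsilon$, and beyond its $\varepsilon=0$ value $M_-$ contains only a $\sqrt\varepsilon$ term and an $\varepsilon\ln(1/\varepsilon)$ term, both of which vanish as $\varepsilon\to0^+$, so $M_-$ extends continuously to $\varepsilon=0$. There $M_+(0)=c_1\lambda_1^{-1/4}|y^{(1)}|^{1/2}$ and $M_-(0)=c_1\bigl((2\lambda_1)^{-1/4}|y^{(1)}|^{1/2}-\alpha\bigr)$, hence
\[
  M_+(0)-M_-(0)=c_1\bigl((1-2^{-1/4})\lambda_1^{-1/4}|y^{(1)}|^{1/2}+\alpha\bigr),\qquad
  M_+(0)+M_-(0)=c_1\bigl((1+2^{-1/4})\lambda_1^{-1/4}|y^{(1)}|^{1/2}-\alpha\bigr).
\]
The first quantity is manifestly positive. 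For the second one I must check $\alpha<(1+2^{-1/4})\lambda_1^{-1/4}|y^{(1)}|^{1/2}$, and this is where Assumption~\ref{assumption:epsilonalpha} enters: comparing $X^\top X=(\bm{x}^{(1)})^\top\bm{x}^{(1)}$ with its spectral form $\lambda_1(\cos\theta_1,\sin\theta_1)^\top(\cos\theta_1,\sin\theta_1)$ gives $|x^{(1)}_1|=\sqrt{\lambda_1}|\cos\theta_1|$ and $|x^{(1)}_2|=\sqrt{\lambda_1}|\sin\theta_1|$, so $\sum_k|x^{(1)}_k|=\sqrt{\lambda_1}\,(|\cos\theta_1|+|\sin\theta_1|)\ge\sqrt{\lambda_1}$; the upper bound on $\alpha$ in Assumption~\ref{assumption:epsilonalpha} then yields $\alpha\le\tfrac13\sqrt{|y^{(1)}|/(2\sum_k|x^{(1)}_k|)}\le\tfrac{1}{3\sqrt2}\lambda_1^{-1/4}|y^{(1)}|^{1/2}<(1+2^{-1/4})\lambda_1^{-1/4}|y^{(1)}|^{1/2}$.

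Thus $M_+-M_-$ and $M_++M_-$ are continuous in $\varepsilon$ and strictly positive at $\varepsilon=0$, so there is $\varepsilon_0>0$ with both positive on $[0,\varepsilon_0)$; adding the two inequalities also gives $M_+>0$ there. Set $\mathcal{I}':=[0,\varepsilon_0)\cap\mathcal{I}_\varepsilon$, which is non-degenerate because $\mathcal{I}_\varepsilon$ is an interval containing $0$. On $\mathcal{I}'$ one has $|M_-|\le M_+=|M_+|$, hence $\max\{|M_+|,|M_-|\}=M_+$, and Theorem~\ref{maintheorem} gives $\delta(\varepsilon)\le M_+=\bar M-c_1\frac{\sqrt2\,\varepsilon}{4\lambda_1^{1/2}|y^{(1)}|}$, as claimed. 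The only step beyond bookkeeping is the second sign check $M_+(0)+M_-(0)>0$, i.e. translating Assumption~\ref{assumption:epsilonalpha} into spectral coordinates and invoking $|\cos\theta_1|+|\sin\theta_1|\ge1$; everything else is the expansion identity above plus a routine continuity argument, and one need only confirm that the $\sqrt\varepsilon$ term in $M_-$ (whose derivative blows up at $0$) cannot instantly close the strict gaps at $\varepsilon=0$, which it cannot since $\sqrt\varepsilon\to0$.
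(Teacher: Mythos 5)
Your argument is correct in essence, but it takes a genuinely different route from the paper. The paper's proof does not work through $\max\{|M_+|,|M_-|\}$ directly; instead it goes back to the finer conclusion inside Theorem~\ref{maintheorem}'s proof, namely $\delta=|\Delta|$ with $M_-\le\Delta\le M_+$. It then invokes Lemma~\ref{lemma:M-_decreasing}, which shows $M_-(0)>0$ and $M_-(\varepsilon)$ strictly decreasing on $\mathcal{I}_\varepsilon$, to find the subinterval where $M_-(\varepsilon)\ge 0$; on that interval $\Delta\ge 0$, so $\delta=\Delta\le M_+$ with no need to compare $|M_-|$ and $|M_+|$. You instead use only the stated theorem bound $\delta\le\max\{|M_+|,|M_-|\}$, which forces you to verify both $M_+-M_-\ge 0$ and $M_++M_-\ge 0$ and then invoke continuity at $\varepsilon=0$; this is a valid, slightly more self-contained argument (it never reopens the proof of Theorem~\ref{maintheorem}) at the cost of an extra sign check. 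Your translation $|x^{(1)}_k|=\sqrt{\lambda_1}|\cos\theta_1|$, $\sqrt{\lambda_1}|\sin\theta_1|$ and the bound $\alpha\le\tfrac{1}{3\sqrt2}\lambda_1^{-1/4}|y^{(1)}|^{1/2}$ are correct and do the same work as the paper's appeal to Assumption~\ref{assumption:epsilonalpha} inside Lemma~\ref{lemma:M-_decreasing}, and the continuity argument is sound because the $\sqrt\varepsilon$ and $\varepsilon\ln(1/\varepsilon)$ pieces of $M_-$ vanish at $0$ even though the derivative does not stay bounded.

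One small slip: when you dispose of the case $|\cos\theta_1|=|\sin\theta_1|$ you assert $M_+=M_-=\bar M=0$, but only $M_+$ and $\bar M$ have the overall factor $(|\cos\theta_1|-|\sin\theta_1|)$; the last two terms of $M_-$ do not, so $M_-(\varepsilon)<0$ for $\varepsilon>0$ in that case, and the theorem bound gives $\delta\le|M_-|>0$ rather than $\delta\le 0$. Thus the corollary's claim ($\delta\le 0$) is not actually established there by either your argument or the paper's (the paper's Lemma~\ref{lemma:M-_decreasing} also silently assumes $|\cos\theta_1|>|\sin\theta_1|$ when it asserts $M_-(0)>0$). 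It would be cleaner to state up front that you work under $|\cos\theta_1|>|\sin\theta_1|$, which is the only regime either proof actually covers, and not claim the remaining case is vacuous.
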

The result highlights the role of $\varepsilon$ in bounding the KKT error. Given a fixed dataset, choosing a larger $\varepsilon$ within a certain interval effectively shrinks the upper bound on the KKT error $\delta$. It suggests that by using a proper value of $\varepsilon$, the dynamics can converge to a solution closer to the point with the $E$ minimization property. Therefore, our result provides a theoretical ground for the benefit of tuning $\varepsilon$ versus using a small default value for adaptive gradient methods.

To visualize the convergent solutions for different values of $\varepsilon$, we plot the trajectory of $\bm\beta(t)$ using randomly generated data with $N=1$ and $D=2$ in Figure~\ref{fig:comparison-1}. We note that as $\varepsilon$ becomes larger, the convergent solution is closer to the solution with the minimal value of $E$ to the initial point among all solutions. We also compute the value of $E$ to the initial point for convergent solutions using different $\varepsilon$ and plot the trend in Figure~\ref{fig:comparison-2}. The plot confirms that for larger $\varepsilon$, the convergent solutions have smaller values of $E(\bm\beta^\infty, \bm\beta_0)$.

\begin{figure}[ht]
    \centering
    \includegraphics[width=0.8\linewidth]{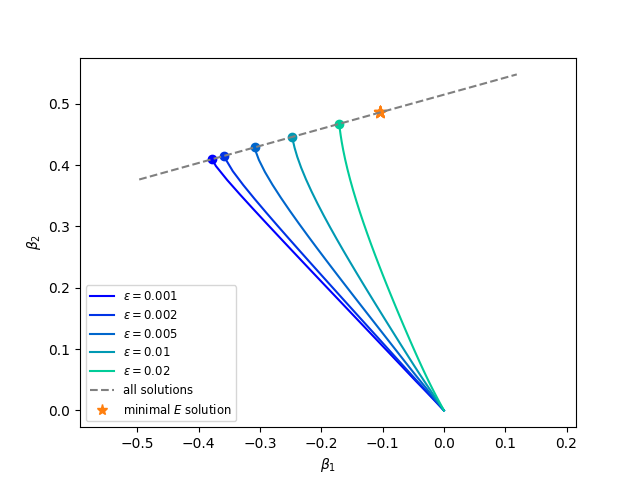}
    \caption{Trajectories of $\bm\beta(t)$ in $\mathbb{R}^2$ for different values of stability constant $\varepsilon$.}
    \label{fig:comparison-1}
\end{figure}

\begin{figure}[ht]
    \centering
    \includegraphics[width=0.75\linewidth]{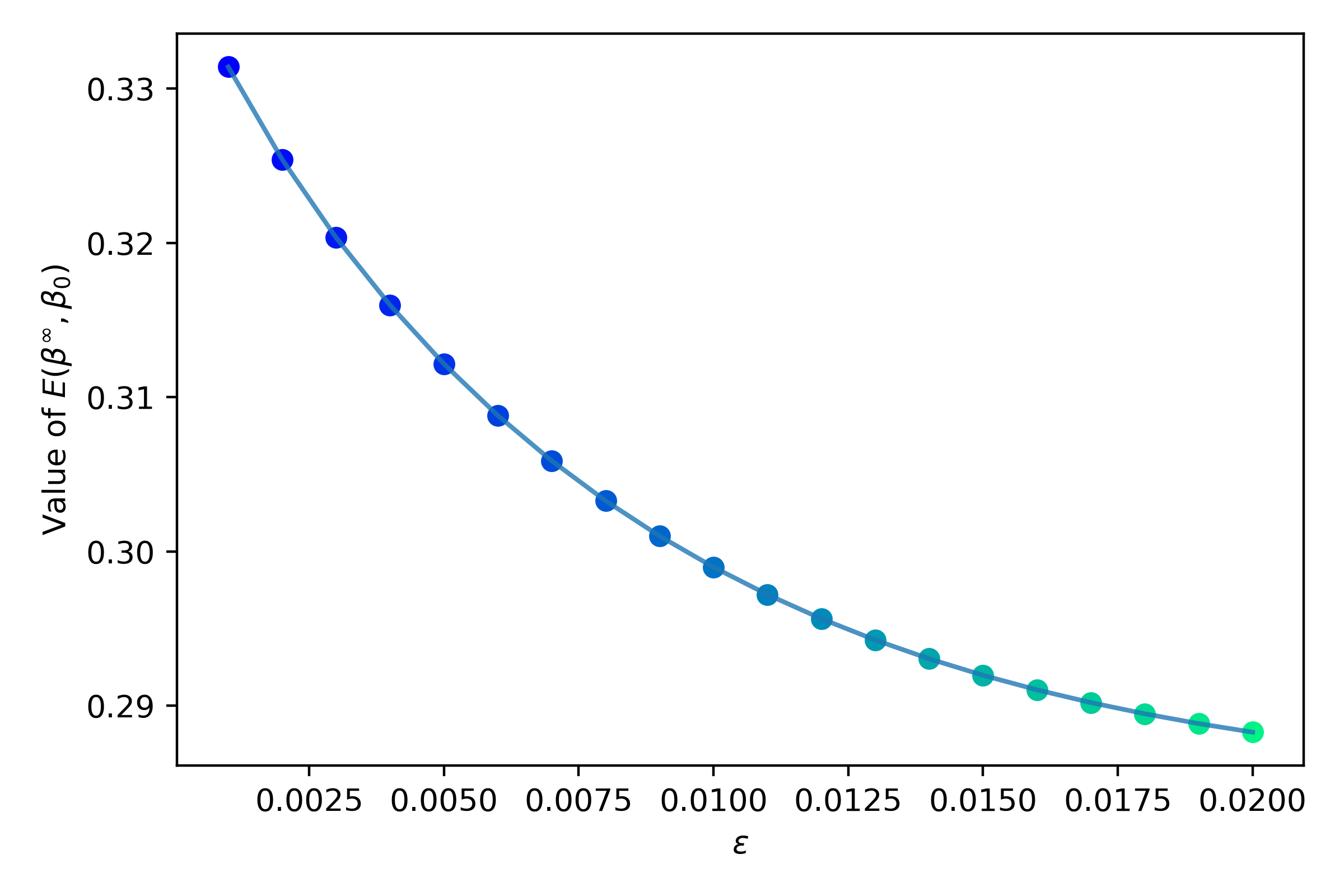}
    \caption{Bregman divergence style function value $E(\bm\beta^\infty, \bm\beta_0)$ of convergent solutions with different values of stability constant $\varepsilon$.}
    \label{fig:comparison-2}
\end{figure}

\paragraph{Extension to $N > 1$} We generalize the results to the case when $N > 1$ in the following corollaries. The proofs can be found in Appendix~\ref{appendixB}. We show that the convergent solution satisfies approximate KKT conditions of minimizing $E(\bm\beta, \bm\beta_0)$ among all solutions. Within a certain interval, a larger value of $\varepsilon$ leads to a greater reduction of the KKT error. The implications for tuning the stability constant $\varepsilon$ still hold.
\begin{corollary}\label{corollary:NDcorollary}
    For $N > 1$, let us suppose Assumption~\ref{assumption:data2} is satisfied. Then, as $t \to \infty$, the regression parameter converges to an interpolating solution $\bm\beta^\infty$ that satisfies the $\bar{\delta}$-KKT conditions for \begin{equation*}
            \underset{{\bm\beta \in \mathbb{R}^D \text{ s.t. } X\bm\beta = \bm y}}{\min} E(\bm\beta, \bm\beta_{0})
        \end{equation*}
    with the error $\bar{\delta}(\varepsilon)$ bounded by $ \sum_{n=1}^N \max\left\{\Big|M^{(n)}_+\Big|,~\Big|M_-^{(n)}\Big|\right\}$, where
    \begin{align*}
        M^{(n)}_+ := &\left( |\cos\theta_n| - |\sin\theta_n| \right) \left(\lambda_n^{-\frac{1}{4}}|y^{(n)}|^\frac{1}{2} - \frac{\sqrt{2}\varepsilon}{4\lambda_n^\frac{1}{2}|y^{(n)}|}\right),\\
        M^{(n)}_- := &\left(|\cos\theta_n|-|\sin\theta_n|\right)\left(\left(2\lambda_n\right)^{-\frac{1}{4}}|y^{(n)}|^{\frac{1}{2}}-\alpha\right) \\
        &- 2\sqrt{\frac{2\varepsilon}{\lambda_n^{\frac{3}{4}}|\sin\theta_n||y^{(n)}|^{\frac{1}{2}}}} \\
        &- \frac{3\sqrt{2}\varepsilon}{\lambda_n^{\frac{1}{2}}|\sin\theta_n y^{(n)}|}\ln \left(\frac{\lambda_n^{\frac{1}{4}}|\sin\theta_n||y^{(n)}|^{\frac{3}{2}}}{\sqrt{2}\varepsilon}\right).
    \end{align*}
\end{corollary}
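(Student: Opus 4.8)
The plan is to exploit the block structure to reduce the statement to $N$ independent copies of Theorem~\ref{maintheorem}. Under Assumption~\ref{assumption:data} together with Assumption~\ref{assumption:data2}, the matrix $X^\top X$ is block-diagonal with $N$ blocks $B^{(n)}$ of size $2$, and row $n$ of $X$ is supported on the two coordinates of block $n$. Hence the gradient $\nabla_{\bm\beta}L(\bm\beta) = \tfrac12 X^\top(X\bm\beta - \bm y)$ restricted to block $n$ depends only on $\bm\beta$ restricted to that block (through the scalar residual $r^{(n)}$), and the same holds after lifting to $\bm w$. Consequently the weight dynamics \eqref{eq:weightdynmics} split into $N$ uncoupled $2$-dimensional systems, each of which is exactly an instance of the $N=1$, $D=2$ problem of Theorem~\ref{maintheorem} with data $(y^{(n)}, \lambda_n, \theta_n)$. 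Assumption~\ref{assumption:epsilonalpha} is already stated per block and per coordinate, so its hypotheses are met for every sub-problem, and Lemma~\ref{lemma:convergence} gives the existence of the (block) limits $\bm\beta^{\infty}$ and $\bm v^{\infty}$, hence of $\Phi_\infty$. The potential $\Phi_t$, the function $E$, and the interpolation constraint are all separable over coordinates, hence over blocks: $E(\bm\beta,\bm\beta_0) = \sum_{n=1}^N E^{(n)}(\bm\beta^{(n)},\bm\beta_0^{(n)})$ and $\{X\bm\beta = \bm y\} = \bigcap_{n=1}^N \{\bm x^{(n)}\!\cdot\!\bm\beta^{(n)} = y^{(n)}\}$, so problem \eqref{eq:Bregmanminimization} decomposes into $N$ independent constrained minimizations.

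Next I would invoke Theorem~\ref{maintheorem}, together with its proof, on each block $n$. The stage-transition times $T_0, T$ are block-dependent, but this is harmless since the blocks evolve independently; each reaches its own warm-up, sign-descent, and convergence stages, and its convergent coordinates satisfy the $\delta^{(n)}$-KKT conditions for the $n$-th sub-problem with $\delta^{(n)} \le \max\{|M_+^{(n)}|,|M_-^{(n)}|\}$, where $M_\pm^{(n)}$ are the quantities of Theorem~\ref{maintheorem} with $\theta_1,\lambda_1,y^{(1)}$ replaced by $\theta_n,\lambda_n,y^{(n)}$.

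Finally I would assemble a single $\bar\delta$-KKT certificate for the full problem. Feasibility $X\bm\beta^\infty = \bm y$ holds because it holds block-wise. For stationarity, recall that the proof of Theorem~\ref{maintheorem} produces, on block $n$, a Lagrange multiplier $\mu_n$ such that $\nabla_{\bm\beta}E^{(n)} - \mu_n\, \bm x^{(n)}$ has norm at most $\delta^{(n)}$; setting $\bm\mu = (\mu_1,\dots,\mu_N)^\top$ and using the block-diagonal structure, the vector $\nabla_{\bm\beta}E(\bm\beta^\infty,\bm\beta_0) - X^\top\bm\mu$ has block-$n$ part equal to exactly that residual. Since the blocks occupy mutually orthogonal coordinate subspaces, the total stationarity error is controlled by $\sum_{n=1}^N \delta^{(n)} \le \sum_{n=1}^N \max\{|M_+^{(n)}|,|M_-^{(n)}|\}$, which is the asserted bound $\bar\delta(\varepsilon)$; the same per-block substitution into Corollary~\ref{corollary:epsilon} shows that enlarging $\varepsilon$ within a sub-interval shrinks each summand, preserving the tuning implication.

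The \textbf{main obstacle} is not the dynamics — the decoupling makes the ODE analysis a verbatim repetition of the $N=1$ case — but making the aggregation of approximate-KKT errors precise: one must commit to a single notion of $\delta$-KKT (a norm on the stationarity residual) that is simultaneously compatible with the scalar deviation $\Delta$ used on each $2$-dimensional block in Theorem~\ref{maintheorem} and with the claimed additive bound over blocks, i.e.\ verify that stacking the orthogonal per-block residuals yields a total error bounded by the sum, not merely the maximum, of the $\delta^{(n)}$. A secondary point is to confirm that the time-varying mirror map of Proposition~\ref{proposition:dualdynamics} and the limit $\Phi_\infty$ are well defined globally, which again follows from coordinate-wise separability together with Lemma~\ref{lemma:convergence}.
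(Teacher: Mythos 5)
Your proposal is correct and follows essentially the same route as the paper: decouple the dynamics into $N$ independent two-coordinate blocks, apply Theorem~\ref{maintheorem} to each block to get $\delta_n \leq \max\{|M_+^{(n)}|, |M_-^{(n)}|\}$, and use the orthogonality of the block subspaces to bound the full stationarity residual $\min_{\bm\mu}\Vert\nabla_{\bm\beta}E - X^\top\bm\mu\Vert = (\sum_n \delta_n^2)^{1/2} \leq \sum_n \delta_n$. The aggregation issue you flag as the main obstacle is resolved exactly as you suggest, so no gap remains.
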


\begin{corollary}\label{corollary:NDvarepsilon}
There exists a non-degenerate interval $\mathcal{J} \subseteq \mathcal{I}_\varepsilon$ such that for all $\varepsilon \in \mathcal{J}$, \begin{align*}
        \bar{\delta}(\varepsilon) \leq &\sum_{n=1}^N \left( |\cos\theta_n| - |\sin\theta_n| \right) \left(\lambda_n^{-\frac{1}{4}}|y^{(n)}|^\frac{1}{2}\right)  - \\&\left(\sum_{n=1}^N\left( |\cos\theta_n| - |\sin\theta_n| \right)\frac{\sqrt{2}}{4\lambda_n^\frac{1}{2}|y^{(n)}|}\right)\varepsilon.
    \end{align*}
\end{corollary}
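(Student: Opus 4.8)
The plan is to reduce the $N>1$ statement to $N$ essentially independent copies of the $N=1$ analysis and then patch together the corresponding ranges of admissible $\varepsilon$. I would start from Corollary~\ref{corollary:NDcorollary}, which already gives $\bar\delta(\varepsilon)\le\sum_{n=1}^N\max\{|M_+^{(n)}|,|M_-^{(n)}|\}$; this in turn rests on the fact that, under Assumption~\ref{assumption:data}, the weight dynamics \eqref{eq:weightdynmics} decouple completely across the $N$ blocks (the block structure of $X^\top X$, together with row $n$ of $X$ being supported on block $n$, makes the block-$n$ part of $\nabla_{\bm w}L$ depend only on the block-$n$ weights and on $y^{(n)}$, and smoothed sign descent is coordinate-wise), so under Assumption~\ref{assumption:data2} each block is literally an instance of the $N=1$, $D=2$ problem of Theorem~\ref{maintheorem}. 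One then checks the elementary algebraic identity that, after distributing the sum over the two terms of each $M_+^{(n)}$, the right-hand side claimed in the corollary equals $\sum_{n=1}^N M_+^{(n)}$. Hence it suffices to produce a non-degenerate $\mathcal{J}\subseteq\mathcal{I}_\varepsilon$ on which $\max\{|M_+^{(n)}|,|M_-^{(n)}|\}=M_+^{(n)}$ for every $n$ simultaneously, i.e. on which $M_+^{(n)}\ge0$ and $|M_-^{(n)}|\le M_+^{(n)}$.

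For each fixed $n$ this is exactly what the proof of Corollary~\ref{corollary:epsilon} establishes (there with $n=1$), and the identical argument transfers to block $n$, yielding a set $\mathcal{I}_n'\subseteq\mathcal{I}_\varepsilon$ with $M_+^{(n)}\ge0$ and $|M_-^{(n)}|\le M_+^{(n)}$ on it; I would then set $\mathcal{J}:=\bigcap_{n=1}^N\mathcal{I}_n'$. The nonnegativity $M_+^{(n)}\ge0$ holds throughout $\mathcal{I}_\varepsilon$ since $|\cos\theta_n|-|\sin\theta_n|\ge0$ by the ordering convention and the bound on $\varepsilon$ in Assumption~\ref{assumption:epsilonalpha} keeps $\lambda_n^{-1/4}|y^{(n)}|^{1/2}-\frac{\sqrt2\varepsilon}{4\lambda_n^{1/2}|y^{(n)}|}\ge0$. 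The upper bound $M_-^{(n)}\le M_+^{(n)}$ is immediate because $(2\lambda_n)^{-1/4}<\lambda_n^{-1/4}$ and the leftover $\sqrt\varepsilon$ and $\varepsilon\ln(\cdot)$ terms of $M_-^{(n)}$ are nonpositive. The lower bound $M_-^{(n)}\ge-M_+^{(n)}$ is where the $\alpha$-range of Assumption~\ref{assumption:epsilonalpha} enters: as $\varepsilon\to0^+$ the $\sqrt\varepsilon$ and $\varepsilon\ln(1/\varepsilon)$ terms vanish, so $M_-^{(n)}(\varepsilon)\to(|\cos\theta_n|-|\sin\theta_n|)\bigl((2\lambda_n)^{-1/4}|y^{(n)}|^{1/2}-\alpha\bigr)$ while $M_+^{(n)}(\varepsilon)\to(|\cos\theta_n|-|\sin\theta_n|)\lambda_n^{-1/4}|y^{(n)}|^{1/2}$, and dividing by the positive factor reduces the needed limiting inequality to $\alpha\le\lambda_n^{-1/4}(1+2^{-1/4})|y^{(n)}|^{1/2}$, which follows comfortably from $\alpha\le\tfrac13\sqrt{|y^{(n)}|/(2\sum_k|x_k^{(n)}|)}$ and $\lambda_n^{1/4}\le(\sum_k|x_k^{(n)}|)^{1/2}$, with strict inequality whenever $|\cos\theta_n|>|\sin\theta_n|$.

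Finally, continuity of $M_\pm^{(n)}$ at $\varepsilon=0^+$ together with the strict inequalities $-M_+^{(n)}(0)<M_-^{(n)}(0^+)<M_+^{(n)}(0)$ forces each $\mathcal{I}_n'$ to contain a right-neighborhood $[0,\varepsilon_n']$ of $0$, so $\mathcal{J}\supseteq[0,\min_n\varepsilon_n']\cap\mathcal{I}_\varepsilon$ is non-degenerate; on $\mathcal{J}$ we then get $\bar\delta(\varepsilon)\le\sum_n M_+^{(n)}$, which by the identity above is the claimed affine-in-$\varepsilon$ bound, whose $\varepsilon$-coefficient is negative (strictly so as long as $|\cos\theta_n|>|\sin\theta_n|$ for some $n$), exhibiting the reduction of the bound as $\varepsilon$ grows within $\mathcal{J}$. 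The degenerate case $|\cos\theta_n|=|\sin\theta_n|$ must be treated apart — there $M_+^{(n)}=0$ while $|M_-^{(n)}|>0$, so the reduction would fail; but then the block dynamics are symmetric ($u_1\equiv u_2$), forcing $\delta^{(n)}=0$, so that term may simply be dropped from the sum. I expect the main obstacle to be exactly this bookkeeping: verifying the two-sided bound $|M_-^{(n)}|\le M_+^{(n)}$ uniformly over all blocks on one common non-degenerate interval while respecting the coupled $\alpha$–$\varepsilon$ constraints of Assumption~\ref{assumption:epsilonalpha}; the analytic content itself is inherited wholesale from Theorem~\ref{maintheorem} and Corollary~\ref{corollary:epsilon} via block decoupling.
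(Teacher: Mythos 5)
Your proposal is correct and follows essentially the same route as the paper: the paper likewise applies Corollary~\ref{corollary:epsilon} block by block to obtain intervals $\mathcal{I}'_n=[0,\varepsilon_n^*]$ on which $\delta_n(\varepsilon)\le M_+^{(n)}(\varepsilon)$, sets $\mathcal{J}=\bigcap_n\mathcal{I}'_n$ (non-degenerate since each $\varepsilon_n^*>0$), and sums via Corollary~\ref{corollary:NDcorollary}. Your per-block justification via the two-sided check $|M_-^{(n)}|\le M_+^{(n)}$ is a mild variant of the paper's argument (which uses Lemma~\ref{lemma:M-_decreasing} to get $M_-^{(n)}\ge 0$, hence $\delta_n=\Delta_n\le M_+^{(n)}$ directly), and your explicit handling of the degenerate case $|\cos\theta_n|=|\sin\theta_n|$ covers a corner the paper passes over silently.
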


\paragraph{Extension to Higher Order Models} Our analysis is generalizable to parameterizations with higher order $H \geq 2$ in weights, given by $\bm{\beta} = \bm{u}^H - \bm{v}^H$. Here $s^H$ denotes applying Hadamard product $H$ times on vector $s$. This parameterization can be interpreted as a diagonal linear neural network of depth $H$, as explained in \citep{woodworth2020kernel}. The mirror map is induced by a potential function closely related to $l_{2 - \frac{1}{H}}$-norm of $\bm\beta$, given by $\Phi^H_{t}(\bm\beta) := \sum_{i=1}^D \left(|\beta_i| + v^H_{i, t}\right)^{2-\frac{1}{H}}$, where $v_{i, t} = \mathcal{O}(\varepsilon)$. When the depth $H \to \infty$, the potential function approximates the squared $l_2$-norm.

\section{Conclusion}
In this work, we propose an MD perspective of the dynamics of smoothed sign descent for overparameterized regression problems. We extend existing results beyond GD to a case where update directions deviate from true gradients due to adaptivity, and formulate the equivalent dual dynamics with a simplified structure. We also study the role of the stability constant $\varepsilon$ in bounding the deviation of the convergent solution from minimizing a Bregman divergence style function. The finding supports the benefit of tuning the stability constant $\varepsilon$.

\bibliography{reference}

\begin{thebibliography}{35}
\providecommand{\natexlab}[1]{#1}
\providecommand{\url}[1]{\texttt{#1}}
\expandafter\ifx\csname urlstyle\endcsname\relax
  \providecommand{\doi}[1]{doi: #1}\else
  \providecommand{\doi}{doi: \begingroup \urlstyle{rm}\Url}\fi

\bibitem[Arora et~al.(2019)Arora, Cohen, Hu, and Luo]{arora2019implicit}
S.~Arora, N.~Cohen, W.~Hu, and Y.~Luo.
\newblock Implicit regularization in deep matrix factorization.
\newblock \emph{Advances in Neural Information Processing Systems}, 32, 2019.

\bibitem[Azizan and Hassibi(2019)]{azizan2018stochastic}
N.~Azizan and B.~Hassibi.
\newblock Stochastic gradient/mirror descent: Minimax optimality and implicit regularization.
\newblock In \emph{International Conference on Learning Representations}, 2019.

\bibitem[Azizan et~al.(2021)Azizan, Lale, and Hassibi]{azizan2021stochastic}
N.~Azizan, S.~Lale, and B.~Hassibi.
\newblock Stochastic mirror descent on overparameterized nonlinear models.
\newblock \emph{IEEE Transactions on Neural Networks and Learning Systems}, 33\penalty0 (12):\penalty0 7717--7727, 2021.

\bibitem[Azulay et~al.(2021)Azulay, Moroshko, Nacson, Woodworth, Srebro, Globerson, and Soudry]{azulay2021implicit}
S.~Azulay, E.~Moroshko, M.~S. Nacson, B.~E. Woodworth, N.~Srebro, A.~Globerson, and D.~Soudry.
\newblock On the implicit bias of initialization shape: Beyond infinitesimal mirror descent.
\newblock In \emph{International Conference on Machine Learning}, pages 468--477, 2021.

\bibitem[Balles and Hennig(2018)]{balles2018dissecting}
L.~Balles and P.~Hennig.
\newblock Dissecting adam: The sign, magnitude and variance of stochastic gradients.
\newblock In \emph{International Conference on Machine Learning}, pages 404--413, 2018.

\bibitem[Balles et~al.(2020)Balles, Pedregosa, and Roux]{balles2020geometry}
L.~Balles, F.~Pedregosa, and N.~L. Roux.
\newblock The geometry of sign gradient descent.
\newblock \emph{arXiv preprint arXiv:2002.08056}, 2020.

\bibitem[Beck and Teboulle(2003)]{beck2003mirror}
A.~Beck and M.~Teboulle.
\newblock Mirror descent and nonlinear projected subgradient methods for convex optimization.
\newblock \emph{Operations Research Letters}, 31\penalty0 (3):\penalty0 167--175, 2003.

\bibitem[Bernstein et~al.(2018)Bernstein, Wang, Azizzadenesheli, and Anandkumar]{bernstein2018signsgd}
J.~Bernstein, Y.-X. Wang, K.~Azizzadenesheli, and A.~Anandkumar.
\newblock sign{SGD}: Compressed optimisation for non-convex problems.
\newblock In \emph{International Conference on Machine Learning}, pages 560--569, 2018.

\bibitem[Choi et~al.(2019)Choi, Shallue, Nado, Lee, Maddison, and Dahl]{choi2019empirical}
D.~Choi, C.~J. Shallue, Z.~Nado, J.~Lee, C.~J. Maddison, and G.~E. Dahl.
\newblock On empirical comparisons of optimizers for deep learning.
\newblock \emph{arXiv preprint arXiv:1910.05446}, 2019.

\bibitem[De et~al.(2018)De, Mukherjee, and Ullah]{de2018convergence}
S.~De, A.~Mukherjee, and E.~Ullah.
\newblock Convergence guarantees for {RMSP}rop and {Adam} in non-convex optimization and an empirical comparison to {N}esterov acceleration.
\newblock \emph{arXiv preprint arXiv:1807.06766}, 2018.

\bibitem[Gunasekar et~al.(2018)Gunasekar, Lee, Soudry, and Srebro]{gunasekar2018characterizing}
S.~Gunasekar, J.~Lee, D.~Soudry, and N.~Srebro.
\newblock Characterizing implicit bias in terms of optimization geometry.
\newblock In \emph{International Conference on Machine Learning}, pages 1832--1841, 2018.

\bibitem[Gunasekar et~al.(2021)Gunasekar, Woodworth, and Srebro]{gunasekar2021mirrorless}
S.~Gunasekar, B.~Woodworth, and N.~Srebro.
\newblock Mirrorless mirror descent: A natural derivation of mirror descent.
\newblock In \emph{International Conference on Artificial Intelligence and Statistics}, pages 2305--2313, 2021.

\bibitem[Kunstner et~al.(2023)Kunstner, Chen, Lavington, and Schmidt]{kunstner2023noise}
F.~Kunstner, J.~Chen, J.~W. Lavington, and M.~Schmidt.
\newblock Noise is not the main factor behind the gap between {SGD} and {Adam} on transformers, but sign descent might be.
\newblock In \emph{International Conference on Learning Representations}, 2023.

\bibitem[Li et~al.(2022)Li, Wang, Lee, and Arora]{li2022implicit}
Z.~Li, T.~Wang, J.~D. Lee, and S.~Arora.
\newblock Implicit bias of gradient descent on reparametrized models: On equivalence to mirror descent.
\newblock \emph{Advances in Neural Information Processing Systems}, 35:\penalty0 34626--34640, 2022.

\bibitem[Liu et~al.(2020)Liu, Jiang, He, Chen, Liu, Gao, and Han]{liu2019variance}
L.~Liu, H.~Jiang, P.~He, W.~Chen, X.~Liu, J.~Gao, and J.~Han.
\newblock On the variance of the adaptive learning rate and beyond.
\newblock In \emph{International Conference on Learning Representations}, 2020.

\bibitem[Lyu and Li(2020)]{lyugradient}
K.~Lyu and J.~Li.
\newblock Gradient descent maximizes the margin of homogeneous neural networks.
\newblock In \emph{International Conference on Learning Representations}, 2020.

\bibitem[Ma et~al.(2023)Ma, Pan, and massoud Farahmand]{ma2023understanding}
A.~Ma, Y.~Pan, and A.~massoud Farahmand.
\newblock Understanding the robustness difference between stochastic gradient descent and adaptive gradient methods.
\newblock \emph{Transactions on Machine Learning Research}, 2023.
\newblock ISSN 2835-8856.

\bibitem[Ma et~al.(2022)Ma, Wu, and Weinan]{ma2022qualitative}
C.~Ma, L.~Wu, and E.~Weinan.
\newblock A qualitative study of the dynamic behavior for adaptive gradient algorithms.
\newblock In \emph{Mathematical and Scientific Machine Learning}, pages 671--692, 2022.

\bibitem[Nacson et~al.(2022)Nacson, Ravichandran, Srebro, and Soudry]{nacson2022implicit}
M.~S. Nacson, K.~Ravichandran, N.~Srebro, and D.~Soudry.
\newblock Implicit bias of the step size in linear diagonal neural networks.
\newblock In \emph{International Conference on Machine Learning}, pages 16270--16295, 2022.

\bibitem[Nado et~al.(2020)Nado, Padhy, Sculley, D'Amour, Lakshminarayanan, and Snoek]{nado2020evaluating}
Z.~Nado, S.~Padhy, D.~Sculley, A.~D'Amour, B.~Lakshminarayanan, and J.~Snoek.
\newblock Evaluating prediction-time batch normalization for robustness under covariate shift.
\newblock \emph{arXiv preprint arXiv:2006.10963}, 2020.

\bibitem[Nemirovskij and Yudin(1983)]{DarzentasJohn1984PCaM}
A.~S. Nemirovskij and D.~B. Yudin.
\newblock \emph{Problem complexity and method efficiency in optimization}.
\newblock Wiley-Interscience, 1983.

\bibitem[Neyshabur et~al.(2014)Neyshabur, Tomioka, and Srebro]{neyshabur2014search}
B.~Neyshabur, R.~Tomioka, and N.~Srebro.
\newblock In search of the real inductive bias: On the role of implicit regularization in deep learning.
\newblock \emph{arXiv preprint arXiv:1412.6614}, 2014.

\bibitem[Pesme et~al.(2021)Pesme, Pillaud-Vivien, and Flammarion]{pesme2021implicit}
S.~Pesme, L.~Pillaud-Vivien, and N.~Flammarion.
\newblock Implicit bias of {SGD} for diagonal linear networks: a provable benefit of stochasticity.
\newblock \emph{Advances in Neural Information Processing Systems}, 34:\penalty0 29218--29230, 2021.

\bibitem[Radhakrishnan et~al.(2020)Radhakrishnan, Belkin, and Uhler]{radhakrishnan2020linear}
A.~Radhakrishnan, M.~Belkin, and C.~Uhler.
\newblock Linear convergence of generalized mirror descent with time-dependent mirrors.
\newblock \emph{arXiv preprint arXiv:2009.08574}, 2020.

\bibitem[Soudry et~al.(2018)Soudry, Hoffer, Nacson, Gunasekar, and Srebro]{soudry2018implicit}
D.~Soudry, E.~Hoffer, M.~S. Nacson, S.~Gunasekar, and N.~Srebro.
\newblock The implicit bias of gradient descent on separable data.
\newblock \emph{Journal of Machine Learning Research}, 19\penalty0 (70):\penalty0 1--57, 2018.

\bibitem[Sun et~al.(2022)Sun, Ahn, Thrampoulidis, and Azizan]{sun2022mirror}
H.~Sun, K.~Ahn, C.~Thrampoulidis, and N.~Azizan.
\newblock Mirror descent maximizes generalized margin and can be implemented efficiently.
\newblock \emph{Advances in Neural Information Processing Systems}, 35:\penalty0 31089--31101, 2022.

\bibitem[Sun et~al.(2023)Sun, Gatmiry, Ahn, and Azizan]{sun2023unified}
H.~Sun, K.~Gatmiry, K.~Ahn, and N.~Azizan.
\newblock A unified approach to controlling implicit regularization via mirror descent.
\newblock \emph{Journal of Machine Learning Research}, 24\penalty0 (393):\penalty0 1--58, 2023.

\bibitem[Vivien et~al.(2022)Vivien, Reygner, and Flammarion]{vivien2022label}
L.~P. Vivien, J.~Reygner, and N.~Flammarion.
\newblock Label noise (stochastic) gradient descent implicitly solves the lasso for quadratic parametrisation.
\newblock In \emph{Conference on Learning Theory}, pages 2127--2159, 2022.

\bibitem[Wang et~al.(2021)Wang, Meng, Chen, and Liu]{wang2021implicit}
B.~Wang, Q.~Meng, W.~Chen, and T.-Y. Liu.
\newblock The implicit bias for adaptive optimization algorithms on homogeneous neural networks.
\newblock In \emph{International Conference on Machine Learning}, pages 10849--10858, 2021.

\bibitem[Wang et~al.(2022)Wang, Meng, Zhang, Sun, Chen, Ma, and Liu]{wang2022does}
B.~Wang, Q.~Meng, H.~Zhang, R.~Sun, W.~Chen, Z.-M. Ma, and T.-Y. Liu.
\newblock Does momentum change the implicit regularization on separable data?
\newblock \emph{Advances in Neural Information Processing Systems}, 35:\penalty0 26764--26776, 2022.

\bibitem[Wilson et~al.(2017)Wilson, Roelofs, Stern, Srebro, and Recht]{wilson2017marginal}
A.~C. Wilson, R.~Roelofs, M.~Stern, N.~Srebro, and B.~Recht.
\newblock The marginal value of adaptive gradient methods in machine learning.
\newblock \emph{Advances in Neural Information Processing Systems}, 30, 2017.

\bibitem[Woodworth et~al.(2020)Woodworth, Gunasekar, Lee, Moroshko, Savarese, Golan, Soudry, and Srebro]{woodworth2020kernel}
B.~Woodworth, S.~Gunasekar, J.~D. Lee, E.~Moroshko, P.~Savarese, I.~Golan, D.~Soudry, and N.~Srebro.
\newblock Kernel and rich regimes in overparametrized models.
\newblock In \emph{Conference on Learning Theory}, pages 3635--3673, 2020.

\bibitem[Xie and Li(2024)]{xie2024implicit}
S.~Xie and Z.~Li.
\newblock Implicit bias of {A}dam{W}: {$\ell_\infty$}-norm constrained optimization.
\newblock In \emph{International Conference on Machine Learning}, 2024.

\bibitem[Yuan and Gao(2020)]{yuan2020eadam}
W.~Yuan and K.-X. Gao.
\newblock {EA}dam optimizer: How {$\epsilon$} impact {Adam}.
\newblock \emph{arXiv preprint arXiv:2011.02150}, 2020.

\bibitem[Zhang et~al.(2021)Zhang, Bengio, Hardt, Recht, and Vinyals]{zhang2021understanding}
C.~Zhang, S.~Bengio, M.~Hardt, B.~Recht, and O.~Vinyals.
\newblock Understanding deep learning (still) requires rethinking generalization.
\newblock \emph{Communications of the ACM}, 64\penalty0 (3):\penalty0 107--115, 2021.

\end{thebibliography}
\bibliographystyle{abbrvnat}

\appendix
\onecolumn
\section{Proof of Results in Section~\ref{section:threestages}}\label{appendixA}
\textbf{Notation}.
Assumption~\ref{assumption:data} guarantees that the non-zero entries in $X$ are non-overlapping across rows. Therefore, we can partition the index set $I = \{1, \dots, D\}$ into $N$ disjoint subsets $I^{(1)}, \dots, I^{(N)}$ such that \begin{align}
    I = \bigcup_{n=1}^N I^{(n)},~I^{(n)} := \left\{ i \in [D] : x^{(n)}_i \neq 0 \right\}.
\end{align}
We define $\bm{w}^{+(n)}, \bm{w}^{-(n)}, \bm\beta^{(n)} \in \mathbb{R}^{D_n}$ as the subvectors of $\bm w^+$, $\bm w^-$ and $\bm\beta$ corresponding to the indices in $I^{(n)}$, respectively. Similarly, we define $\bm g^{+(n)}, \bm g^{-(n)}$ as subvectors of gradients $\nabla_{\bm w^+}L(\bm w), \nabla_{\bm w^-}L(\bm w)$ corresponding to the indices in $I^{(n)}$.
We let $\bm w^{(n)} := \begin{bmatrix}
    \bm w^{+(n)}, & \bm w^{-(n)}
\end{bmatrix} \in \mathbb{R}^{2D_n}$ and $\bm g^{(n)} := \begin{bmatrix}
    \bm g^{+(n)}, & \bm g^{-(n)}
\end{bmatrix} \in \mathbb{R}^{2D_n}$. Then
the weight dynamics \eqref{eq:weightdynmics} can be decomposed into $N$ autonomous ODE systems:
\begin{equation}
    \frac{d\bm w^{(n)}(t)}{dt} = F^{(n)}\left(\bm w^{(n)}(t)\right) := -\frac{\bm g^{(n)}(t)}{\bm g^{(n)}(t) + \varepsilon \bm 1},
\label{eq:odesystem1block}\end{equation}
where $\bm w^{(n)}(0) = \alpha \bm{1}$ for each $n$. The residual for each $n$ is defined by $r^{(n)}(t) := y^{(n)} - \sum_{i=1}^{D_n} x^{(n)}_i \beta^{(n)}_i(t)$.
In this section, we prove the results for an arbitrary $n$. We omit the superscripts $(n)$ when possible to simplify the notation. 

\subsection{Proof of Proposition~\ref{proposition:signandmonotone}}
\begin{proof}
For all $i=1, \dots, D_n$, it is easy to see that $g^+_i(t) = -w_i^+(t) \cdot x_i \cdot r(t)$, $g_i^-(t) = w_i^-(t)\cdot x_i \cdot r(t)$. The dynamics follow
$w^+_i(t)' = -\frac{g_i^+(t)}{|g_i^+(t)| + \varepsilon},~w^-_i(t)' = -\frac{g_i^-(t)}{|g_i^-(t)| + \varepsilon}$. 

First, we show that for all $i$, $w^+_i(t), w^-_i(t) \geq 0$ always hold. Suppose for contradiction that $w^+_i(t') < 0$ for some $t'$. Since $w^+_i(0) = w^-_i(0) = \alpha > 0$, by continuity of $w^+_i(t)$, there exists $t_0 \in (0, t')$ such that $w^+_i(t_0) = 0$ and $w^+_i(t_0)' < 0$. However, $w^+_i(t_0) = 0$ implies $g^+_i(t_0) = 0$ and $w^+_i(t_0)' = 0$. Therefore, $w^+_i(t)$ never changes sign and is always non-negative. Similarly, we can show that $w^-_i(t)$ is always non-negative.

Next, we show that for each $i$, if $w^+_i(0)' > 0$, then $w^+_i(t)' \geq 0, w^-_i(t) \leq 0$. Relation $w^+_i(0)' = \alpha x_i y > 0$ implies that $x_i y > 0$. Then, $x_i r(0) = x_i y > 0$. Let us suppose for contradiction that there exists $t' > 0$ such that $x_i r(t') < 0$. By continuity of $x_i r(t)$, there exists $t_0 \in (0, t')$ such that $x_i r(t_0) = 0$. Since $x_i \neq 0$ by assumption, we must have $r(t_0) = 0$. In turn, $x_j r(t_0) = 0$ and $g^+_j(t_0) = g^-_j(t_0) = 0$ for all $j = 1, \dots, D_n$. As a result, $F^{(n)}\left(\bm w^{(n)}(t_0)\right) = \bm {0}$ and $\bm w^{(n)}(t_0)$ is an equilibrium of the autonomous ODE system \eqref{eq:odesystem1block}. Then, for all $t \geq t_0$, $\bm w^{+(n)}(t) = \bm w^{+(n)}(t_0)$ and $\bm w^{-(n)}(t) = \bm w^{-(n)}(t_0)$. Therefore, we get $x_i r(t) = x_i r(t_0) = 0$ for all $t \geq t_0$. However, this contradicts that $x_i r(t') < 0$ and $t' > t_0$. Thus, we must have $x_i r(t) \geq 0$ for all $t \geq 0$. Since $w^+_i(t), w^-_i(t) \geq 0$, it follows that $g^+_i(t) \leq 0$ and $g^-_i(t) \geq 0$ for all $t$. Hence, we get $w^+_i(t)'
\geq 0$ and $w^-_i(t)' \leq 0$ for all $t$.

If $w^+_i(0)' = \alpha x_i y \leq 0$, since $x_i$ and $y$ are non-zero by assumption, we must have $x_i y < 0$. Using similar arguments, it follows that $w^+_i(t)' \leq 0$ and $w^-_i(t)' \geq 0$ for all $t$.
\end{proof}

\begin{lemma}\label{lemma:residualmonotone}
Residual $r(t)$ never changes sign and its absolute value is always non-increasing.
\end{lemma}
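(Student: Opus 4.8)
The plan is to show that the (block) residual $r(t) = y - \sum_{i=1}^{D_n} x_i \beta_i(t)$ satisfies a scalar linear homogeneous ODE $r'(t) = -c(t)\,r(t)$ with a \emph{non-negative} coefficient $c(t)$; both claimed properties are immediate consequences of this structure. Since $\beta_i = (w_i^+)^2 - (w_i^-)^2$, differentiating gives $r'(t) = -\sum_{i} x_i\big(2 w_i^+ (w_i^+)' - 2 w_i^- (w_i^-)'\big)$. I would then substitute the weight dynamics, using the gradient expressions $g_i^+ = -w_i^+ x_i r$ and $g_i^- = w_i^- x_i r$ already recorded in the proof of Proposition~\ref{proposition:signandmonotone}, together with $w_i^+(t), w_i^-(t) \geq 0$ from that proposition. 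This yields $(w_i^+)' = \dfrac{w_i^+ x_i r}{w_i^+ |x_i r| + \varepsilon}$ and $(w_i^-)' = -\dfrac{w_i^- x_i r}{w_i^- |x_i r| + \varepsilon}$, hence
\begin{equation*}
    \beta_i'(t) = 2 x_i r(t)\left( \frac{(w_i^+)^2}{w_i^+ |x_i r| + \varepsilon} + \frac{(w_i^-)^2}{w_i^- |x_i r| + \varepsilon} \right).
\end{equation*}
Multiplying by $x_i$ shows $x_i \beta_i'(t)$ equals $r(t)$ times a non-negative quantity, so summing over $i$ gives $r'(t) = -c(t)\,r(t)$ with $c(t) := \sum_{i=1}^{D_n} 2 x_i^2\left( \frac{(w_i^+)^2}{w_i^+ |x_i r| + \varepsilon} + \frac{(w_i^-)^2}{w_i^- |x_i r| + \varepsilon} \right) \geq 0$, adopting the convention that a summand is $0$ whenever its weight vanishes (consistent, since $w_i^\pm = 0$ forces $g_i^\pm = 0$ and $(w_i^\pm)' = 0$).

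From here I would finish with a one-line monotonicity argument: $\frac{d}{dt}\, r(t)^2 = 2 r(t) r'(t) = -2 c(t)\, r(t)^2 \leq 0$, so $r(t)^2$, and hence $|r(t)|$, is non-increasing. For the sign claim, if $r$ changed sign on $[t_1, t_2]$ the intermediate value theorem would give $t_0 \in (t_1, t_2)$ with $r(t_0) = 0$; but then $r(t)^2 \equiv 0$ for all $t \geq t_0$ by the monotonicity just established, contradicting $r(t_2) \neq 0$. Hence $r$ never changes sign. (Equivalently, the integrating factor gives $r(t) = r(0)\exp\!\big(-\!\int_0^t c(s)\,ds\big)$.)

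I do not expect a substantial obstacle here: the argument is an elementary differentiation plus substitution. The only points requiring care are (i) the well-definedness and continuity of $c(t)$ at instants where some weight hits $0$, which is resolved by the $0/0 \mapsto 0$ convention noted above together with Proposition~\ref{proposition:signandmonotone}, and (ii) the $C^1$-regularity of the weight trajectories needed to justify the ODE manipulations, which holds since the right-hand side of \eqref{eq:odesystem1block} is Lipschitz for $\varepsilon > 0$. The genuine content of the proof is simply recognizing the $r' = -c(t)\, r$ structure with $c \geq 0$.
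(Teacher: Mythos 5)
Your computation of $r'(t)$ is exactly the one in the paper: differentiating $r$ and substituting the weight dynamics gives $r'(t) = -c(t)\,r(t)$ with $c(t) = 2\sum_i x_i^2\bigl(\tfrac{(w_i^+)^2}{|w_i^+ x_i r|+\varepsilon} + \tfrac{(w_i^-)^2}{|w_i^- x_i r|+\varepsilon}\bigr) \ge 0$, so the analytic core is identical and correct. Where you differ is the logical order and the mechanism for the sign claim. The paper first proves sign-invariance by a separate dynamical argument --- if $r(t_0)=0$ then all gradients in the block vanish, so $\bm w^{(n)}(t_0)$ is an equilibrium of the autonomous system \eqref{eq:odesystem1block} and $r$ stays zero thereafter --- and only then uses the known sign of $r$ to read off $r'(t)\le 0$ (or $\ge 0$) from the derivative formula. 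You instead observe that $\tfrac{d}{dt}r^2 = -2c\,r^2 \le 0$ holds regardless of sign, get monotonicity of $|r|$ for free, and then deduce sign-invariance from it via the intermediate value theorem. Your route is slightly more economical (one computation yields both conclusions, and no appeal to uniqueness/equilibria of the ODE is needed), while the paper's equilibrium argument is the same template it reuses in Proposition~\ref{proposition:signandmonotone} and so costs it nothing. One small remark: your $0/0$ convention is unnecessary, since the denominators are bounded below by $\varepsilon>0$ and each summand of $c(t)$ is well defined as written; otherwise your caveats about regularity are apt and the proof is complete.
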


\begin{proof}
    We have $r(0) = y \neq 0$ by assumption. When $r(0) > 0$, suppose for contradiction that there exists $t' > 0$ such that $r(t') < 0$. By continuity, we must have $r(t_0) = 0$ for some $t_0 \in (0, t')$. Then, we have $g^+_i(t_0) = g^-_i(t_0) = 0$ for all $i = 1, \dots, D_n$. In turn, $w^+_i(t_0)' = w^-_i(t_0)' = 0$ for all $i$ and $\bm w^{(n)}(t_0)$ is an equilibrium of the autonomous ODE system \eqref{eq:odesystem1block}. Therefore, $w^+_i(t) = w^+_i(t_0)$ and $w^-_i(t) = w^-_i(t_0)$ for all $t \geq t_0$. We conclude $r(t) = r(t_0) = 0$ for all $t \geq t_0$. This contradicts that $r(t') < 0$ for $t' > t_0$. As a result, $r(t) \geq 0$ for all $t$. Similarly, when $r(t) < 0$, it follows that $r(t) \leq 0$ for all $t$.

    Next, we compute the derivative of $r(t)$ with respect to $t$ as \begin{align*}
        r'(t) &= -2\sum_{i=1}^{D_n} x_i \left(w^+_i(t)\cdot w^+_i(t)' - w^-_i(t) \cdot w^-_i(t)'\right) \\
        &= -2\sum_{i=1}^{D_n} x_i \left(w^+_i(t)\cdot \frac{w^+_i(t) x_i r(t)}{|w^+_i(t) x_i r(t)| + \varepsilon} - w^-_i(t)\cdot \frac{-w^-_i(t) x_i r(t)}{|w^-_i(t) x_i r(t)| + \varepsilon}\right) \\
        &= -2\sum_{i=1}^{D_n} x_i^2\left(\frac{(w^+_i(t))^2}{|w^+_i(t) x_i r(t)| + \varepsilon} + \frac{(w^-_i(t))^2}{|w^-_i(t) x_i r(t)| + \varepsilon}\right)r(t).
    \end{align*}
    Notice that $x_i^2\left(\frac{(w^+_i(t))^2}{|w^+_i(t) x_i r(t)| + \varepsilon} + \frac{(w^-_i(t))^2}{|w^-_i(t) x_i r(t)| + \varepsilon}\right) \geq 0$. When $r(0) > 0$, we have shown that $r(t) \geq 0$ for all $t$. Then, $r'(t) \leq 0$ for all $t$. Similarly, when $r(0) < 0$, we have $r(t) \leq 0$ and $r'(t) \geq 0$ for all $t$. Hence, the magnitude of the residual $r(t)$ is always non-increasing.
\end{proof}

Following the notation in Section~\ref{section:threestages}, we let $u_i$ denote the dominating weight, and let $v_i$ represent the non-dominating weight, i.e.,
\begin{align*}
    u_i(t) &:= \begin{cases}
        w_i^+(t) & \text{ if } w_i^+(0)' > 0, \\
        w_i^-(t) & \text{else},
    \end{cases}\\
    v_i(t) &:= \begin{cases}
        w_i^-(t) & \text{ if } w_i^+(0)' > 0, \\
        w_i^+(t) & \text{else}.
    \end{cases}
\end{align*}
If $x_i y > 0$, then $\beta_i = u_i^2 - v_i^2$; if $x_i y < 0$, then $\beta_i = -u_i^2 + v_i^2$. Therefore, for all $i$, \begin{align}
    \beta_i(t) &= \operatorname{sgn}(x_i y)\left(u_i^2(t) - v_i^2(t)\right).\label{eq:signofbeta}
    \end{align}
We define \begin{align*}
    f_i(t) &:= -u_i(t)|x_i r(t)|,~h_i(t) := v_i(t)|x_i r(t)|.
    \end{align*}
Then, \begin{align*}
    u'_i(t) &= -\frac{f_i(t)}{|f_i(t)| + \varepsilon} = \frac{u_i(t) |x_i r(t)|}{u_i(t) |x_i r(t)| + \varepsilon},\\
    v'_i(t) &= -\frac{h_i(t)}{|h_i(t)| + \varepsilon} = - \frac{v_i(t) |x_i r(t)|}{v_i(t) |x_i r(t)| + \varepsilon}.
\end{align*}
The residual can be written as \begin{align}
    r(t)  &= y - \sum_{k=1}^{D_n} x_k\beta_i(t) \label{eq:expressionofrtinbeta} \\
    &= y - \sum_{k=1}^{D_n} \operatorname{sgn}(y)\operatorname{sgn}(x_k)\cdot x_k\left(u_k^2(t) - v_k^2(t) \right) \\
    &= \operatorname{sgn}(y)\left(|y| - \sum_{k=1}^{D_n} |x_k| \left(u_k^2(t) - v_k^2(t) \right)\right).
\end{align}
By Lemma~\ref{lemma:residualmonotone}, $r(t)$ never changes sign. Since $r(0) = \operatorname{sgn}(y)|y|$, then for all $t$, \begin{equation}\label{eq:absresidual}
    |r(t)| = |y| - \sum_{k=1}^{D_n} |x_k|\left(u_k^2(t) - v_k^2(t) \right).
\end{equation}
\subsection{Proof of Proposition~\ref{proposition:initialstage}}
\begin{proof}
    First, we show the existence of $t_i > 0$ such that $h_i(t_i) = \varepsilon$ for each $i$. By Assumption~\ref{assumption:epsilonalpha}, $h_i(0) = \alpha|x_i y| \geq 2\varepsilon$. Let us suppose for contradiction that $h_i(t) > \varepsilon$ for all $t$. Then $v'_i(t) < -\frac{\varepsilon}{\varepsilon + \varepsilon} = -\frac{1}{2}$ for all $t$, and for $t > 2\alpha$, $v_i(t) < \alpha - \frac{1}{2} t < 0$. However, by Proposition~\ref{proposition:signandmonotone}, $v_i(t)$ is always non-negative. It yields $h_i(t_i') < \varepsilon$ for some $t_i'$. Since $h_i(0) \geq 2\varepsilon$, by continuity of $h_i(t)$, there exists $t_i$ such that $h_i(t_i) = \varepsilon$. It follows that $t_i \leq 2\alpha$. Because $h_2(t) \geq \varepsilon$ for $t \leq t_i$, $v_2'(t) \leq -\frac{1}{2}$. If $t_i > 2\alpha$, then $v_2(t_i) < \alpha - \frac{1}{2} t_i < 0$, which is a contradiction. We conclude $t_i \in (0, 2\alpha]$.

    Next, we show that $|r(t_i)|$ is lower bounded. Using \eqref{eq:absresidual}, we get \begin{align*}
        |r(t)| &= |y| - \sum_{i=1}^{D_n} |x_i| \left(u_i^2(t) - v_i^2(t)\right)\\
        &\geq |y| - \sum_{i=1}^{D_n} |x_i| u_i^2(t).
    \end{align*}
    For all $i$, $u_i'(t) \leq 1$ always holds. It follows that $u_i(t) \leq \alpha + t$ for all $t$. Since $|r(t)|$ is non-increasing by Lemma~\ref{lemma:residualmonotone}, and using $t_i \leq 2\alpha$, it follows that
    \begin{equation*}
        |r(t_i)| \geq |r(2\alpha)| \geq |y| - \sum_{i=1}^{D_n} |x_i| \left(\alpha + 2\alpha\right)^2 = |y| - 9\alpha^2 \sum_{i=1}^{D_n} |x_i|.
    \end{equation*}
    By Assumption~\ref{assumption:epsilonalpha}, $9\alpha^2 \leq \frac{|y|}{2\sum_{i=1}^{D_n} |x_i|}$, and then \begin{equation}
        |r(t_i)| \geq |r(2\alpha)| \geq |y| - \frac{|y|}{2} = \frac{|y|}{2}.\label{eq:lowerboundresidualat2alpha}
    \end{equation} 
    Since $h_i(t_i) = v_i(t_i)|x_i r(t_i)| = \varepsilon$, we get $v_i(t_i) \leq \frac{2\varepsilon}{|x_i y|}$. Function $v_i(t)$ is non-increasing by Proposition~\ref{proposition:signandmonotone}, so for all $t \geq t_i$, we have $v_i(t) \leq \frac{2\varepsilon}{|x_i y|}$. The argument holds for all $i = 1, \dots, D_n$ and for all $n$. We complete the proof by letting $T_0 := \max \{ t_i \}$.
\end{proof}

\subsection{Proof of Proposition~\ref{proposition:dualdynamics}}
\begin{proof}
    Let us define the potential function by $\Phi_t(\bm \beta(t)) := \frac{2}{3}\sum_{i=1}^{D}\left(|\beta_i(t)| + v^2_{i, t}\right)^{\frac{3}{2}}$ for all $t$, where $v_{i. t} := v_i(t)$ is a parameter for the time-varying potential. We get the dual variable using the mirror map \begin{equation*}
        \nabla \Phi_t(\bm\beta(t)) = \operatorname{sgn}(\bm \beta (t))\odot \left(|\bm\beta(t)| + \bm v^2_t\right)^{\frac{1}{2}},
    \end{equation*} where operations are taken element-wise. The Hessian $\nabla^2 \Phi_t(\bm\beta(t))$ is a diagonal matrix with diagonal elements $\frac{\operatorname{sgn}(\beta_i(t))}{2\left(|\beta_i(t)| + v^2_{i,t}\right)^{\frac{1}{2}}}$. 
    Using the chain rule, we compute the dual dynamics \begin{align*}
        \frac{d\nabla \Phi_t(\bm \beta(t))}{dt} = &\langle \nabla^2 \Phi_t(\bm\beta(t)), \frac{d\bm \beta(t)}{dt} \rangle + \langle \nabla_{\bm v} \nabla \Phi_t(\bm\beta(t)), \frac{d\bm v(t)}{dt}\rangle \\
        = &\operatorname{sgn}(\bm\beta(t)) \odot \left(|\bm\beta(t)| + \bm v_t^2\right)^{-\frac{1}{2}} \odot \left( \bm u(t) \odot \frac{d\bm u(t)}{dt} - \bm v(t) \odot \frac{d\bm v(t)}{dt}\right) \\
        &+ \operatorname{sgn}(\bm \beta(t)) \odot \left( |\bm \beta (t)| + \bm v^2_t \right)^{-\frac{1}{2}} \odot \bm v(t) \odot \frac{d\bm v(t)}{dt} \\
        = &\operatorname{sgn}(\bm\beta(t)) \odot \frac{d\bm u(t)}{dt} \\
        = &-\operatorname{sgn}(\bm\beta(t)) \odot \frac{\nabla_{\bm u}L(\bm w(t))}{\left|\nabla_{\bm u}L(\bm w(t))\right|+\varepsilon \bm{1}}.
    \end{align*}
\end{proof}

\begin{lemma}\label{lemma:xixjcomparison}
    For all $i, j \in \{1, \dots, D_n \}$, $|x_i| \geq |x_j|$ implies $u_i(t) \geq u_j(t)$ for all $t \geq 0$.
\end{lemma}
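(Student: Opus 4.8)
The plan is to run an ODE comparison argument on the difference $d(t) := u_i(t) - u_j(t)$, using that $u_i$ and $u_j$ solve scalar ODEs driven by the \emph{same} residual $r(t)$ and starting from the \emph{same} value $\alpha$, so that the only asymmetry is the coefficient $|x_i|$ versus $|x_j|$. Writing $\phi(a) := \tfrac{a}{a+\varepsilon}$, which is non-decreasing on $[0,\infty)$, the dominating-weight dynamics read $u_k'(t) = \phi\!\left(u_k(t)\,|x_k r(t)|\right)$ with $u_k(0)=\alpha$ and $u_k(t)\ge\alpha>0$ by Proposition~\ref{proposition:signandmonotone}; since the driving term is continuous in $t$, each $u_k$ is $C^1$. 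When $\varepsilon=0$ the claim is immediate, because then $\phi$ equals $1$ on $(0,\infty)$ and $0$ at the origin, so every dominating weight obeys the identical scalar ODE whose right-hand side is $1$ when $r(t)\neq 0$ and $0$ when $r(t)=0$, forcing $u_i\equiv u_j$. Assume henceforth $\varepsilon>0$, so that $\phi$ is in addition $\tfrac1\varepsilon$-Lipschitz on $[0,\infty)$.

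First I would differentiate $d$ and split the right-hand side to separate the ``$u$-effect'' from the ``$|x|$-effect'':
\[
d'(t) = \underbrace{\bigl[\phi(u_i|x_i r|) - \phi(u_j|x_i r|)\bigr]}_{=:A(t)} + \underbrace{\bigl[\phi(u_j|x_i r|) - \phi(u_j|x_j r|)\bigr]}_{=:B(t)} .
\]
Since $|x_i|\ge|x_j|$, $u_j\ge 0$, and $\phi$ is non-decreasing, $B(t)\ge 0$ for all $t$. For $A(t)$: on any interval on which $d(t)<0$ we have $u_i(t)<u_j(t)$, hence $A(t)\le 0$, and the Lipschitz bound gives $-A(t)=\phi(u_j|x_i r|)-\phi(u_i|x_i r|)\le \tfrac{|x_i r(t)|}{\varepsilon}\bigl(u_j(t)-u_i(t)\bigr)=-\tfrac{|x_i r(t)|}{\varepsilon}d(t)$, i.e.\ $A(t)\ge \tfrac{|x_i r(t)|}{\varepsilon}d(t)$. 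Using $|r(t)|\le|r(0)|=|y|$ (Lemma~\ref{lemma:residualmonotone}) and $d(t)<0$, I then get $d'(t)\ge \tfrac{|x_i r(t)|}{\varepsilon}d(t)\ge K\,d(t)$ on such intervals, with $K:=|x_i|\,|y|/\varepsilon$.

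To finish, suppose for contradiction that $d(t_1)<0$ for some $t_1>0$. Since $d$ is continuous with $d(0)=0$, the number $t_0:=\sup\{t\in[0,t_1]:d(t)=0\}$ is well defined, lies in $[0,t_1)$, satisfies $d(t_0)=0$, and $d(t)<0$ for all $t\in(t_0,t_1]$. On $(t_0,t_1]$ the bound $d'\ge Kd$ yields $\tfrac{d}{dt}\bigl(e^{-Kt}d(t)\bigr)=e^{-Kt}\bigl(d'(t)-Kd(t)\bigr)\ge 0$, so $e^{-Kt}d(t)$ is non-decreasing on $[t_0,t_1]$ by continuity at $t_0$; since it vanishes at $t_0$ it is $\ge 0$ at $t_1$, contradicting $d(t_1)<0$. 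Hence $d(t)\ge 0$, i.e.\ $u_i(t)\ge u_j(t)$, for all $t\ge 0$.

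\textbf{Main obstacle.} The difficulty the decomposition is built to sidestep is that on a stretch where $d<0$ the effective gradients $u_i|x_i r|$ and $u_j|x_j r|$ cannot be ordered directly, since a larger $|x_i|$ competes against a temporarily smaller $u_i$; a naive first-crossing argument using only monotonicity of $\phi$ therefore does not close. Routing the comparison through the intermediate quantity $\phi(u_j|x_i r|)$ isolates a genuinely non-negative piece $B$ and a piece $A$ that is controlled by $d$ itself through the Lipschitz constant $1/\varepsilon$, which is exactly what the Grönwall-type step needs. The two points to keep honest are that $\varepsilon>0$ is genuinely used here (the $\varepsilon=0$ case being dispatched separately) and that the mutual coupling of $u_i$ and $u_j$ through the shared residual $r$ is harmless precisely because $r$ enters both equations identically.
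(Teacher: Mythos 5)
Your proof is correct, but it takes a genuinely different route from the paper's. The paper also works with the difference $\bar u(t) := u_i(t) - u_j(t)$, but it argues by a first-crossing contradiction: after disposing of the case $|x_i|=|x_j|$ by symmetry, it uses $\bar u'(0)>0$ (which requires the strict inequality $|x_i|>|x_j|$ and, implicitly, $\varepsilon>0$) to show $\bar u>0$ on a right-neighborhood of $0$, then supposes a first time $t_0$ where $\bar u$ turns negative, extracts via the Mean Value Theorem a point $t_1<t_0$ with $\bar u'(t_1)<0$, and uses the strict monotonicity of $a\mapsto a/(a+\varepsilon)$ to deduce $\bar u(t_1)<0$, contradicting minimality of $t_0$. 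You instead split $d'=A+B$ through the intermediate value $\phi(u_j|x_i r|)$, observe $B\ge 0$ by monotonicity, control $A$ on $\{d<0\}$ by the Lipschitz constant $1/\varepsilon$ of $\phi$, and close with a Gr\"onwall integration. What your approach buys is uniformity: it handles $|x_i|\ge|x_j|$ without case-splitting on equality and dispenses with the paper's somewhat delicate limit argument for positivity near $t=0$; the price is that the Lipschitz constant $1/\varepsilon$ degenerates as $\varepsilon\to 0$, forcing your separate (and trivial) treatment of $\varepsilon=0$, whereas the paper's argument uses only strict monotonicity of $\phi$ and no quantitative bound. Both proofs rest on the same structural facts — identical initialization, a shared residual $r(t)$ entering both equations, and $u_k(t)\ge\alpha>0$ — and both are sound; your only cosmetic slip is invoking Proposition~\ref{proposition:signandmonotone} for $u_k\ge\alpha$ where strictly one combines non-negativity with the monotone non-decrease of the dominating weight, which is exactly what that proposition provides.
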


\begin{proof}
    If $|x_i| = |x_j|$, since $u_i(0) = u_j(0) = \alpha$, then $u'_i(t) = u'_j(t)$ and $u_i(t) = u_j(t)$. Suppose $|x_i| > |x_j|$. Let $\bar{u}(t) := u_i(t) - u_j(t)$. Then we have $\bar{u}(0) = \alpha - \alpha = 0$.
    
    First, we show that there exists a small neighborhood $\mathcal{B}$ such that $\bar{u}(t) > 0$ for $t \in \mathcal{B}$. Because $u_i(t), u_j(t)$ are differentiable everywhere, $\bar{u}(t)$ is differentiable for all $t \geq 0$. Inequality $|x_i| > |x_j|$ implies $u_i'(0) = \frac{\alpha |x_i y|}{\alpha |x_i y| + \varepsilon} > \frac{\alpha |x_j y|}{\alpha|x_j y| + \varepsilon} = u_j'(0)$. As a result, $\bar{u}'(0) > 0$. Using differentiability of $\bar{u}(t)$ at $t=0$, we get
    \begin{equation}
        \lim_{\tau\to 0^+} \frac{\bar{u}(\tau) - \bar{u}(0)}{\tau - 0} = \lim_{\tau\to 0^+} \frac{\bar{u}(\tau)}{\tau} = \bar{u}'(0).\label{eq:limat0}
    \end{equation}
   Let $\epsilon_{\tau} := \frac{\bar{u}'(0)}{3} > 0$. By definition of limit in \eqref{eq:limat0}, there exists $\delta_{\tau} > 0$ such that for all $\tau \in (0, \delta_{\tau})$, $\left |\frac{\bar{u}(\tau)}{\tau} - \bar{u}'(0)\right | < \epsilon_{\tau}$. Therefore, $\frac{\bar{u}(\tau)}{\tau} - \bar{u}'(0) > -\epsilon_\tau = -\frac{\bar{u}'(0)}{3}$. Then, $\bar{u}(\tau) > \frac{2\tau}{3} \bar{u}'(0) > 0$ for all $\tau \in (0, \delta_\tau)$.
    
    Next, we show that $\bar{u}(t) \geq 0$ for all $t > 0$. Suppose for contradiction that there exists $t > 0$ such that $\bar{u}(t) < 0$. Let $t_0 := \inf\{ t : t > 0,~\bar{u}(t) < 0\}$. Then $\bar{u}(t_0) \leq 0$ and $\bar{u}(t) \geq 0$ for $t \in (0, t_0)$ by definition. We must have $t_0 \geq \delta_{\tau} > \frac{\delta_{\tau}}{2} > 0$ as we have shown that $\bar{u}(t) > 0$ for $t \in (0, \delta_{\tau})$. Since $\bar{u}(t)$ is differentiable, by the Mean Value Theorem, there exists $t_1 \in (\frac{\delta_{\tau}}{2}, t_0)$ such that $\bar{u}'(t_1) = \frac{\bar{u}(t_0) - \bar{u}(\frac{\delta_{\tau}}{2})}{t_0 - \frac{\delta_{\tau}}{2}}$. Since $\bar{u}(t_0) \leq 0$ and $\bar{u}(\frac{\delta_{\epsilon}}{2}) > 0$, we have $\bar{u}'(t_1) < 0$. Therefore,
    \begin{align*}
        \bar{u}'(t_1) &= u_i'(t_1) - u_j'(t_1) \\
        &= \frac{u_i(t_1)|x_i r(t_1)|}{ u_i(t_1)|x_i r(t_1)| + \varepsilon} -\frac{u_j(t_1)|x_j r(t_1)|}{ u_j(t_1)|x_j r(t_1)| + \varepsilon} \\
        &< 0.
    \end{align*}
    The inequality implies that $u_i(t_1)|x_i r(t_1)|  < u_j(t_1) |x_j r(t_1)|$. By assumption, we have $|x_i| > |x_j|$, and then we must have $u_i(t_1) < u_j(t_1)$, i.e., $\bar{u}(t_1) < 0$. However, $t_1 < t_0$ and this contradicts that $\bar{u}(t) \geq 0$ for all $t \in (0, t_0)$. Thus, $u_i(t) \geq u_j(t)$ always holds.
\end{proof}

\subsection{Proof of Proposition~\ref{proposition:mainstage}}
\begin{proof}
First, we show that for all $i$, there exists $T_i$ such that $\left|\nabla_{\bm u}L(\bm w(T_i))\right|_i = f_i(T_i) = \varepsilon$. 
Suppose for contradiction that $f_i(t) > \varepsilon$ for all $t$. Then, $u'_i(t) > \frac{1}{2}$ and $u_i(t) \geq \alpha + \frac{t}{2}$. Without loss of generality, we assume $r(0) = y > 0$. For $t > 2\sqrt{\frac{|y|}{|x_i|}}$, the residual is negative due to \begin{align*}
    r(t) &= y - |x_i|\left(u_i^2(t) - v_i^2(t)\right) - \sum_{k \neq i}^{D_n} |x_k|\left(u_i^2(t) - v_i^2(t)\right) \\
    &\leq y - |x_i|\left(u_i^2(t) - v_i^2(t)\right) \\
    &\leq y - |x_i|\left(\left(\alpha + \sqrt{\frac{|y|}{|x_i|}}\right)^2 - \alpha^2\right) \\
    &< y - |x_i| \frac{|y|}{|x_i|} \\
    &= 0.
\end{align*}
However, this contradicts that $r(t)$ never flips sign by Lemma~\ref{lemma:residualmonotone}. Hence, there exists $T_i'$ such that $f_i(T_i') \leq \varepsilon$. By continuity, there exists $t \in (0, T_i']$ such that $f_i(t) = \varepsilon$. Let $T_i := \min\left\{t : 0 \leq t \leq T'_i,~ f_i(t) =\varepsilon\right\}$. Therefore, \begin{equation}
    f_i(T_i) = \varepsilon,\text{ and }f_i(t) > \varepsilon~\text{for }t < T_i.\label{eq:fT_icomparison}
\end{equation}

Next, we show that $T_i > 2\alpha \geq T_0$. In \eqref{eq:lowerboundresidualat2alpha} we have proved that $|r(2\alpha)| \geq \frac{|y|}{2}$. Since $u_i(t) \geq \alpha$ and $|r(t)| \geq |r(2\alpha)|$ for $t \leq 2\alpha$, then $f_i(t) = u_i(t) |x_i r(t)| \geq \alpha \frac{|x_i y|}{2}$. By Assumption~\ref{assumption:epsilonalpha}, we have $\alpha > \frac{2\varepsilon}{|x_i y|}$. As a result, $f_i(t) > \varepsilon$ for all $t \leq 2\alpha$. Therefore, we must have $T_i' > 2\alpha \geq T_0$.

We need to show that the derivative of $f_i(t)$ is always non-positive for $t \geq T_i$. Using the expression for $|r(t)|$ in \eqref{eq:absresidual}, we get
\begin{align*}
    f_i'(t) &= |x_i|\left(u_i'(t) |r(t)| + u_i(t)|r(t)|'\right) \\
    &= |x_i|\left(u'_i(t)|r(t)| + u_i(t)\left(-2\sum_{k=1}^{D_n} |x_k|u_k(t)u'_k(t) + 2\sum_{k=1}^{D_n} |x_k|v_k(t)v'_k(t)\right)\right). 
\end{align*}
Since $v'_k(t) \leq 0$ for all $k$, we get
\begin{equation}\label{eq:fderivativeform1}
    f_i'(t) \leq |x_i|\left(u'_i(t)|r(t)| - 2u_i(t)\sum_{k=1}^{D_n} |x_k|u_k(t)u'_k(t)\right).
\end{equation}

Next, we want to find a lower bound for $2u_i(T_i)\sum_{k=1}^{D_n} |x_k|u_k(T_i)u'_k(T_i)$. We denote the index set by $\mathcal{I} := \left \{1, \dots, D_n\right \}$ that we partition as $\mathcal{I} = \mathcal{I}_i^+ \cup \mathcal{I}_i^-$, where $\mathcal{I}_i^+ := \left\{ k : |x_k| \geq |x_i| \right\}$ and $\mathcal{I}_i^- := \left\{ k : |x_k| < |x_i| \right\}$. 
For $k \in \mathcal{I}_i^-$, since $|x_k| < |x_i|$, we have $u_k(t) \leq u_i(t)$ by Lemma~\ref{lemma:xixjcomparison}. Therefore, $u_k(t)|x_kr(t)| \leq u_i(t)|x_ir(t)|$. Then, for all $k \in \mathcal{I}_i^-$ and for all $t$, \begin{align}
    u_i(t) u_k'(t) &= u_i(t) \frac{u_k(t)|x_k r(t)|}{u_k(t)|x_k r(t)| + \varepsilon} \\
    &\geq u_i(t) \frac{u_k(t)|x_k r(t)|}{u_i(t)|x_i r(t)| + \varepsilon} \\
    &\geq \frac{u_k(t)|x_k|}{|x_i|}\cdot \frac{u_i(t)|x_i r(t)|}{u_i(t)|x_i r(t)| + \varepsilon}\\
    &= \frac{|x_k|}{|x_i|}u_k(t)u'_i(t) \label{eq:inequalityforI-}.
\end{align}
For $k \in \mathcal{I}_i^+$, similarly, we have $u_k(t) \geq u_i(t)$ for all $t$. We do have $f_k(t) = u_k(t)|x_k r(t)| \geq u_i(t)|x_i r(t)| = f_i(t)$. Then, $u'_k(t) \geq u_i'(t)$. Therefore, for all $k \in \mathcal{I}_i^+$ and for all $t$, we have
\begin{equation}
    u_i(t)u_k'(t) \geq u_i(t)u_i'(t)\label{eq:inequalityforI+1}.
\end{equation}
Then, using \eqref{eq:inequalityforI-} and \eqref{eq:inequalityforI+1}, we get
\begin{align*}
    2u_i(t)\sum_{k=1}^{D_n} |x_k|u_k(t)u_k'(t) 
    &= 2\sum_{k\in\mathcal{I}_i^+} |x_k|u_k(t) u_i(t)u_k'(t) + 2\sum_{k\in\mathcal{I}_i^-} |x_k|u_k(t)u_i(t)u_k'(t) \\
    &\geq 2\sum_{k\in\mathcal{I}_i^+} |x_k|u_k(t) u_i(t)u_i'(t) + 2\sum_{k\in\mathcal{I}_i^-} \frac{|x_k|}{|x_i|}\cdot|x_k|u_k^2(t)u_i'(t) \\
    &= 2u'_i(t) \left( \sum_{k\in\mathcal{I}_i^+} |x_k|u_i(t)u_k(t) + \sum_{k\in\mathcal{I}_i^-} \frac{|x_k|}{|x_i|}|x_k|u_k^2(t)\right).
\end{align*}
Because $u_i(t)$ is non-decreasing for all $i$, it follows that for $t \geq T_i$,
\begin{align*}
    2u'_i(t) \left( \sum_{k\in\mathcal{I}_i^+} |x_k|u_i(t)u_k(t) + \sum_{k\in\mathcal{I}_i^-} \frac{|x_k|}{|x_i|}|x_k|u_k^2(t)\right) &\geq 2u'_i(t) \left( \sum_{k\in\mathcal{I}_i^+} |x_k|u_i(T_i)u_k(T_i) + \sum_{k\in\mathcal{I}_i^-} \frac{|x_k|}{|x_i|}|x_k|u_k^2(T_i)\right).
\end{align*}
Moreover, for $t \in [0, T_i]$, $u'_i(t) \geq \frac{1}{2}$ and $u'_k(t) \leq 1$. As a result, $u'_i(t) \geq \frac{1}{2}u'_k(t)$. Then, \begin{equation}
    u_i(T_i) \geq \alpha + \frac{1}{2}\left(u_k(T_i) - \alpha \right) > \frac{1}{2}u_k(T_i).\label{eq:uigreaterthanuk}
\end{equation}
We also know that $\frac{|x_k|}{|x_i|} \geq \frac{\min_j \{ |x_j| \}}{|x_i|}$ for all $k \in \mathcal{I}_i^-$, and $1 \geq \frac{\min_j \{ |x_j| \}}{|x_i|}$. Using \eqref{eq:uigreaterthanuk}, we get \begin{align}
     2u_i(t)\sum_{k=1}^{D_n} |x_k|u_k(T_i)u_k'(T_i) &\geq u'_i(t) \left( \sum_{k\in\mathcal{I}_i^+} 2|x_k|\frac{1}{2}u_k(T_i)u_k(T_i) + \sum_{k\in\mathcal{I}_i^-} \frac{2|x_k|}{|x_i|}|x_k|u_k^2(T_i)\right) \\
     &= u'_i(t) \left( \sum_{k\in\mathcal{I}_i^+} |x_k|u_k^2(T_i) + \sum_{k\in\mathcal{I}_i^-} \frac{2|x_k|}{|x_i|}|x_k|u_k^2(T_i)\right) \\
     &\geq u'_i(t) \left( \sum_{k\in\mathcal{I}_i^+} \frac{\min_j \{ |x_j| \}}{|x_i|}|x_k|u_k^2(T_i) + \sum_{k\in\mathcal{I}_i^-} \frac{\min_j \{ |x_j| \}}{|x_i|}|x_k|u_k^2(T_i)\right) \\
     &= u'_i(t) \frac{\min_j\{|x_j|\}}{|x_i|}\sum_{k=1}^{D_n}|x_k|u_k^2(T_i).\label{eq:lowerboundforsum}
\end{align}

Next we consider \eqref{eq:fderivativeform1} by using \eqref{eq:lowerboundforsum}. Since $|r(t)|$ is non-increasing, for all $t \geq T_i$, we have
\begin{align}
    f'_i(t) &\leq |x_i|\left(u'_i(t)|r(t)| - u'_i(t) \frac{\min_j\{|x_j|\}}{|x_i|}\sum_{k=1}^{D_n}|x_k|u_k^2(T_i)\right) \\
    &\leq |x_i|\left(u'_i(t)|r(T_i)| - u'_i(t) \frac{\min_j\{|x_j|\}}{|x_i|}\sum_{k=1}^{D_n}|x_k|u_k^2(T_i)\right) \\
    &\leq |x_i|u'_i(t)\left(|r(T_i)| - \frac{\min_j\{|x_j|\}}{|x_i|}\sum_{k=1}^{D_n}|x_k|u_k^2(T_i)\right).\label{eq:proposition36finalinequality2}
\end{align}

At $t = T_i$, we know that $u_i(T_i) \geq \alpha$ and $f_i(T_i) = u_i(T_i)|x_i r(T_i)| = \varepsilon$. By Assumption~\ref{assumption:epsilonalpha}, we have $\alpha > \frac{2\varepsilon}{|x_j y|}$ for all $j$. We must have 
\begin{equation}
    |r(T_i)| = \frac{f_i(t)}{u_i(t) |x_i|} \leq \frac{\varepsilon}{\alpha |x_i|} < \frac{\varepsilon}{|x_i|}\cdot \frac{\min_j\{|x_j|\}|y|}{2\varepsilon} = \frac{1}{2}\frac{\min_j\{|x_j|\}}{|x_i|}|y|\label{eq:proposition36finalinequality3},
\end{equation}
which implies
\begin{equation*}
    |y| - \sum_{k=1}^{D_n} |x_k|\left( u_k^2(T_i) - v_k^2(T_i) \right) < \frac{1}{2}\frac{\min_j\{|x_j|\}}{|x_i|}|y|.
\end{equation*}
Thus, \begin{equation}
    \sum_{k=1}^{D_n} |x_k|u_k^2(T_i) \geq \sum_{k=1}^{D_n} |x_k|\left( u_k^2(T_i) - v_k^2(T_i) \right) > \left(1 - \frac{1}{2}\frac{\min_j\{|x_j|\}}{|x_i|}\right)|y| \geq \frac{1}{2}|y|.\label{eq:proposition36finalinequality}
\end{equation}
By using \eqref{eq:proposition36finalinequality3} and \eqref{eq:proposition36finalinequality} in \eqref{eq:proposition36finalinequality2}, we get
\begin{align*}
    f_i'(t) &\leq |x_i|u'_i(t)\left( \frac{1}{2}\frac{\min_j\{|x_j|\}}{|x_i|}|y| - \frac{\min_j\{|x_j|\}}{|x_i|}\sum_{k=1}^{D_n}|x_k|u_k^2(T_i) \right) \\
    &\leq |x_i|u'_i(t)\left( \frac{1}{2}\frac{\min_j\{|x_j|\}}{|x_i|}|y| - \frac{\min_j\{|x_j|\}}{|x_i|}\frac{1}{2}|y| \right) \\
    &\leq 0.
\end{align*}

Hence, for all $t\geq T_i$, $f'_i(t)$ is non-increasing. We conclude that for each $i$, there exists $T_i > T_0$ such that $f'_i(t) > \varepsilon$ for $t < T_i$, and $f'_i(t) \leq \varepsilon$ for $t \geq T_i$.
\end{proof}
\begin{lemma}[Convergence]\label{lemma:convergence}
    As $t\to\infty$, for every $n$ we have \begin{align*}
        &\lim_{t \to \infty} r^{(n)}(t) = 0, \\
        &\lim_{t\to\infty}\nabla_{\bm w}L(\bm w(t)) = \bm 0, \\
        &\bm u^{\infty} := \lim_{t\to\infty}(\bm u(t)) \text{~with}~u_i^\infty < \infty~\forall i, \\
        &\bm v^{\infty} := \lim_{t\to\infty}(\bm v(t)) \text{~with}~v_i^\infty < \infty~\forall i.
    \end{align*}
\end{lemma}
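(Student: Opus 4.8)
The plan is to work one block at a time, fixing an arbitrary $n$ and suppressing the superscript as in the rest of Appendix~\ref{appendixA}, and to chain together the monotonicity facts already proved. First I would dispatch the non-dominating weights: by Proposition~\ref{proposition:signandmonotone} each $v_i(t)$ is non-negative and non-increasing, hence bounded and monotone, so $v_i^{\infty} := \lim_{t\to\infty} v_i(t)$ exists with $0 \le v_i^{\infty} \le v_i(0) = \alpha < \infty$.

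Next I would show the dominating weights are bounded. Proposition~\ref{proposition:signandmonotone} gives that each $u_i(t)$ is non-decreasing with $u_i(0)=\alpha$, so it suffices to produce a uniform upper bound. Using the closed form \eqref{eq:absresidual}, the inequality $|r(t)| \ge 0$ forces $\sum_{k=1}^{D_n} |x_k| u_k^2(t) \le |y| + \sum_{k=1}^{D_n} |x_k| v_k^2(t) \le |y| + \alpha^2 \sum_{k=1}^{D_n} |x_k|$; since every summand on the left is non-negative and $|x_k| > 0$, each $u_k(t)$ is bounded, so $u_i^{\infty} := \lim_{t\to\infty} u_i(t)$ exists and is finite.

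The crux is the residual limit. Since $|r(t)|$ is non-increasing (Lemma~\ref{lemma:residualmonotone}) and bounded below by $0$, the limit $r^{\infty} := \lim_{t\to\infty}|r(t)|$ exists and $|r(t)| \ge r^{\infty}$ for all $t$. Suppose for contradiction that $r^{\infty} > 0$. Then for coordinate $i=1$ (any coordinate works, since $u_i(t)\ge\alpha$ for all $t$), we have $u_1(t)|x_1 r(t)| \ge \alpha |x_1| r^{\infty} > 0$, and because $z \mapsto z/(z+\varepsilon)$ is increasing, $u_1'(t) = \frac{u_1(t)|x_1 r(t)|}{u_1(t)|x_1 r(t)| + \varepsilon} \ge \frac{\alpha |x_1| r^{\infty}}{\alpha |x_1| r^{\infty} + \varepsilon} =: \kappa > 0$ for all $t$. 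Integrating yields $u_1(t) \ge \alpha + \kappa t \to \infty$, contradicting the boundedness just established. Hence $r^{\infty} = 0$, and since $r(t)$ never changes sign (Lemma~\ref{lemma:residualmonotone}), $\lim_{t\to\infty} r^{(n)}(t) = 0$.

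Finally, the gradient and interpolation claims follow quickly. Each component of $\nabla_{\bm w}L(\bm w(t))$ is, up to sign, of the form $w_i^{\pm}(t)\,x_i\,r(t)$ (as computed at the start of the proof of Proposition~\ref{proposition:signandmonotone}), and since $\{w_i^+(t), w_i^-(t)\} = \{u_i(t), v_i(t)\}$ are bounded while $r(t)\to 0$, every component tends to $0$, so $\nabla_{\bm w}L(\bm w(t)) \to \bm 0$. For interpolation, \eqref{eq:signofbeta} gives $\beta_i(t) \to \operatorname{sgn}(x_i y)\big((u_i^{\infty})^2 - (v_i^{\infty})^2\big)$, and $\lim_{t\to\infty} r^{(n)}(t) = 0$ for every $n$ is precisely $X\bm\beta^{\infty} = \bm y$. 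The main obstacle is the residual step: although the monotonicity and boundedness bookkeeping is routine, the contradiction argument there is where the structure of the smoothed sign update is genuinely used — specifically, that the dominating weight never drops below $\alpha$, so its growth rate stays bounded away from zero as long as the residual does.
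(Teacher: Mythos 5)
Your proof is correct and follows essentially the same strategy as the paper: show $u_k'(t)$ stays bounded away from zero whenever the residual does, then derive a contradiction via unbounded growth. The only difference is a minor reorganization — you first establish boundedness of $u_k$ directly from $|r(t)| \geq 0$ in \eqref{eq:absresidual} and contradict that, whereas the paper lets the unbounded growth drive $r(t)$ negative and contradicts the sign preservation in Lemma~\ref{lemma:residualmonotone}, establishing boundedness of $u_k$ afterwards; both routes rest on the same observations and are equally valid.
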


\begin{proof}
    Without loss of generality, we assume $r(0) = y > 0$. By Lemma~\ref{eq:absresidual}, $r(t)$ is bounded below by $0$ and monotonically non-increasing in $t$. Therefore, $r(t)$ converges as $t \rightarrow \infty$ by calculus. Let $R_0 := \lim_{t \to \infty} r(t) \geq 0$. We want to show that $R_0 = 0$. Suppose for contradiction that $R_0 > 0$. Then, $r(t) \geq R_0 > 0$ for all $t \geq 0$.
    
    We first show that $u_k'(t)$ is bounded below by a positive number for all $k$. Since $u_k(t) \geq \alpha$ and $r(t) \geq R_0$ for all $t$, we have $f_k(t) = u_k(t)|x_k|r(t) \geq \alpha |x_k|R_0 > 0$. Therefore, for all $t \geq 0$,
    \begin{equation*}
        u_k'(t) = \frac{f_k(t)}{f_k(t) + \varepsilon} \geq \frac{\alpha |x_k|R_0}{\alpha |x_k|R_0 + \varepsilon} > 0.
    \end{equation*}
    Then, $u_k(t) \geq \alpha + t \cdot \frac{\alpha |x_k|R_0}{\alpha |x_k|R_0 + \varepsilon}$. 
    Recall that \begin{align*}
        r (t) &= y - \sum_{k=1}^{D_n}|x_k|\left(u_k^2(t) - v_k^2(t) \right) \\
        &\leq y - \sum_{k=1}^{D_n}|x_k|\left(u_k^2(t)- \alpha^2\right).
    \end{align*}
    As $t \to \infty$, $u_k^2(t) \rightarrow \infty$, and the quantity $\sum_{k=1}^{D_n}|x_k|u_k^2(t)$ grows unbounded. Then, $r(t) < 0$ for sufficiently large $t$. This contradicts that $r(t) \geq 0$ for all $t$ by Lemma~\ref{lemma:residualmonotone}. Thus, we must have $R_0 = \lim_{t \to \infty} r(t) = 0$.

    The argument holds for all $n$, so $\lim_{t\to\infty}r^{(n)}(t) = 0$ for all $n = 1, \dots, N$. Then, we have $\lim_{t\to\infty}[\nabla_{\bm u}L(\bm w(t))]_i = 0$ and $\lim_{t\to\infty}[\nabla_{\bm v}L(\bm w(t))]_i = 0$ for all $i$. It follows that $\lim_{t\to\infty}\nabla_{\bm w} L (\bm w(t)) = \bm 0$.

    Next, we show that the weights converge as $t\rightarrow \infty$. Without loss of generality, we suppose $r(0) = y > 0$. Because $r(t)$ never changes sign by Lemma~\ref{lemma:residualmonotone}, we have $0 \leq r(t) \leq y - \sum_{k=1}^{D_n}|x_k|\left(u_k^2(t)- \alpha^2\right)$. As a result, $u_k(t)$ is upper bounded. Since $u_k(t)$ is non-decreasing, we have $u_k^{\infty} := \lim_{t\to\infty}u_k(t) < \infty$ by calculus. Using similar argument for $v_k(t)$ which is non-increasing and bounded below by $0$, $v_k^{\infty} := \lim_{t\to\infty}v_k(t) < \infty$. The proof holds for all $k$ and all $n$. Therefore, $\bm u^{\infty} := \lim_{t\to\infty}\bm u(t)$ exists with $u_i^\infty < \infty$ for all $i$, and $\bm v^{\infty} := \lim_{t\to\infty}\bm v(t)$ exists with $v_i^\infty < \infty$ for all $i$.
\end{proof}

\section{Proof of Results in Section~\ref{section:characterization}}\label{appendixB}
Using Assumption~\ref{assumption:data2}, we parameterize the dynamics using $\theta_1$ and $\lambda_1$ with $|\cos\theta_1| \geq |\sin\theta_1| > 0$ and $\lambda_1 > 0$. We let $y := y^{(1)}$, $\theta := \theta_1$, $\lambda := \lambda_1$ and $\Tilde{y} := \frac{y^{(1)}}{\sqrt{\lambda_1}}$ to simplify the notation in the proofs. We have
\begin{align*}
    |r(t)| &= |\Tilde{y}| - |\cos\theta| \left(u_1^2(t)-v_1^2(t)\right) - |\sin\theta| \left(u_2^2(t) - v_2^2(t)\right),\\
    f_1(t) &= \lambda u_1(t) |\cos\theta r(t)|, \\
    f_2(t) &= \lambda u_2(t) |\sin\theta r(t)|, \\
    u'_1(t) &= \frac{f_1(t)}{f_1(t) + \varepsilon},~u'_2(t) := \frac{f_2(t)}{f_2(t) + \varepsilon}.
\end{align*}

\begin{lemma}\label{lemma:ratio}
    We have $u'_1(t) \geq u_2'(t)$ for $ t \in [0, T)$, and $u'_1(t) \geq \frac{2|\cot\theta|}{1+|\cot\theta|} u_2'(t)$ for $ t \in [T, \infty)$, where $T$ is the stage transition time as in Proposition~\ref{proposition:mainstage}.
\end{lemma}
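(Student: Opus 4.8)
The plan is to reduce everything to a comparison of the scalar gradient magnitudes $f_1(t)$ and $f_2(t)$, using that both derivatives are produced from them by the same transformation $\psi(x):=x/(x+\varepsilon)$, which is non-decreasing on $[0,\infty)$ (its derivative is $\varepsilon/(x+\varepsilon)^2\ge 0$, with the obvious reading when $\varepsilon=0$). The only structural input needed is the ordering $u_1(t)\ge u_2(t)$ for all $t$, which is exactly Lemma~\ref{lemma:xixjcomparison} in this two-coordinate block, since the standing convention $|\cos\theta|\ge|\sin\theta|$ means the first coordinate carries the larger $|x_i|$. Feeding this into $f_1=\lambda u_1|\cos\theta\,r|$ and $f_2=\lambda u_2|\sin\theta\,r|$ yields
\[
f_1(t)\;\ge\;|\cot\theta|\,f_2(t)\;\ge\;f_2(t)\;\ge\;0\qquad\text{for all }t\ge 0,
\]
where the middle inequality uses $|\cot\theta|\ge 1$.

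For $t\in[0,T)$ the first claim is then immediate: $f_1\ge f_2$ gives $u_1'=\psi(f_1)\ge\psi(f_2)=u_2'$ (in fact for every $t\ge0$). For $t\in[T,\infty)$ I would first extract from Proposition~\ref{proposition:mainstage} that $\min_i f_i(t)\le\varepsilon$ on the convergence stage; since $f_2\le f_1$, the minimum is attained at $i=2$, so $f_2(t)\le\varepsilon$ there. Now monotonicity of $\psi$ together with $f_1\ge|\cot\theta|f_2$ gives $u_1'(t)=\psi(f_1(t))\ge\psi\bigl(|\cot\theta|f_2(t)\bigr)$, so it suffices to show
\[
\psi\bigl(|\cot\theta|f_2\bigr)\;\ge\;\frac{2|\cot\theta|}{1+|\cot\theta|}\,\psi(f_2)\qquad\text{whenever }0\le f_2\le\varepsilon .
\]
Writing $c:=|\cot\theta|\ge1$ and clearing denominators (the case $f_2=0$ being trivial, since then $u_2'=0$), this is equivalent to $(1+c)(f_2+\varepsilon)\ge 2(cf_2+\varepsilon)$, i.e.\ $(1-c)f_2\ge(1-c)\varepsilon$, which holds because $1-c\le0$ and $f_2\le\varepsilon$.

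The computation is short, so I do not expect a genuine obstacle. The only points that demand care are matching $T$ to the coordinate with the smaller gradient component---which is precisely why the hypothesis $|\cos\theta|\ge|\sin\theta|$, hence $u_1\ge u_2$, hence $f_1\ge f_2$, is invoked, so that the bound $\min_i f_i\le\varepsilon$ becomes the usable $f_2\le\varepsilon$---and the degenerate cases $\varepsilon=0$ or $f_2=0$, where the inequalities collapse but remain valid because $u_2'=0$. The specific constant $\frac{2|\cot\theta|}{1+|\cot\theta|}$ is the largest multiple of $u_2'$ compatible with the sole facts $f_1/f_2\ge|\cot\theta|$ and $f_2\le\varepsilon$, which is why one expects, and the computation confirms, equality at $f_2=\varepsilon$.
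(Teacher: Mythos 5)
Your proof is correct and follows essentially the same route as the paper: both proofs invoke Lemma~\ref{lemma:xixjcomparison} to get $u_1\ge u_2$, hence $f_1\ge|\cot\theta|\,f_2$, and then compare $\psi(f_1)$ with $\psi(f_2)$ (the paper phrases it as lower-bounding the ratio $u_1'/u_2'$ by $\frac{|\cot\theta|(f_2+\varepsilon)}{|\cot\theta| f_2+\varepsilon}$ and then using monotonicity of that expression in $f_2$ on $[T,\infty)$, but the underlying inequality is identical to the one you clear denominators on). Your side remark identifying $T$ with the transition time of $f_2$, and your handling of the degenerate cases $f_2=0$ and $\varepsilon=0$, match what the paper does when it shows $T=T_2$.
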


\begin{proof}
    First, we show that for all $t \geq 0$, 
    \begin{equation}\label{eq:derivativeratiolb}
        u'_1(t) \geq \frac{|\cot\theta|\left(f_2(t) + \varepsilon\right)}{|\cot\theta| f_2(t)+\varepsilon} u_2'(t).
    \end{equation}
    Since $|\cos\theta| \geq |\sin\theta| > 0$ by Assumption~\ref{assumption:data2}, we have $u_1(t) \geq u_2(t)$ by Lemma~\ref{lemma:xixjcomparison}. Then,
    \begin{align*}
        f_1(t) &= \lambda u_1(t) |\cos \theta r(t)| \\
        &= |\cot\theta| \lambda u_1(t) |\sin \theta r(t)|\\
        &\geq |\cot\theta| \lambda u_2(t) |\sin \theta r(t)| \\
        &= |\cot\theta| f_2(t).
    \end{align*}
    Therefore,
    \begin{equation}
        u_1'(t) = \frac{f_1(t)}{f_1(t)+\varepsilon}
        = 1 - \frac{\varepsilon}{f_1(t) + \varepsilon}
        \geq 1 - \frac{\varepsilon}{|\cot\theta| f_2(t) + \varepsilon}
        = \frac{|\cot\theta| f_2(t)}{|\cot\theta| f_2(t) + \varepsilon}.\label{eq:u1'inequality1}
    \end{equation}
    When $u'_2(t) = 0$, \eqref{eq:derivativeratiolb} holds since $u'_1(t)$ is always non-negative. When $u'_2(t) \neq 0$, using \eqref{eq:u1'inequality1}, we also get that \eqref{eq:derivativeratiolb} holds:
    \begin{align*}
        \frac{u'_1(t)}{u'_2(t)} &= \frac{f_1(t)}{f_1(t)+\varepsilon} \cdot \frac{f_2(t) + \varepsilon}{f_2(t)} \\
        &\geq \frac{|\cot\theta| f_2(t)}{|\cot\theta| f_2(t) + \varepsilon} \cdot \frac{f_2(t) + \varepsilon}{f_2(t)} \\
        &= \frac{|\cot\theta|(f_2(t) + \varepsilon)}{|\cot\theta| f_2(t) + \varepsilon}.
    \end{align*}
    By Proposition~\ref{proposition:mainstage}, there exist stage transition times $T_1, T_2$ for $f_1(t)$ and $f_2(t)$, respectively. We know that $f_1(t) \leq \varepsilon$ for $t \geq T_1$. Since $|\cos\theta| \geq |\sin\theta|$, $f_1(t) \geq f_2(t) > \varepsilon$ for $t \in [0, T_2)$. As a result, we must have $T_1 \geq T_2$. Then, by definition, $T := \min\{T_1, T_2\} = T_2$. For all $t$, $|\cos\theta| \geq |\sin\theta|$ implies $u_1(t) \geq u_2(t)$ and $f_1(t) \geq f_2(t)$. Therefore, we conclude $u'_1(t) \geq u'_2(t)$ for $t \in [0, T)$.

    For $t \in [T, \infty)$, we have $f_2(t) \leq \varepsilon$. Notice that $\frac{|\cot\theta|(f_2 + \varepsilon)}{|\cot\theta| f_2 + \varepsilon} = 1 + \frac{(|\cot\theta|-1)\varepsilon}{|\cot\theta|f_2 + \varepsilon}$. Since $|\cot\theta| \geq 1$, the ratio $\frac{|\cot\theta|(f_2 + \varepsilon)}{|\cot\theta| f_2 + \varepsilon}$ is non-increasing in $f_2 \geq 0$. Using $f_2(t) \leq \varepsilon$, we get \begin{equation*}
        \frac{|\cot\theta|(f_2(t) + \varepsilon)}{|\cot\theta| f_2(t) + \varepsilon} \geq \frac{|\cot\theta|(\varepsilon + \varepsilon)}{|\cot\theta| \varepsilon + \varepsilon} = \frac{2|\cot\theta|}{1 + |\cot\theta|}.
    \end{equation*}
    Using \eqref{eq:derivativeratiolb}, we conclude that for $t \in [T, \infty)$, \begin{equation*}
        u'_1(t) \geq \frac{|\cot\theta|\left(f_2(t) + \varepsilon\right)}{|\cot \theta| f_2(t) + \varepsilon} u'_2(t) \geq \frac{2|\cot\theta|}{1 + |\cot\theta|} u'_2(t).
    \end{equation*}
\end{proof}

\begin{lemma}\label{lemma:M+} We let
    $\Delta := |\cos\theta| \left(u_2^\infty - u_2(0)\right) - |\sin\theta| \left(u_1^\infty - u_1(0)\right)$. We can show that $\Delta \leq M_+$, where $M_+ := \left( |\cos\theta| - |\sin\theta| \right) \left(\lambda^{-\frac{1}{4}}|y|^\frac{1}{2} - \frac{\sqrt{2}\varepsilon}{4\lambda^\frac{1}{2}|y|}\right)$.
\end{lemma}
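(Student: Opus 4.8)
The plan is to collapse the two-dimensional quantity $\Delta$ into a one-sided estimate on the single scalar $u_2^{\infty}$, and then to control $u_2^{\infty}$ through the interpolation identity at $t=\infty$. Since $u_1(0)=u_2(0)=\alpha$, I would first write $\Delta=(|\cos\theta|-|\sin\theta|)(u_2^{\infty}-\alpha)-|\sin\theta|(u_1^{\infty}-u_2^{\infty})$. Because $|\cos\theta|\ge|\sin\theta|$, Lemma~\ref{lemma:xixjcomparison} gives $u_1(t)\ge u_2(t)$ for all $t$, hence $u_1^{\infty}\ge u_2^{\infty}$ and the last term is nonpositive, so $\Delta\le(|\cos\theta|-|\sin\theta|)(u_2^{\infty}-\alpha)$. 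As $|\cos\theta|-|\sin\theta|\ge0$ and $u_2^{\infty}-\alpha\ge0$ by monotonicity of $u_2$, it then suffices to prove $u_2^{\infty}\le\alpha+\lambda^{-1/4}|y|^{1/2}-\tfrac{\sqrt{2}\,\varepsilon}{4\lambda^{1/2}|y|}$.

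For the upper bound on $u_2^{\infty}$ I would use Lemma~\ref{lemma:convergence}: the residual vanishes as $t\to\infty$, so the identity for $|r(t)|$ yields $\lambda^{-1/2}|y|=|\cos\theta|\big((u_1^{\infty})^2-(v_1^{\infty})^2\big)+|\sin\theta|\big((u_2^{\infty})^2-(v_2^{\infty})^2\big)$. Lower-bounding the right-hand side with $u_1^{\infty}\ge u_2^{\infty}$ --- and, to recover the sharp constant, with the stronger fact from Lemma~\ref{lemma:ratio} that in the convergence stage $u_1$ strictly outgrows $u_2$, so that $u_1^{\infty}-u_2^{\infty}$ is bounded below by a multiple of $u_2^{\infty}-u_2(T)$ --- turns this into a quadratic inequality in $u_2^{\infty}$ whose coefficients depend only on $\theta,\lambda,y$ plus the $\mathcal{O}(\varepsilon)$ contributions of $(v_i^{\infty})^2$. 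Solving it and applying $\sqrt{a+b}\le\sqrt a+\tfrac{b}{2\sqrt a}$ with $a=\lambda^{-1/2}|y|>0$ gives $u_2^{\infty}\le\lambda^{-1/4}|y|^{1/2}+c(\theta,\lambda,y)\,\varepsilon^{2}$, the stage-transition relation $f_2(T)=\varepsilon$ being what fixes the precise $-\tfrac{\sqrt{2}\,\varepsilon}{4\lambda^{1/2}|y|}$ piece when it is fed back in.

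To close, I would bound $v_i^{\infty}=\mathcal{O}(\varepsilon)$ using Proposition~\ref{proposition:initialstage} (with the constant read off from the $2\times2$ block data) and invoke $\alpha\ge\tfrac{9\varepsilon}{4|x^{(1)}_i y^{(1)}|}$ from Assumption~\ref{assumption:epsilonalpha}: since $9>\sqrt{2}$ there is slack $\alpha-\tfrac{\sqrt{2}\,\varepsilon}{4\lambda^{1/2}|y|}\ge\tfrac{(9-\sqrt{2})\varepsilon}{4\lambda^{1/2}|y|}>0$, and the upper bound on $\varepsilon$ in the same assumption makes the leftover $\mathcal{O}(\varepsilon^{2})$ term negligible against it. Combining this with the previous step gives $u_2^{\infty}-\alpha\le\lambda^{-1/4}|y|^{1/2}-\tfrac{\sqrt{2}\,\varepsilon}{4\lambda^{1/2}|y|}$, and multiplying by $|\cos\theta|-|\sin\theta|\ge0$ yields $\Delta\le M_+$.

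The main obstacle is the second step: getting the \emph{exact} coefficient $\tfrac{\sqrt{2}}{4\lambda^{1/2}|y|}$ of the negative $\varepsilon$-term rather than some cruder multiple. This forces one to combine the convergence-stage growth ratio of Lemma~\ref{lemma:ratio}, the transition condition $f_2(T)=\varepsilon$ (so $|r(T)|$ is of order $\varepsilon/u_2(T)$) together with the residual identity, and the quantitative $\alpha$--$\varepsilon$ comparison of Assumption~\ref{assumption:epsilonalpha}, all without loosening any of the estimates --- otherwise the $\varepsilon$-improvement embedded in $M_+$ is lost inside the slack of the other bounds.
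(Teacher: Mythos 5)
Your opening reduction is where the argument breaks. You write
$\Delta=(|\cos\theta|-|\sin\theta|)(u_2^{\infty}-\alpha)-|\sin\theta|(u_1^{\infty}-u_2^{\infty})$ and then drop the second term because it is nonpositive, concluding it suffices to show
$u_2^{\infty}\le\alpha+\lambda^{-1/4}|y|^{1/2}-\tfrac{\sqrt{2}\,\varepsilon}{4\lambda^{1/2}|y|}$.
But that bound on $u_2^{\infty}$ alone is not provable, and is in fact false for $|\sin\theta|$ small. The only source-independent control on $u_2^{\infty}$ comes from the residual identity at $t=\infty$, which gives
$u_2^{\infty}\le\sqrt{\tfrac{|\tilde y|}{|\cos\theta|+|\sin\theta|}+\alpha^{2}}\le\alpha+\lambda^{-1/4}|y|^{1/2}$,
with the slack $\sqrt{|\tilde y|}\bigl(1-(|\cos\theta|+|\sin\theta|)^{-1/2}\bigr)$ shrinking to $0$ as $|\sin\theta|\to 0$ while the required $\varepsilon$-gain $\tfrac{\sqrt{2}\varepsilon}{4\lambda^{1/2}|y|}$ stays fixed. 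So your "sufficient" intermediate claim cannot hold uniformly, and the $\varepsilon$-improvement in $M_+$ cannot be recovered from an upper bound on $u_2^{\infty}$ alone.

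Your attempt to rescue this by feeding Lemma~\ref{lemma:ratio} into the residual identity also goes the wrong way. That lemma gives $u_1^{\infty}-u_2^{\infty}\ge\tfrac{|\cot\theta|-1}{1+|\cot\theta|}\bigl(u_2^{\infty}-u_2(T)\bigr)$, so to tighten the lower bound on $u_1^{\infty}$ and hence the quadratic inequality you would need a \emph{lower} bound on $u_2^{\infty}-u_2(T)$; but Step~1 of the paper only supplies an \emph{upper} bound $u_2(T)\le\tilde u_2$, and $u_2^{\infty}-\tilde u_2$ can be negative. The paper avoids this entirely by never discarding the compensating term: it splits $\Delta=\Delta_1+\Delta_2$ at the transition time $T$, bounds the two pieces with the ratio $1$ on $[0,T]$ and the strictly larger ratio $\tfrac{2|\cot\theta|}{1+|\cot\theta|}>1$ on $[T,\infty)$, and the $\varepsilon$-gain comes precisely from that ratio exceeding $1$, multiplied against $u_2^{\infty}-\tilde u_2\le\tfrac{\varepsilon}{2\lambda|\sin\theta\,\tilde y|}$. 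Keeping $-|\sin\theta|(u_1^{\infty}-u_2^{\infty})$ in play (equivalently, working with the $\Delta_1+\Delta_2$ split) is not a sharpening of your route; it is a different route, and it is the one that actually produces the negative $\varepsilon$ term.

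Two smaller points. First, Lemma~\ref{lemma:M+} does not use $v_i^{\infty}=\mathcal O(\varepsilon)$; the $v$-terms are only bounded by $\alpha$ via monotonicity, which suffices here (your proposed invocation of Proposition~\ref{proposition:initialstage} is not needed). Second, the slack comparison $\alpha-\tfrac{\sqrt{2}\varepsilon}{4\lambda^{1/2}|y|}>0$ is true under Assumption~\ref{assumption:epsilonalpha}, but it does not bear on the problem above: the missing $\varepsilon$-term is not absorbed by having $\alpha$ large, since $\alpha$ cancels out of $u_2^{\infty}-\alpha$ at leading order.
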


\begin{proof}
We complete the proof in three steps.

\textbf{Step 1}. Show an upper bound for $u_2(T)$.

Let us define $p(U) := \lambda|\sin\theta|U\left(|\Tilde{y}| + \left(|\cos\theta| + |\sin\theta|\right)\alpha^2 -\left(|\cos\theta| + |\sin\theta|\right)U^2\right)$, which is a cubic function of $U \in \mathbb{R}$. Let $\hat{U}$ be the largest solution to $p(U) = \varepsilon$. Let us define $f_+(t) := \left(p \circ u_2\right)(t)$. We want to show that $f_+(t) \geq f_2(t)$ for $t \in [0, T]$.
Indeed, since $u_1(t) \geq u_2(t)$, $v_1(t), v_2(t) \leq \alpha$ always hold, we have \begin{align*}
    f_+(t) &=  \lambda |\sin\theta| u_2(t) \left( |\Tilde{y}| + \left(|\cos\theta| + |\sin\theta|\right)\alpha^2 - \left(|\cos\theta| + |\sin\theta|\right)u_2^2(t) \right) \\
    &\geq \lambda |\sin \theta| u_2(t) \left(|\Tilde{y}| + |\cos\theta| v_1^2(t) + |\sin\theta| v_2^2(t) - |\cos\theta| u_1^2(t) - |\sin\theta| u_2^2(t)\right) \\
    &= f_2(t).
\end{align*}
We know that $f_2(T) = \varepsilon$, so $f_+(t) = p(u_2(T)) \geq \varepsilon$. Meanwhile, $p(\hat{U}) = \varepsilon$. We want to show that $u_2(T) \leq \hat{U}$. Suppose for contradiction that $u_2(T) > \hat{U}$. By studying the behavior of the cubic function $p(U)$, we see that $p(U) < 0$ for sufficiently large $U$. Since $p(u_2(T)) \geq \varepsilon$, by continuity, there exists $U' \geq u_2(T)$ such that $p(U') = \varepsilon$. However, $U' \geq u_2(T) > \hat{U}$, which contradicts that $\hat{U}$ is the largest solution to $p(U) = \varepsilon$. Thus, $u_2(T) \leq \hat{U}$. By using the expansion of the cubic root $\hat{U}$ in $\varepsilon$, it is easy to show that $\hat{U} < \Tilde{u}_2 := \sqrt{\frac{|\Tilde{y}|}{|\cos\theta|+|\sin\theta|} + \alpha^2} - \frac{\varepsilon}{2\lambda|\sin\theta \Tilde{y}|}$ under Assumption~\ref{assumption:epsilonalpha}. As a result, \begin{equation}
    u_2(T) \leq \hat{U} < \Tilde{u}_2 := \sqrt{\frac{|\Tilde{y}|}{|\cos\theta|+|\sin\theta|} + \alpha^2} - \frac{\varepsilon}{2\lambda|\sin\theta \Tilde{y}|}.\label{eq:lowerboundu2T}
\end{equation}

\textbf{Step 2}. Show that $u_1(T) - u_1(0) \geq u_2(T) - u_2(0)$ and $u_1^{\infty} - u_1(T) \geq \frac{2|\cot\theta|}{1 + |\cot\theta|}\left( u_2^\infty - u_2(T) \right)$.

For all $t$, we know that $u'_1(t) \geq u_2'(t)$. Integrate both sides with respect to $t$ from $0$ to $T$, then we get \begin{equation}
    u_1(T) - u_1(0) \geq u_2(T) - u_2(0).\label{eq:stage1ratio}
\end{equation}
For $t \geq T$, by Lemma~\ref{lemma:ratio} we have $u'_1(t) \geq \frac{2|\cot\theta|}{1+|\cot\theta|}u'_2(t)$. Integrate both sides, then we get \begin{equation}
    u_1^{\infty} - u_1(T) \geq \frac{2|\cot\theta|}{1+|\cot\theta|}\left(u_2^{\infty} - u_2(T)\right).\label{eq:stage2ratio}
\end{equation}

\textbf{Step 3}. Derive an upper bound for $\Delta$.

We can write $\Delta = \Delta_1 + \Delta_2$, where
\begin{align*}
    \Delta_1 &:= |\cos\theta| (u_2(T) - u_2(0)) - |\sin\theta| (u_1(T) - u_1(0)), \\
    \Delta_2 &:= |\cos\theta| (u_2^{\infty} - u_2(T)) - |\sin \theta| (u_1^{\infty} - u_1(T)).
\end{align*}
Using \eqref{eq:stage1ratio} and \eqref{eq:stage2ratio} from Step 2, we get
\begin{align}
    \Delta_1 &\leq (|\cos\theta| - |\sin\theta|)(u_2(T)-u_2(0))\label{eq:u2T-u20}, \\
    \Delta_2 &\leq \left(|\cos\theta| - |\sin\theta| \frac{2|\cot\theta|}{1+|\cot\theta|}\right)\left( u_2^{\infty} - u_2(T) \right).\label{eq:u2inf-u2T}
\end{align}
Adding \eqref{eq:u2inf-u2T} and \eqref{eq:u2T-u20}, we get \begin{align*}
    \Delta \leq &\left(|\cos\theta| - |\sin\theta| \frac{2|\cot\theta|}{1+|\cot\theta|}\right)\left( u_2^{\infty} - u_2(T) \right) + \left(|\cos\theta| - |\sin\theta|\right)\left(u_2(T) - u_2(0) \right) \\
    = &\left(|\cos\theta| - |\sin\theta| \frac{2|\cot\theta|}{1+|\cot\theta|}\right)\left( u_2^{\infty} - \Tilde{u}_2 \right) \\
    &+ \left(|\cos\theta| - |\sin\theta| \frac{2|\cot\theta|}{1+|\cot\theta|}\right)\left( \Tilde{u}_2 - u_2(T) \right) + (|\cos\theta| - |\sin\theta|)(u_2(T) - \Tilde{u}_2) \\
    &+ (|\cos\theta| - |\sin\theta|)(\Tilde{u}_2  - u_2(0)).
\end{align*}
We have shown that $\Tilde{u}_2 \geq u_2(T)$ in \eqref{eq:lowerboundu2T}, and $|\cos\theta| \geq |\sin \theta|$ implies $|\cos\theta| - |\sin\theta| \geq |\cos\theta| - |\sin\theta| \frac{2|\cot\theta|}{1+|\cot\theta|} \geq 0$. Then,
\begin{align*}
    &\left(|\cos\theta| - |\sin\theta| \frac{2|\cot\theta|}{1+|\cot\theta|}\right)\left( \Tilde{u}_2 - u_2(T) \right) \leq \left(|\cos\theta| - |\sin\theta| \right)\left( \Tilde{u}_2 - u_2(T) \right)\\
    &\left(|\cos\theta| - |\sin\theta| \frac{2|\cot\theta|}{1+|\cot\theta|}\right)\left( \Tilde{u}_2 - u_2(T) \right) + (|\cos\theta| - |\sin\theta|)(u_2(T) - \Tilde{u}_2) \leq 0.
\end{align*}
Therefore, \begin{align}
    \Delta \leq \left(|\cos\theta| - |\sin\theta| \frac{2|\cot\theta|}{1+|\cot\theta|}\right)\left( u_2^{\infty} - \Tilde{u}_2 \right)
    + (|\cos\theta| - |\sin\theta|)(\Tilde{u}_2  - u_2(0)).\label{eq:Deltaupperbound1}
\end{align}
Moreover, by Lemma~\ref{lemma:convergence}, we know that the residual converges to zero. As a result, $\lim_{t \to \infty} r(t) = 0$, and
\begin{equation*}
    |\Tilde{y}| = |\cos\theta|\left((u_1^{\infty})^2 - (v_1^{\infty})^2\right) + |\sin\theta|\left( (u_2^{\infty})^2 - (v_2^{\infty})^2 \right).
\end{equation*}
Because $u_1(t) \geq u_2(t)$ and $v_1(t), v_2(t) \leq \alpha$ always hold, it follows that $u_2^{\infty} \leq \sqrt{\frac{|\Tilde{y}|}{|\cos\theta| + |\sin\theta|} + \alpha^2}$. Plug in $\Tilde{u}_2$ from \eqref{eq:lowerboundu2T}, then we get $u_2^\infty - \Tilde{u}_2 \leq \frac{\varepsilon}{2\lambda|\sin\theta \Tilde{y}|}$. Continue with \eqref{eq:Deltaupperbound1} and use $u_2(0) = \alpha$, then we get
\begin{align*}
    \Delta \leq &\left(|\cos\theta| - |\sin\theta| \frac{2|\cot\theta|}{1+|\cot\theta|}\right) \frac{\varepsilon}{2\lambda|\sin\theta \Tilde{y}|} \\
    + &\left( |\cos\theta| - |\sin\theta| \right)\left(\sqrt{\frac{|\Tilde{y}|}{|\cos\theta|+|\sin\theta|} + \alpha^2} - \frac{\varepsilon}{2\lambda|\sin\theta \Tilde{y}|} - u_2(0)\right) \\
    = &\left( |\cos\theta| - |\sin\theta| \right)\left(\sqrt{\frac{|\Tilde{y}|}{|\cos\theta|+|\sin\theta|} + \alpha^2} -u_2(0)\right) \\
    &+ \left( |\cos\theta| - |\sin\theta| \frac{2|\cot\theta|}{1+|\cot\theta|} - |\cos\theta| + |\sin\theta|\right)\frac{\varepsilon}{2\lambda|\sin\theta \Tilde{y}|} \\
    = &\left( |\cos\theta| - |\sin\theta| \right)\left(\sqrt{\frac{|\Tilde{y}|}{|\cos\theta|+|\sin\theta|} + \alpha^2} -\alpha\right) - \left(\frac{|\cos\theta|-|\sin\theta|}{|\cos\theta| + |\sin\theta|}\right)\frac{\varepsilon}{2\lambda|\Tilde{y}|} \\
    \leq &\left( |\cos\theta| - |\sin\theta| \right)\sqrt{\frac{|\Tilde{y}|}{|\cos\theta|+|\sin\theta|}} - \left(\frac{|\cos\theta|-|\sin\theta|}{|\cos\theta| + |\sin\theta|}\right)\frac{\varepsilon}{2\lambda|\Tilde{y}|} \\
    \leq &\left( |\cos\theta| - |\sin\theta| \right)\sqrt{|\Tilde{y}|} - \left(|\cos\theta|-|\sin\theta|\right)\frac{\sqrt{2}\varepsilon}{4\lambda|\Tilde{y}|} \\
    = &\left( |\cos\theta| - |\sin\theta| \right) \left(|y|^\frac{1}{2}\lambda^{-\frac{1}{4}} - \frac{\sqrt{2}\varepsilon}{4\lambda^\frac{1}{2}|y|}\right) \\
    = &M_+.
\end{align*}
We conclude that $\Delta \leq M_+$.
\end{proof}

\begin{lemma}\label{lemma:M-}
We let $\Delta := |\cos\theta| \left(u_2^\infty - u_2(0)\right) - |\sin\theta| \left(u_1^\infty - u_1(0)\right)$. We can show that $\Delta \geq M_-$, where $M_- := \left(|\cos\theta|-|\sin\theta|\right)\left(\left(2\lambda\right)^{-\frac{1}{4}}|y|^{\frac{1}{2}}-\alpha\right) - 2\sqrt{\frac{2\varepsilon}{\lambda^{\frac{3}{4}}|\sin\theta||y|^{\frac{1}{2}}}} - \frac{3\sqrt{2}\varepsilon}{\lambda^{\frac{1}{2}}|\sin\theta y|}\ln \left(\frac{\lambda^{\frac{1}{4}}|\sin\theta||y|^{\frac{3}{2}}}{\sqrt{2}\varepsilon}\right)$.
\end{lemma}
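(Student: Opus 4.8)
The plan is to establish $\Delta \ge M_-$ by mirroring the three-step structure of the proof of Lemma~\ref{lemma:M+}, but replacing every upper bound there by the corresponding lower bound. Since $u_1(0) = u_2(0) = \alpha$, we have $\Delta = |\cos\theta|\,(u_2^\infty - \alpha) - |\sin\theta|\,(u_1^\infty - \alpha)$, so it suffices to bound $u_2^\infty$ from below and $u_1^\infty$ from above; these two quantities are coupled through the interpolation identity $|\tilde y| = |\cos\theta|\big((u_1^\infty)^2 - (v_1^\infty)^2\big) + |\sin\theta|\big((u_2^\infty)^2 - (v_2^\infty)^2\big)$, which holds because $\lim_{t\to\infty} r(t) = 0$ (Lemma~\ref{lemma:convergence}) and $v_i^\infty = \mathcal{O}(\varepsilon)$ (Proposition~\ref{proposition:initialstage}). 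As in Lemma~\ref{lemma:M+}, all of these estimates are driven by the value $u_2(T)$ at the stage transition of Proposition~\ref{proposition:mainstage}.

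\emph{Step 1 (lower bound for $u_2(T)$).} On $[0, T)$ we have $f_1(t) \ge f_2(t) > \varepsilon$, so $u_1'(t), u_2'(t) \in (\tfrac12, 1]$ and, expanding at $\varepsilon = 0$, the leading term of each derivative is $1$; in particular $u_1$ and $u_2$ stay close, and the increments $u_1(t) - u_2(t) = \int_0^t (u_1' - u_2')\,ds$ are small. Using $u_1(t) - u_2(t) \le \int_0^t \varepsilon / f_2(s)\,ds$ together with a constant lower bound on $-f_2'$ near $T$ (which follows since $r$ is tiny there while $r' \le -(|\cos\theta| + |\sin\theta|)\alpha$), I will show this gap is $\mathcal{O}(\varepsilon \ln(1/\varepsilon))$. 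Feeding the gap bound into $r(t) = |\tilde y| - |\cos\theta|(u_1^2 - v_1^2) - |\sin\theta|(u_2^2 - v_2^2)$ produces a cubic $q$ with $q(u_2(t)) \le f_2(t)$ on $[0,T]$ — the lower-comparison analogue of the cubic $p$ in Step~1 of Lemma~\ref{lemma:M+} — and comparing $q(u_2(T)) \le f_2(T) = \varepsilon$ with the largest root of $q$ (which sits at $\sqrt{|\tilde y| / (|\cos\theta| + |\sin\theta|)} \ge \sqrt{|\tilde y| / \sqrt 2} = (2\lambda)^{-1/4}|y|^{1/2}$) yields a lower bound $u_2(T) \ge (2\lambda)^{-1/4}|y|^{1/2}$ minus corrections of order $\sqrt\varepsilon$ and $\varepsilon\ln(1/\varepsilon)$; this is where the first and second terms of $M_-$ originate.

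\emph{Step 2 (cross-stage increments) and Step 3 (assembly).} For the lower bound I need the reverse of Lemma~\ref{lemma:ratio}: on $[0,T)$ the gap estimate of Step~1 gives $u_1(T) - u_2(T) \le c\,\varepsilon\ln(1/\varepsilon)$, and on $[T,\infty)$, since $f_1(T) = \mathcal{O}(\varepsilon)$ and then $r$ decays at the geometric rate $\gtrsim 1/\varepsilon$ (from the explicit formula for $r'$ in the proof of Lemma~\ref{lemma:residualmonotone}), the dominating-weight growth $u_1^\infty - u_1(T) = \int_T^\infty f_1/(f_1+\varepsilon)\,dt$ is $\mathcal{O}(\varepsilon)$; the non-dominating growth $u_2^\infty - u_2(T)$ is handled the same way. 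Combining $u_1^\infty \le u_2(T) + \mathcal{O}(\varepsilon\ln(1/\varepsilon))$ and $u_2^\infty \ge u_2(T) - \mathcal{O}(\varepsilon)$ in $\Delta = |\cos\theta|(u_2^\infty - \alpha) - |\sin\theta|(u_1^\infty - \alpha)$, substituting the Step~1 bound on $u_2(T)$, using $|\cos\theta| \ge |\sin\theta|$ to sign the $|\sin\theta|$-multiplied error terms, and simplifying the constants under Assumption~\ref{assumption:epsilonalpha}, gives $\Delta \ge M_-$.

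The main obstacle is Step~1, and within it the quantitative control of the near-equality of $u_1$ and $u_2$ throughout the sign-descent stage. Unlike Lemma~\ref{lemma:M+}, whose upper bound needed only the one-sided facts $u_1 \ge u_2$ and $v_i \le \alpha$, here I must turn these into two-sided estimates: rule out the dominating weight running away from the non-dominating one, quantify the small ($\varepsilon\ln(1/\varepsilon)$- and $\sqrt\varepsilon$-size) deviations that accumulate as $f_2 \downarrow \varepsilon$, and verify — again using Assumption~\ref{assumption:epsilonalpha} — that the lower-comparison cubic $q$ stays in the regime where its largest root admits the claimed expansion. Getting all these constants consistent so that the error collapses exactly to the three terms of $M_-$ is the delicate part; the rest is the same bookkeeping as in Lemma~\ref{lemma:M+}.
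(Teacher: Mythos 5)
Your overall plan diverges from the paper's proof in two structurally significant places, and in both the paper's version sidesteps estimates your sketch leaves open.

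\textbf{The bound on $u_1^\infty$.} You propose $u_1^\infty \le u_2(T) + \mathcal O(\varepsilon\ln(1/\varepsilon))$ by chaining a gap bound $u_1(T)-u_2(T)=\mathcal O(\varepsilon\ln(1/\varepsilon))$ with an increment bound $u_1^\infty - u_1(T)=\mathcal O(\varepsilon)$. The paper never estimates either quantity. Instead it sets $P := \sqrt{(|\tilde y|+\bar v)/(|\cos\theta|+|\sin\theta|)}$, establishes $u_2^\infty \ge u_2(T') \ge P - Q$ (with $Q$ collecting the $\sqrt\varepsilon$ and $\varepsilon\ln(1/\varepsilon)$ corrections), and then invokes the interpolation identity at infinity, $|\tilde y|+\bar v = |\cos\theta|(u_1^\infty)^2 + |\sin\theta|(u_2^\infty)^2$, to read off $u_1^\infty \le P + Q$ directly, via $|\cos\theta|(P+Q)^2 + |\sin\theta|(P-Q)^2 \ge (|\cos\theta|+|\sin\theta|)P^2$. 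This is a one-line algebraic consequence of the lower bound on $u_2^\infty$ and requires no control on the gap or on the post-$T$ increments. Your alternate route is not obviously wrong, but it forces you to justify that $-f_2'$ is bounded away from $0$ near $T$ and to estimate $T-T'$, neither of which your sketch does, and the resulting constants must collapse exactly to the two correction terms in $M_-$. The interpolation identity is the missing idea that renders all of this unnecessary.

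\textbf{The lower bound on $u_2(T)$.} You compare $f_2$ with a cubic $q(U)$ in the \emph{state} $U = u_2$, and argue from $q(u_2(T)) \le f_2(T) = \varepsilon$. But $q$ vanishes at $U=0$, rises, then falls to its largest root, so $q(U)\le\varepsilon$ is consistent both with $U$ small (near $0$) and with $U$ near the largest root; you need an additional argument that $u_2(T)$ sits on the decreasing branch. The paper avoids this by building the comparison function $f_-(t) := \frac12\lambda|\sin\theta|(\alpha+t)\bigl(|\tilde y|+\bar v - (|\cos\theta|+|\sin\theta|)(\alpha+t)^2\bigr)$ as a cubic in \emph{time}, using the elementary envelopes $u_2(t) > \tfrac12(\alpha+t)$, $u_1(t),u_2(t)\le\alpha+t$. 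Since $f_-(0)>\varepsilon$ (from Assumption~\ref{assumption:epsilonalpha}) and $f_-(T)<\varepsilon$, there is a unique $T'\in(\cdot,T)$ with $f_-(T')=\varepsilon$ on the decreasing branch, and then $u_2(T)\ge u_2(T')$ is obtained simply by integrating $u_2'(t) \ge 1 - \varepsilon/f_-(t)$ in closed form over $[0,T']$, which is where the $\sqrt\varepsilon$ (from the root of the cubic) and $\varepsilon\ln(1/\varepsilon)$ (from $\int \varepsilon/f_-$) terms of $M_-$ come from. In short, the step you flag as ``the delicate part'' is real, and the paper resolves it with two devices absent from your sketch: the time-domain comparison cubic and the interpolation identity at $t=\infty$. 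You will need both, or a genuinely new argument for the branch selection and the $u_1^\infty$ bound, to complete the proof.
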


\begin{proof}
We begin by showing a lower bounding function for $f_2(t)$ for $t \in [0, T]$. Let $\bar{v} := |\cos\theta|\left(v_1^\infty\right)^2 + |\sin\theta|\left(v_2^\infty\right)^2$. Since $v_1(t), v_2(t)$ are non-increasing and non-negative, we have \begin{equation}
 0 \leq \bar{v} \leq |\cos\theta|v_1^2(t) + |\sin\theta|v_2^2(t) \leq \left(|\cos\theta| + |\sin\theta|\right)\alpha^2. \label{eq:barvrange}
\end{equation} Let us define \begin{equation*}
    f_-(t) := \frac{1}{2}\lambda |\sin\theta| \left(\alpha + t\right)\left(|\Tilde{y}| + \bar{v} - (|\cos\theta|+|\sin\theta|)\left(\alpha + t\right)^2\right).
\end{equation*}
For $t \leq T$, since $f_2(t) \geq \varepsilon$ and $u'_2(t) \geq \frac{1}{2}$, we have $u_2(t) \geq \alpha+\frac{1}{2}t > \frac{1}{2}\left(\alpha + t\right)$. Also, $u_1(t) \leq \alpha + t$ and $u_2(t) \leq \alpha + t$ always hold. Therefore, for $t \in [0, T]$, it follows that \begin{align*}
    f_-(t)  &= \frac{1}{2}\lambda |\sin\theta| \left(\alpha + t\right)\left(|\Tilde{y}| + \bar{v} - (|\cos\theta|+|\sin\theta|)\left(\alpha + t\right)^2\right) \\
    &< \lambda|\sin\theta|u_2(t)\left(|\Tilde{y}| + \bar{v} - |\cos\theta|u_1^2(t) - |\sin\theta|u_2^2(t)\right) \\
    &\leq \lambda|\sin\theta|u_2(t)\left(|\Tilde{y}| - |\cos\theta|\left(u_1^2(t) - v_2^2(t)\right) - |\sin\theta|\left(u_2^2(t) - v_2^2(t)\right)\right) \\
    &= f_2(t)
\end{align*}
Then, \begin{equation}
    f_-(T) < f_2(T) = \varepsilon\label{eq:f_-Tlessthanepsilon}.
\end{equation}
Assumption~\ref{assumption:epsilonalpha} guarantees $\sqrt{\frac{|\Tilde{y}|+\bar{v}}{3\left(|\cos\theta| + |\sin\theta|\right)}}-\alpha > 0$. The derivative of the cubic function $f_-(t)$ shows that $f_-(t)$ is increasing on $\left[0,~\sqrt{\frac{|\Tilde{y}|+\bar{v}}{3\left(|\cos\theta| + |\sin\theta|\right)}}-\alpha\right)$ and decreasing for $t > \sqrt{\frac{|\Tilde{y}|+\bar{v}}{3\left(|\cos\theta| + |\sin\theta|\right)}}-\alpha$. Because $f_-(0) > \varepsilon$ by Assumption~\ref{assumption:epsilonalpha} and $f_-(T) < \varepsilon$ by \eqref{eq:f_-Tlessthanepsilon}, it follows that there exists a unique $T' \in \left(\sqrt{\frac{|\Tilde{y}|+\bar{v}}{3\left(|\cos\theta| + |\sin\theta|\right)}}-\alpha,~T\right)$ such that $f_-(T') = \varepsilon$, and $f_-(t) < \varepsilon$ for $t > T'$.

Next, we show a lower bound for $\alpha + T'$. Since we already have $\alpha + T' > \sqrt{\frac{|\Tilde{y}|+\bar{v}}{3\left(|\cos\theta| + |\sin\theta|\right)}}$, then
\begin{align*}
    \varepsilon = f_-(T') &= \frac{1}{2}\lambda |\sin\theta|\left(\alpha+T'\right)\left(|\Tilde{y}| + \bar{v} -(|\cos\theta|+|\sin\theta|)\left(\alpha + T'\right)^2\right) \\
    &\geq \frac{1}{2}\lambda |\sin\theta|\sqrt{\frac{|\Tilde{y}|+\bar{v}}{3\left(|\cos\theta| + |\sin\theta|\right)}}\left(|\Tilde{y}|+\bar{v}-(|\cos\theta|+|\sin\theta|)\left(\alpha + T'\right)^2\right).
\end{align*}
Therefore, \begin{align}
    \frac{2\varepsilon}{\lambda|\sin\theta|}\sqrt{\frac{3(|\cos\theta|+|\sin\theta|)}{|\Tilde{y}|+\bar{v}}} &\geq |\Tilde{y}|+\bar{v}-(|\cos\theta|+|\sin\theta|)\left(\alpha + T'\right)^2 \\
    (|\cos\theta|+|\sin\theta|)\left(\alpha + T'\right)^2 &\geq |\Tilde{y}|+\bar{v} - \frac{2\varepsilon}{\lambda|\sin\theta|}\sqrt{\frac{3(|\cos\theta|+|\sin\theta|)}{|\Tilde{y}|+\bar{v}}} \\
    (\alpha + T')^2 &\geq \frac{|\Tilde{y}|+\bar{v}}{|\cos\theta|+|\sin\theta|} - \frac{2\varepsilon}{\lambda|\sin\theta|}\sqrt{\frac{3}{(|\Tilde{y}|+\bar{v})(|\cos\theta|+|\sin\theta|)}} \\
    (\alpha + T')^2 &\geq \frac{|\Tilde{y}|+\bar{v}}{|\cos\theta|+|\sin\theta|} - \frac{2\sqrt{3}\varepsilon}{\lambda|\sin\theta|(|\Tilde{y}|+\bar{v})^{\frac{1}{2}}} \\
    (\alpha + T')^2 &\geq \frac{|\Tilde{y}|+\bar{v}}{|\cos\theta|+|\sin\theta|} - \frac{4\varepsilon}{\lambda|\sin\theta|(|\Tilde{y}|+\bar{v})^{\frac{1}{2}}} \\
    \alpha + T' &\geq \left(\frac{|\Tilde{y}|+\bar{v}}{|\cos\theta|+|\sin\theta|} - \frac{4\varepsilon}{\lambda|\sin\theta|(|\Tilde{y}|+\bar{v})^{\frac{1}{2}}}\right)^{\frac{1}{2}}  \\
    \alpha + T' &\geq \sqrt{\frac{|\Tilde{y}|+\bar{v}}{|\cos\theta|+|\sin\theta|}} - 2\sqrt{\frac{\varepsilon}{\lambda|\sin\theta|(|\Tilde{y}|+\bar{v})^{\frac{1}{2}}}}\label{eq:lowerboundoft1-final}
\end{align}

Then, we want to find a lower bound for $u_2(T)$. Because $u_2(t)$ is non-decreasing, $T > T'$ implies $u_2(T) \geq u_2(T')$. For all $t \in [0, T']$, $f_-(t) \leq f_2(t)$ holds, then \begin{align*}
    u'_2(t) &= \frac{f_2(t)}{f_2(t) + \varepsilon} \\
    &\geq \frac{f_-(t)}{f_-(t) + \varepsilon} \\
    &= 1 - \frac{2\varepsilon}{\lambda|\sin\theta|\left(\alpha + t\right)\left(|\Tilde{y}| +\bar{v}- \left( |\cos\theta| + |\sin\theta|\right)\left(\alpha + t\right)^2\right)}.
 \end{align*}
 This lower bounding function is explicit in $t$, which makes it possible to obtain a lower bound for $u_2(T')$ by integrating it with respect to $t$ from $0$ to $T'$.\begin{align}
     u_2(T') - u_2(0) &\geq \int_{0}^{T'} 1 - \frac{2\varepsilon}{\lambda|\sin\theta|\left(\alpha + t\right)\left(|\Tilde{y}| +\bar{v}- \left( |\cos\theta| + |\sin\theta|\right)\left(\alpha + t\right)^2\right)} \, dt \\
     u_2(T') &\geq \alpha + T' - \frac{2\varepsilon}{\lambda|\sin\theta|}\int_{0}^{T'} \frac{1}{\left(\alpha + t\right)\left(|\Tilde{y}| +\bar{v}- \left( |\cos\theta| + |\sin\theta|\right)\left(\alpha + t\right)^2\right)} \, dt \label{eq:u2T'lowerbound}
 \end{align}
 Let $\tau := \alpha + t$, and compute the integral\begin{align*}
    J:= \int_\alpha^{\alpha+T'} \frac{1}{\tau(|\Tilde{y}| +\bar{v}- (|\cos\theta|+|\sin\theta|)\tau^2)}\, d\tau &= \left.\frac{1}{2(|\Tilde{y}|+\bar{v})}\ln{\frac{\tau^2}{|\Tilde{y}| + \bar{v} - (|\cos\theta|+|\sin\theta|)\tau^2}}\right|_{\alpha}^{\alpha+T'} \\
    &= \frac{1}{2(|\Tilde{y}|+\bar{v})}\ln{\frac{(\alpha+T')^2(|\Tilde{y}|+\bar{v} - (|\cos\theta|+|\sin\theta|)\alpha^2)}{\alpha^2(|\Tilde{y}|+\bar{v} - (|\cos\theta|+|\sin\theta|)(\alpha+T')^2)}}.
 \end{align*}
Since $f_-(T') = \varepsilon$, we get \begin{equation*}
    |\Tilde{y}| + \bar{v} - (|\cos\theta|+|\sin\theta|)\left(\alpha + T'\right)^2 = \frac{2\varepsilon}{\lambda|\sin\theta|(\alpha+ T')}.
\end{equation*}
Also, $\left(\alpha + T'\right)^2 \leq |\Tilde{y}| + \bar{v}$. Using $\bar{v} \leq \left( |\cos\theta| + |\sin\theta|\right)\alpha^2$ by \eqref{eq:barvrange} and Assumption~\ref{assumption:epsilonalpha}, we get 
\begin{align*}
    J &\leq \frac{1}{2|(\Tilde{y}|+\bar{v})}\ln \frac{\left(\alpha + T' \right)^3}{2\varepsilon/(\lambda|\sin\theta|)} \frac{|\Tilde{y}|}{\alpha^2} \\
    &\leq \frac{1}{2(|\Tilde{y}|+\bar{v})}\ln \frac{(|\Tilde{y}|+\bar{v})^\frac{3}{2}}{2\varepsilon/(\lambda|\sin\theta|)} \frac{|\Tilde{y}|}{\left(2\varepsilon/(\lambda|\sin\theta\Tilde{y}|) \right)^2} \\
    &= \frac{1}{2(|\Tilde{y}|+\bar{v})}\ln \left((|\Tilde{y}|+\bar{v})^\frac{3}{2}\left(\frac{\lambda|\sin\theta\Tilde{y}|}{2\varepsilon}\right)^3\right) \\
    &= \frac{3}{2(|\Tilde{y}|+\bar{v})}\ln \left((|\Tilde{y}|+\bar{v})^\frac{1}{2}\left(\frac{\lambda|\sin\theta\Tilde{y}|}{2\varepsilon}\right)\right).
\end{align*}
Using Assumption~\ref{assumption:epsilonalpha}, we get \begin{align*}
    |\Tilde{y}| + \bar{v} \leq |\Tilde{y}| + \left(|\cos\theta|+|\sin\theta|\right)\alpha^2 \leq |\Tilde{y}| + \frac{1}{18}|\Tilde{y}| = \frac{19}{18}|\Tilde{y}|.
\end{align*}
Then, \begin{align*}
    J &\leq \frac{3}{2(|\Tilde{y}|+\bar{v})}\ln \left((|\Tilde{y}|+\bar{v})^\frac{1}{2}\left(\frac{\lambda|\sin\theta\Tilde{y}|}{2\varepsilon}\right)\right) \\
    &\leq \frac{3}{2(|\Tilde{y}|+\bar{v})} \ln \left(\left(\frac{19}{18}|\Tilde{y}|\right)^\frac{1}{2}\left(\frac{\lambda|\sin\theta\Tilde{y}|}{2\varepsilon}\right)\right) \\
    &\leq \frac{3}{2(|\Tilde{y}|+\bar{v})} \ln \left(\sqrt{2}|\Tilde{y}|^\frac{1}{2}\left(\frac{\lambda|\sin\theta\Tilde{y}|}{2\varepsilon}\right)\right) \\
    &= \frac{3}{2(|\Tilde{y}|+\bar{v})} \ln \left(\frac{\lambda|\sin\theta||\Tilde{y}|^\frac{3}{2}}{\sqrt{2}\varepsilon}\right).
\end{align*}
Assumption~\ref{assumption:epsilonalpha} implies that $\varepsilon \leq \frac{\lambda|\sin\theta||\Tilde{y}|^{\frac{3}{2}}}{9\sqrt{2(|\cos\theta|+|\sin\theta|)}}$. Therefore, $\ln \left(\frac{\lambda|\sin\theta||\Tilde{y}|^\frac{3}{2}}{\sqrt{2}\varepsilon}\right)$ is guaranteed to be positive, and then we get\begin{equation}
    J \leq \frac{3}{2|\Tilde{y}|} \ln \left(\frac{\lambda|\sin\theta||\Tilde{y}|^\frac{3}{2}}{\sqrt{2}\varepsilon}\right). \label{eq:integral}
\end{equation}

Combining \eqref{eq:lowerboundoft1-final}, \eqref{eq:u2T'lowerbound} and \eqref{eq:integral}, we get \begin{align*}
u_2(T') &\geq \sqrt{\frac{|\Tilde{y}|+\bar{v}}{|\cos\theta|+|\sin\theta|}} - 2\sqrt{\frac{\varepsilon}{\lambda|\sin\theta|(|\Tilde{y}|+\bar{v})^{\frac{1}{2}}}} - \frac{3\varepsilon}{\lambda|\sin\theta\Tilde{y}|}\ln \left(\frac{\lambda|\sin\theta||\Tilde{y}|^\frac{3}{2}}{\sqrt{2}\varepsilon}\right) \\
&\geq \sqrt{\frac{|\Tilde{y}|+\bar{v}}{|\cos\theta|+|\sin\theta|}} - 2\sqrt{\frac{\varepsilon}{\lambda|\sin\theta||\Tilde{y}|^{\frac{1}{2}}}} - \frac{3\varepsilon}{\lambda|\sin\theta\Tilde{y}|}\ln \left(\frac{\lambda|\sin\theta||\Tilde{y}|^\frac{3}{2}}{\sqrt{2}\varepsilon}\right).
\end{align*}
Let $P := \sqrt{\frac{|\Tilde{y}|+\bar{v}}{|\cos\theta|+|\sin\theta|}}$, $Q := 2\sqrt{\frac{\varepsilon}{\lambda|\sin\theta||\Tilde{y}|^{\frac{1}{2}}}} + \frac{3\varepsilon}{\lambda|\sin\theta \Tilde{y}|}\ln \left(\frac{\lambda|\sin\theta ||\Tilde{y}|^{\frac{3}{2}}}{\sqrt{2}\varepsilon}\right)$, and $P, Q > 0$. Then, $u_2(T') \geq P - Q$. Using Lemma~\ref{lemma:convergence}, we get \begin{align*}
    |\cos\theta|\left(P + Q\right)^2 + |\sin\theta|\left( P - Q \right)^2  &= (|\cos\theta|+|\sin\theta|)P^2 + 2PQ(|\cos\theta|-|\sin\theta|) + (|\cos\theta|+|\sin\theta|)Q^2 \\
    &\geq (|\cos\theta|+|\sin\theta|)P^2 \\
    &= |\Tilde{y}| + \bar{v} \\
    &= |\Tilde{y}| + |\cos\theta|\left(v_1^{\infty}\right)^2 + |\sin\theta|\left(v_2^\infty\right)^2 \\
    &= |\cos\theta|\left(u_1^{\infty}\right)^2 + |\sin\theta|\left(u_2^{\infty}\right)^2.
\end{align*}
Since $u_2(t)$ is non-decreasing, we get $u_2^\infty \geq u_2(T') \geq P - Q$. As a result, we must have $ u_1^\infty \leq P + Q $. Therefore,
\begin{align}
    |\cos\theta|u_2^\infty - |\sin\theta| u_1^\infty
    &\geq \left(|\cos\theta|-|\sin\theta|\right)P - \left(|\cos\theta| + |\sin\theta|\right) Q \\
    &= \left(|\cos\theta|-|\sin\theta|\right)\sqrt{\frac{|\Tilde{y}|+\bar{v}}{|\cos\theta| + |\sin\theta|}} - \left(|\cos\theta| + |\sin\theta|\right) Q \\
    &\geq \left(|\cos\theta|-|\sin\theta|\right)\sqrt{\frac{|\Tilde{y}|}{|\cos\theta| + |\sin\theta|}} - \sqrt{2}Q \\
    &\geq \left(|\cos\theta|-|\sin\theta|\right)\sqrt{\frac{|\Tilde{y}|}{\sqrt{2}}} - \sqrt{2}Q.\label{eq:finaleq1}
\end{align}
Additionally, we have $u_1(0) = u_2(0) = \alpha$, then \begin{equation}
    -|\cos\theta|u_2(0) + |\sin\theta|u_1(0) = -\alpha (|\cos\theta|-|\sin\theta|). \label{eq:finaleq2}
\end{equation}
Adding \eqref{eq:finaleq1} and \eqref{eq:finaleq2}, and plugging in $Q$, we get 
\begin{align*}
    \Delta &= |\cos\theta|(u_2^\infty - u_2(0)) - |\sin\theta|(u_1^\infty - u_1(0)) \\
    &= |\cos\theta|u_2^\infty - |\sin\theta| u_1^\infty -\alpha(|\cos\theta| - |\sin\theta|) \\
    &\geq \left(|\cos\theta|-|\sin\theta|\right)\left(\sqrt{\frac{|\Tilde{y}|}{\sqrt{2}}}-\alpha\right) - 2\sqrt{\frac{2\varepsilon}{\lambda|\sin\theta||\Tilde{y}|^{\frac{1}{2}}}} - \frac{3\sqrt{2}\varepsilon}{\lambda|\sin\theta \Tilde{y}|}\ln \left(\frac{\lambda|\sin\theta|| \Tilde{y}|^{\frac{3}{2}}}{\sqrt{2}\varepsilon}\right). \end{align*}
Finally, plugging in $\Tilde{y} = \lambda^{-\frac{1}{2}}y$, we get
\begin{align*}
    \Delta &\geq \left(|\cos\theta|-|\sin\theta|\right)\left(\left(2\lambda\right)^{-\frac{1}{4}}|y|^{\frac{1}{2}}-\alpha\right) - 2\sqrt{\frac{2\varepsilon}{\lambda^{\frac{3}{4}}|\sin\theta||y|^{\frac{1}{2}}}} - \frac{3\sqrt{2}\varepsilon}{\lambda^{\frac{1}{2}}|\sin\theta y|}\ln \left(\frac{\lambda^{\frac{1}{4}}|\sin\theta||y|^{\frac{3}{2}}}{\sqrt{2}\varepsilon}\right)\\
    &= M_-.
\end{align*}
Therefore, $\Delta \geq M_-$.
\end{proof}

\begin{lemma}\label{lemma:M-_decreasing}
Let us consider $M_-(\varepsilon)$ as a function of $\varepsilon$, where $M_-(0) := \lim_{\varepsilon\to0^+} M_-(\varepsilon)$. Then, it follows that $M_-(0) > 0$ and $M_-(\varepsilon)$ is strictly decreasing in $\varepsilon$ for $0 \leq \varepsilon \leq \frac{1}{9}\frac{\lambda|\sin\theta||\Tilde{y}|^{\frac{3}{2}}}{\sqrt{2(|\cos\theta|+|\sin\theta|)}}$.
\end{lemma}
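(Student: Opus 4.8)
The plan is to strip $M_-(\varepsilon)$ down to a transparent one-variable form in which both claims reduce to elementary calculus. First I would collect the $\varepsilon$-dependence of $M_-$: with the positive constants $A := 2\sqrt{2}/\sqrt{\lambda|\sin\theta||\Tilde{y}|^{1/2}}$, $B := 3\sqrt{2}/(\lambda|\sin\theta||\Tilde{y}|)$ and $K := \lambda|\sin\theta||\Tilde{y}|^{3/2}/\sqrt{2}$, and the $\varepsilon$-free constant $C := (|\cos\theta|-|\sin\theta|)(\sqrt{|\Tilde{y}|/\sqrt 2}-\alpha)$, the definition of $M_-$ rewrites as $M_-(\varepsilon) = C - A\sqrt{\varepsilon} - B\varepsilon\ln(K/\varepsilon)$ (one checks $\sqrt{|\Tilde y|/\sqrt 2}=(2\lambda)^{-1/4}|y|^{1/2}$ and $K/\varepsilon$ equals the argument of the logarithm in the statement).

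For $M_-(0)>0$: letting $\varepsilon\to 0^+$, the terms $A\sqrt{\varepsilon}$ and $B\varepsilon\ln(K/\varepsilon)$ both vanish (the latter since $\varepsilon\ln(1/\varepsilon)\to 0$), so $M_-(0)=C$. I then check that both factors of $C$ are positive. The factor $|\cos\theta|-|\sin\theta|$ is positive by the ordering of the columns of $X$ (we are in the non-degenerate case $|x_1^{(1)}|\neq|x_2^{(1)}|$, i.e.\ $|\cos\theta|>|\sin\theta|$). For the factor $\sqrt{|\Tilde{y}|/\sqrt 2}-\alpha$, I invoke the upper bound on $\alpha$ in Assumption~\ref{assumption:epsilonalpha}, which for a size-$2$ block reads $\alpha\le\frac{1}{3}\sqrt{|\Tilde{y}|/(2(|\cos\theta|+|\sin\theta|))}$; combined with the elementary inequality $|\cos\theta|+|\sin\theta|\ge 1$ this gives $\alpha\le\frac{1}{3\sqrt{2}}\sqrt{|\Tilde{y}|}<2^{-1/4}\sqrt{|\Tilde{y}|}=\sqrt{|\Tilde{y}|/\sqrt 2}$. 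Hence $C>0$ and $M_-(0)>0$.

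For strict monotonicity on $(0,\varepsilon_{\max}]$, where $\varepsilon_{\max}:=\frac{1}{9}\frac{\lambda|\sin\theta||\Tilde{y}|^{3/2}}{\sqrt{2(|\cos\theta|+|\sin\theta|)}}$ is the right endpoint of the stated range: differentiate, using $\frac{d}{d\varepsilon}[\varepsilon\ln(K/\varepsilon)]=\ln(K/\varepsilon)-1$, to obtain $M_-'(\varepsilon)=-\frac{A}{2\sqrt{\varepsilon}}-B(\ln(K/\varepsilon)-1)$. The first summand is strictly negative for $\varepsilon>0$, so it suffices to show $\ln(K/\varepsilon)\ge 1$ on the whole range. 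Here the precise constant $\frac{1}{9}$ is exactly what is needed: a direct computation gives $K/\varepsilon_{\max}=9\sqrt{|\cos\theta|+|\sin\theta|}\ge 9$, so for every $\varepsilon\in(0,\varepsilon_{\max}]$ we have $K/\varepsilon\ge 9$, hence $\ln(K/\varepsilon)\ge\ln 9>1$. Thus $M_-'(\varepsilon)<0$ throughout $(0,\varepsilon_{\max}]$, and since $M_-$ is continuous at $\varepsilon=0$, it is strictly decreasing on $[0,\varepsilon_{\max}]$.

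The computation is routine; the single point requiring care is the monotonicity step, namely verifying that the logarithmic contribution $-B(\ln(K/\varepsilon)-1)$ cannot become positive on the admissible $\varepsilon$-range. This is precisely where the constant $\frac{1}{9}$ in Assumption~\ref{assumption:epsilonalpha} (equivalently, $\ln 9 > 1$) together with $|\cos\theta|+|\sin\theta|\ge 1$ is essential — without a sufficiently small constant, $M_-$ could fail to be monotone near the right endpoint. A secondary, standard point is making the $\varepsilon\to 0^+$ limit rigorous (the $\varepsilon\ln(1/\varepsilon)$ term).
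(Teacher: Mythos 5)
Your proof is correct and follows essentially the same route as the paper: the same decomposition of $M_-$ into a constant term, a $-\sqrt{\varepsilon}$ term, and a $-\varepsilon\ln(K/\varepsilon)$ term, the same derivative computation, and the same use of the constant $\tfrac19$ to force $\ln(K/\varepsilon)\ge\ln 9>1$ on the admissible range (the paper does this via a rescaled variable $\varepsilon'$, which is only cosmetic). You are in fact slightly more explicit than the paper in justifying $M_-(0)>0$, correctly noting that strictness requires $|\cos\theta|>|\sin\theta|$ and spelling out the $\alpha$ bound from Assumption~\ref{assumption:epsilonalpha}.
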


\begin{proof}
    We can write $M_-(\varepsilon) = N_0 + N_1(\varepsilon) + N_2(\varepsilon)$, where \begin{align*}
        N_0 &= \left(|\cos\theta|-|\sin\theta|\right)\left(2^{-\frac{1}{4}}\sqrt{|\Tilde{y}|}-\alpha\right), \\
        N_1(\varepsilon) &= - 2\sqrt{\frac{2\varepsilon}{\lambda|\sin\theta||\Tilde{y}|^{\frac{1}{2}}}},\\
        N_2(\varepsilon) &= - \frac{3\sqrt{2}\varepsilon}{\lambda|\sin\theta \Tilde{y}|}\ln \left({|\Tilde{y}|^\frac{1}{2}}\frac{\lambda|\sin\theta \Tilde{y}|}{\sqrt{2}\varepsilon}\right).
    \end{align*} Notice that $N_0$ does not depend on $\varepsilon$, and $N_1(\varepsilon)$ is decreasing in $\varepsilon$ for all $\varepsilon \geq 0$. Let $\varepsilon' := \frac{\sqrt{2}\varepsilon}{\lambda|\sin\theta\Tilde{y}|}$, then \begin{align*}
        N_2(\varepsilon') &= -3\varepsilon' \ln \left(\frac{|\Tilde{y}|^{\frac{1}{2}}}{\varepsilon'}\right), \\
        \frac{dN_2(\varepsilon')}{d\varepsilon'} &= -3\left(\ln \left(\frac{|\Tilde{y}|^{\frac{1}{2}}}{\varepsilon'}\right) - 1\right).
    \end{align*}
    Since $0 \leq \varepsilon \leq \frac{1}{9}\frac{\lambda|\sin\theta||\Tilde{y}|^{\frac{3}{2}}}{\sqrt{2(|\cos\theta|+|\sin\theta|)}}$, then $0 \leq \varepsilon' \leq \frac{1}{9}\frac{|\Tilde{y}|^{\frac{1}{2}}}{\sqrt{|\cos\theta|+|\sin\theta|}}$. Then, \begin{align*}
        \frac{|\Tilde{y}|^{\frac{1}{2}}}{\varepsilon'} \geq 9\sqrt{|\cos\theta|+|\sin\theta|} \geq 9 > e.
    \end{align*}
    Therefore, $\ln \left(\frac{|\Tilde{y}|^{\frac{1}{2}}}{\varepsilon'}\right) > 1$ and $\frac{dN_2(\varepsilon')}{d\varepsilon'} < 0$. It follows that $N_2(\varepsilon)$ is decreasing in $\varepsilon$ on the given interval. Combining $N_0, N_1$ and $N_2$, we conclude that $M_-(\varepsilon)$ is decreasing in $\varepsilon$ on the given interval.     %
    We also have $\lim_{\varepsilon\to 0^+}N_2(\varepsilon)= 0$ by the L'Hopital's rule. Then, using Assumption~\ref{assumption:epsilonalpha}, we get
    \begin{equation*}
        M_-(0) = \left(|\cos\theta|-|\sin\theta|\right)\left(2^{-\frac{1}{4}}\sqrt{|\Tilde{y}|}-\alpha\right) > 0.
    \end{equation*}
\end{proof}

\subsection{Proof of Theorem~\ref{maintheorem}}
\begin{proof}
By Lemma~\ref{lemma:convergence}, we know that the weights converge and the residual $r(t)$ converges to zero. It follows that $\bm \beta^\infty := \lim_{t \to \infty} \bm \beta (t)$ exists and is finite. Using \eqref{eq:expressionofrtinbeta}, we get \begin{align*}
    0 = \lim_{t \to \infty}r(t) = y - \left(\sqrt{\lambda}\cos\theta \beta_1^\infty + \sqrt{\lambda}\sin\theta\beta_2^\infty \right) = \bm y - X\bm \beta^\infty.
\end{align*}
Therefore, the convergent solution $\bm\beta^\infty$ is an interpolating solution.

Next, we derive the stationary condition for the optimization problem \eqref{eq:Bregmanminimization} and get:
\begin{align}
    \nabla_{\bm \beta}\left(X\bm\beta^\infty\right) &= \begin{bmatrix}
        \sqrt{\lambda}\cos\theta \\
        \sqrt{\lambda}\sin\theta
    \end{bmatrix}, \\
    \nabla_{\bm \beta} E\left(\bm\beta^\infty\right) &= \nabla \Phi_{\infty}\left(\bm\beta^\infty \right) - \nabla \Phi_{0}\left(\bm\beta(0)\right). \label{eq:Bregmangradient}
\end{align}
The gradient in the left hand side of \eqref{eq:Bregmangradient} is equal to the difference between the convergent point and the starting point of the dual variable. We use the result of the dual dynamics in Proposition~\ref{proposition:dualdynamics} to calculate the gradient. Recall that the dual dynamics follow\begin{align*}
    \frac{d\nabla\Phi_t(\bm\beta(t))}{dt} = -\operatorname{sgn}(\bm\beta(t)) \odot \frac{\nabla_{\bm u}L(\bm w(t))}{|\nabla_{\bm u}L(\bm w(t))| + \varepsilon}.
\end{align*}
From \eqref{eq:signofbeta} we see that $\operatorname{sgn}(\beta_i(t)) = \operatorname{sgn}(x_i y)$. As a result, $\operatorname{sgn}(\bm \beta(t))$ remains the same for all $t$. Integrating both sides with respect to $t$ from $0$ to infinity, we get \begin{align*} 
   \nabla \Phi_{\infty}(\bm\beta^\infty) - \nabla \Phi_{0}(\bm\beta(0)) = \begin{bmatrix}
       \operatorname{sgn}(\cos\theta \Tilde{y}) \\
       \operatorname{sgn}(\sin\theta \Tilde{y})
   \end{bmatrix} \odot \left(\bm u^{\infty} - \bm u(0)\right).
\end{align*}
Next, we want to compute the extent of the deviation from the exact KKT point. \begin{align*}
    \delta := \min_{\nu \in \mathbb{R}}\left\Vert\nabla_{\bm \beta} E\left(\bm\beta\right) - \nu \cdot \nabla_{\bm \beta}(X\bm\beta) \right\Vert &= \min_{\nu \in \mathbb{R}}\left\Vert\begin{bmatrix}
       \operatorname{sgn}(\cos\theta \Tilde{y}) \\
       \operatorname{sgn}(\sin\theta \Tilde{y})
   \end{bmatrix}\odot \left(\bm u^\infty - \bm u(0) \right) - \nu \cdot \begin{bmatrix}
        \sqrt{\lambda}\cos\theta \\
        \sqrt{\lambda}\sin\theta
    \end{bmatrix} \right\Vert
\end{align*}
Let $V:= \begin{bmatrix}
       \operatorname{sgn}(\cos\theta\Tilde{y})\left(u_1^\infty - u_1(0) \right) \\
       \operatorname{sgn}(\sin\theta\Tilde{y})\left(u_2^\infty - u_2(0) \right)
   \end{bmatrix}$. Using orthogonal projection, we get: \begin{align*}
     \min_{\nu \in \mathbb{R}}\left\Vert V - \nu \cdot \begin{bmatrix}
        \sqrt{\lambda}\cos\theta \\
        \sqrt{\lambda}\sin\theta
    \end{bmatrix} \right\Vert &= \left| \left\langle V, ~\begin{bmatrix}
    -\sin\theta \\
    \cos\theta
\end{bmatrix} \right\rangle\right| \\
&= \Big| -\operatorname{sgn}(\cos\theta\Tilde{y})\sin\theta \left(u_1^\infty - u_1(0)\right) + \operatorname{sgn}(\sin\theta\Tilde{y})\cos\theta \left(u_2^\infty - u_2(0)\right)\Big| \\
&= \Big|\operatorname{sgn}(\sin\theta \cos\theta \Tilde{y})\cdot \left(-|\sin\theta|\left(u_1^\infty - u_1(0)\right) + |\cos\theta|\left(u_2^\infty - u_2(0)\right) \right) \Big| \\
&= \Big||\cos\theta|\left(u_2^\infty - u_2(0)\right) - |\sin\theta|\left(u_1^\infty - u_1(0)\right) \Big|.
\end{align*}
Therefore, $\delta = |\Delta|$, where $\Delta := |\cos\theta| \left(u_2^\infty - u_2(0) \right) - |\sin\theta| \left(u_1^\infty - u_1(0)\right)$. Apply Lemma~\ref{lemma:M+} and Lemma~\ref{lemma:M-}, we get\begin{align*}
    M_- \leq \Delta \leq M_+,
\end{align*}
where\begin{align*}
    M_- &:=  \left(|\cos\theta|-|\sin\theta|\right)\left(\left(2\lambda\right)^{-\frac{1}{4}}|y|^{\frac{1}{2}}-\alpha\right) - 2\sqrt{\frac{2\varepsilon}{\lambda^{\frac{3}{4}}|\sin\theta||y|^{\frac{1}{2}}}} - \frac{3\sqrt{2}\varepsilon}{\lambda^{\frac{1}{2}}|\sin\theta y|}\ln \left(\frac{\lambda^{\frac{1}{4}}|\sin\theta||y|^{\frac{3}{2}}}{\sqrt{2}\varepsilon}\right), \\
    M_+ &:= \left( |\cos\theta| - |\sin\theta| \right) \left(\lambda^{-\frac{1}{4}}|y|^\frac{1}{2} - \frac{\sqrt{2}\varepsilon}{4\lambda^\frac{1}{2}|y|}\right).
\end{align*}
Thus, $\delta = |\Delta| \leq \max\{|M_-|, |M_+|\}$.
\end{proof}

\subsection{Proof of Corollary~\ref{corollary:epsilon}}
\begin{proof}
Let us consider $\delta(\varepsilon), \Delta(\varepsilon), M_-(\varepsilon), M_+(\varepsilon)$ as functions of $\varepsilon$ on the domain $\mathcal{I}_{\varepsilon} = \left[0,~\bar{\varepsilon}\right]$ implied by Assumption~\ref{assumption:epsilonalpha}. And we define $M_-(0) := \lim_{\varepsilon\to0^+} M_-(\varepsilon)$ so that $M_-(\varepsilon)$ is continuous on the domain. Theorem~\ref{maintheorem} shows that $M_+(\varepsilon)$ is linearly decreasing in $\varepsilon$. By Lemma~\ref{lemma:M-_decreasing}, $M_-(\varepsilon)$ is strictly decreasing in $\varepsilon$ on the domain $\mathcal{I}_\varepsilon$. Lemma~\ref{lemma:M-_decreasing} also shows that $M_-(0) > 0$. If $M_-(\bar{\varepsilon}) \geq 0$, then $\Delta(\varepsilon) \geq M_-(\varepsilon) \geq M_-(\bar{\varepsilon}) \geq 0$ for all $\varepsilon \in \mathcal{I}_{\varepsilon}$. It implies that only $M_+(\varepsilon)$ applies to the bound, i.e., $\delta(\varepsilon) \leq M_+(\varepsilon)$. Let $\varepsilon^* = \bar{\varepsilon}$, and then for $\varepsilon \in [0,\,\varepsilon^*]$ we have \begin{equation}
    \delta(\varepsilon) \leq \bar{M} - \left( |\cos\theta| - |\sin\theta| \right)\frac{\sqrt{2}\varepsilon}{4\lambda^\frac{1}{2}|y|}.\label{eq:deltaepsilon}
\end{equation}
If $M_-(\bar{\varepsilon}) < 0$, since $M_-(0) > 0$, the monotonicity of $M_-(\varepsilon)$ ensures a unique $\hat{\varepsilon} \in (0, \bar{\varepsilon})$ such that $M_-(\hat{\varepsilon}) = 0$ and $\Delta(\varepsilon) \geq M_-(\varepsilon) \geq 0$ for $\varepsilon \in [0,\,\hat{\varepsilon}]$. Let $\varepsilon^* = \hat{\varepsilon}$. Notice that $\hat{\varepsilon}$ is positive, so $[0,\,\varepsilon^*]$ is non-degenerate. By similar arguments, we can show \eqref{eq:deltaepsilon}. We complete the proof by setting $\mathcal{I}' := [0,\,\varepsilon^*] \subseteq \mathcal{I}_\varepsilon$.
\end{proof}

\subsection{Proof of Corollary~\ref{corollary:NDcorollary}}
\begin{proof}
    By Lemma~\ref{lemma:convergence}, $\lim_{t\to\infty}r^{(n)}(t) = 0$ for all $n \in \{1, \dots, N\}$ and the weights converge. We let $\bar{\bm\beta}^{(n)} := \lim_{t\to\infty}\bm\beta^{(n)}(t)$, $\bar{\bm u}^{(n)} := \lim_{t\to\infty}\bm u^{(n)}(t)$ and $\bar{\bm v}^{(n)} := \lim_{t\to\infty}\bm v^{(n)}(t)$ for each $n$. We let $\bm\beta^{\infty} := \lim_{t\to\infty}\bm\beta(t) = \begin{bmatrix}
        \bar{\bm\beta}^{(1)} & \dots & \bar{\bm\beta}^{(n)}
    \end{bmatrix}^\top$.
    Then, using \eqref{eq:expressionofrtinbeta}, for all $n$ we have
    \begin{equation*}
        0 = \lim_{t\to\infty} r^{(n)}(t) = y^{(n)} - x^{(n)}_1\bar\beta_1^{(n)} - x^{(n)}_2\bar\beta_2^{(n)}.
    \end{equation*}
    Therefore, $X\bm\beta^\infty = \bm y$, i.e., $\bm\beta^\infty$ is an interpolating solution.
    
    Each block of $X^\top X$ can be parameterized by $\theta_n$ and $\lambda_n$:
    \begin{align*}
        B^{(n)} = \begin{bmatrix}
        \cos\theta_n & -\sin\theta_n \\
        \sin\theta_n & \cos\theta_n 
    \end{bmatrix}\begin{bmatrix}
        \lambda_n & 0 \\
        0 & 0
    \end{bmatrix}\begin{bmatrix}
        \cos\theta_n & \sin\theta_n \\
        -\sin\theta_n & \cos\theta_n 
    \end{bmatrix},
    \end{align*}
    where $|\cos\theta_n| \geq |\sin\theta_n| > 0$. $B^{(n)}$ is positive semi-definite and has rank 1, so $\lambda_n > 0$. We let $\Tilde{y}^{(n)} := \frac{y^{(n)}}{\sqrt{\lambda_n}}$. The constraint $X\bm\beta^\infty = \bm y$ consists of $N$ equality conditions: \begin{align*}
        \left\langle\bm x^{(1)}, \,\bm\beta^\infty\right\rangle &= y^{(1)}, \\
        &\dots \\
        \left\langle\bm x^{(N)}, \,\bm\beta^\infty\right\rangle &= y^{(N)}.
    \end{align*}
By integrating both sides of \eqref{eq:dualdynamics}, we get\begin{align*}
    \nabla_{\bm\beta} E\left(\bm\beta^\infty\right) &= \nabla\Phi_{\infty}\left(\bm\beta^\infty\right) -\nabla\Phi_{0}\left(\bm\beta(0)\right) \\
    &= \begin{bmatrix}
        \operatorname{sgn}(\cos\theta_1 \Tilde{y}^{(1)})\left( \bar{u}_1^{(1)} - u_1^{(1)}(0) \right) \\
        \operatorname{sgn}(\sin\theta_1 \Tilde{y}^{(1)})\left( \bar{u}_2^{(1)} - u_2^{(1)}(0) \right) \\
        \dots \\
        \operatorname{sgn}(\cos\theta_N \Tilde{y}^{(N)})\left( \bar{u}_1^{(N)} - u_1^{(N)}(0) \right) \\
        \operatorname{sgn}(\sin\theta_N \Tilde{y}^{(N)})\left( \bar{u}_2^{(N)} - u_2^{(N)}(0) \right) \\
    \end{bmatrix}.
\end{align*}
We let $\bm \mu := \begin{bmatrix}
    \mu_1 & \dots & \mu_N 
\end{bmatrix}$. Then, 
\begin{align*}
    \bar{\delta} &:= \min_{\bm\mu \in \mathbb{R}^N}\left\Vert \nabla_{\bm\beta} E\left(\bm\beta^\infty\right) - \sum_{n=1}^N\mu_n \bm x^{(n)} \right\Vert \\
    &= \min_{\bm\mu \in \mathbb{R}^N}\left\Vert  \nabla_{\bm\beta} E\left(\bm\beta^\infty\right) - \begin{bmatrix}
        \mu_1 \sqrt{\lambda_1}\cos\theta_1 \\
        \mu_1 \sqrt{\lambda_1}\sin\theta_1 \\
        \dots \\
        \mu_N \sqrt{\lambda_N}\cos\theta_N \\
        \mu_N \sqrt{\lambda_N}\sin\theta_N
    \end{bmatrix} \right\Vert \\
    &= \min_{\bm\mu \in \mathbb{R}^N}\left(\sum_{n=1}^{N} \left\Vert \begin{bmatrix}
       \operatorname{sgn}(\cos\theta_n \Tilde{y}^{(n)})\left( \bar{u}_1^{(n)} - u_1^{(n)}(0) \right) \\
        \operatorname{sgn}(\sin\theta_n \Tilde{y}^{(n)})\left( \bar{u}_2^{(n)} - u_2^{(n)}(0) \right) 
    \end{bmatrix} - \mu_n \cdot \begin{bmatrix}
         \sqrt{\lambda_n}\cos\theta_n \\
        \sqrt{\lambda_n}\sin\theta_n \\
    \end{bmatrix}\right\Vert^2\right)^{\frac{1}{2}} \\
    &\leq \sum_{n=1}^{N} \min_{\mu_n \in \mathbb{R}}\left\Vert \begin{bmatrix}
       \operatorname{sgn}(\cos\theta_n \Tilde{y}^{(n)})\left( \bar{u}_1^{(n)} - u_1^{(n)}(0) \right) \\
        \operatorname{sgn}(\sin\theta_n \Tilde{y}^{(n)})\left( \bar{u}_2^{(n)} - u_2^{(n)}(0) \right) 
    \end{bmatrix} - \mu_n \cdot \begin{bmatrix}
         \sqrt{\lambda_n}\cos\theta_n \\
        \sqrt{\lambda_n}\sin\theta_n \\
    \end{bmatrix}\right\Vert.
\end{align*}
By Theorem~\ref{maintheorem}, for each $n$ we have \begin{align*}
    \delta_n &:= \min_{\mu_n \in \mathbb{R}}\left\Vert \begin{bmatrix}
       \operatorname{sgn}(\cos\theta_n \Tilde{y}^{(n)})\left( \bar{u}_1^{(n)} - u_1^{(n)}(0) \right) \\
        \operatorname{sgn}(\sin\theta_n \Tilde{y}^{(n)})\left( \bar{u}_2^{(n)} - u_2^{(n)}(0) \right) 
    \end{bmatrix} - \mu_n \cdot \begin{bmatrix}
         \sqrt{\lambda_n}\cos\theta_n \\
        \sqrt{\lambda_n}\sin\theta_n \\
    \end{bmatrix}\right\Vert \\
    &\leq \max\left\{\Big|M^{(n)}_+\Big|,~\Big|M_-^{(n)}\Big|\right\}.
\end{align*}
We conclude that $\bar{\delta} \leq \sum_{n=1}^N \delta_n \leq \sum_{n=1}^N \max\left\{\Big|M^{(n)}_+\Big|,~\Big|M_-^{(n)}\Big|\right\}$.
\end{proof}

\subsection{Proof of Corollary~\ref{corollary:NDvarepsilon}}
\begin{proof}
    For each $n \in \{1, \dots, N \}$, we can apply Corollary~\ref{corollary:epsilon} and show that there exists a non-degenerate interval $\mathcal{I}'_n = [0,~\varepsilon^*_n]$ such that for all $\varepsilon \in \mathcal{I}'_n$, we have \begin{align}
    \delta_n(\varepsilon) \leq \left( |\cos\theta_n| - |\sin\theta_n| \right) \left(\lambda_n^{-\frac{1}{4}}|y^{(n)}|^\frac{1}{2}\right) - \left( |\cos\theta_n| - |\sin\theta_n| \right)\frac{\sqrt{2}\varepsilon}{4\lambda_n^\frac{1}{2}|y^{(n)}|}.\label{eq:deltanupperbound}
\end{align}
We let $\Tilde{\varepsilon} := \min_n\{ \varepsilon^*_n\}$ and let $\mathcal{J} := \bigcap_{n=1}^N \mathcal{I}'_n = [0, ~\Tilde{\varepsilon}]$. Since each $\mathcal{I}'_n$ is non-degenerate, we have $\varepsilon_n^* > 0$ for all $n$ and $\Tilde{\varepsilon} > 0$. Therefore, the interval $\mathcal{J}$ is non-degenerate. Then, for all $\varepsilon \in \mathcal{J}$, the relationship \eqref{eq:deltanupperbound} holds. Therefore, by Corollary~\ref{corollary:NDcorollary}, we have
\begin{equation*}
    \bar{\delta}(\varepsilon) \leq \sum_{n=1}^{N} \delta_n(\varepsilon) \leq \sum_{n=1}^N \left( |\cos\theta_n| - |\sin\theta_n| \right) \left(\lambda_n^{-\frac{1}{4}}|y^{(n)}|^\frac{1}{2}\right)  - \left(\sum_{n=1}^N\left( |\cos\theta_n| - |\sin\theta_n| \right)\frac{\sqrt{2}}{4\lambda_n^\frac{1}{2}|y^{(n)}|}\right)\varepsilon.
\end{equation*}
\end{proof}

\section{Derivation of Dual Dynamics for Gradient Descent}\label{appendixC}
When applying GD to minimize the loss (7) with respect to weights, in the continuous-time limit we have:
    \begin{align}
        \frac{d\bm w^+(t)}{dt} &= -\bm w^+(t) \odot X^\top (X \bm \beta(t) - \bm y) \label{eq:weightsuGD}\\
        \frac{d\bm w^-(t)}{dt} &= \bm w^-(t) \odot X^\top (X \bm \beta(t) - \bm y) \label{eq:weightsvGD}
    \end{align}
With initialization $\bm w^+(0) = \bm w^-(0) = \alpha \bm 1$, we can write implicit solutions to \eqref{eq:weightsuGD} and \eqref{eq:weightsvGD} as \begin{align*}
    \bm w^+(t) &= \alpha \exp\left(-\int_0^t X^\top(X\bm\beta(s) - \bm y)~ds\right), \\
    \bm w^-(t) &= \alpha \exp\left(\int_0^t X^\top(X\bm\beta(s) - \bm y)~ds  \right).
\end{align*}
Therefore, we have
\begin{align*}
    \bm\beta(t) &= \bm w^+(t) \odot \bm w^+(t)  - \bm w^-(t) \odot \bm w^-(t)
    \\
    &= \alpha^2 \left[\exp\left(-2\int_0^t X^\top(X\bm\beta(s) - \bm y)~ds\right) - \exp\left(2\int_0^t X^\top(X\bm\beta(s) - \bm y)~ds \right) \right] \\
    &= 2\alpha^2 \operatorname{sinh}\left(-2\int_0^t X^\top (X\bm\beta(s) - \bm y)~ds \right).
\end{align*}
It follows that
\begin{align*}
\frac{1}{2\alpha^2}\bm\beta(t) &=  \operatorname{sinh}\left(-2\int_0^t X^\top (X\bm\beta(s) - \bm y)~ds \right) \\
\operatorname{arcsinh}\left(\frac{\bm\beta(t)}{2\alpha^2}\right) &= -\int_0^t 2X^\top (X\bm\beta(s) - \bm y)~ds \\
\frac{d\operatorname{arcsinh}\left(\frac{\bm\beta(t)}{2\alpha^2}\right)}{dt} &= -2X^\top (X\bm\beta(t) - \bm y).
\end{align*}
We note that $\nabla_{\bm\beta} L(\bm\beta(t)) = \frac{1}{2}X^\top (X \bm\beta(t) - \bm y)$. Then, we have \begin{align}
\frac{d\operatorname{arcsinh}\left(\frac{\bm\beta(t)}{2\alpha^2}\right)}{dt} = -2X^\top (X\bm\beta(t) - \bm y) = -4\nabla_{\bm\beta} L(\bm\beta(t)).\label{eq:77}
\end{align}
Given the potential function $\Psi_{\alpha}(\bm\beta(t)) = \frac{1}{4}\left(\sum_{i=1}^D \beta_i\operatorname{arcsinh}(\frac{\beta_i}{2\alpha^2}) + \sqrt{\beta_i^2 + 4\alpha^4} \right)
$, we have the mirror map\begin{align}
    \nabla \Psi_\alpha(\bm\beta(t)) = \frac{1}{4}\operatorname{arcsinh}\left(\frac{\bm\beta(t)}{2\alpha^2}\right).\label{eq:78}
\end{align}
Using \eqref{eq:77} and \eqref{eq:78}, we get the dual dynamics for GD:\begin{align*}
    \frac{d\nabla \Psi_\alpha(\bm\beta(t))}{dt} = -\nabla_{\bm\beta}L(\bm\beta(t)).
\end{align*}

\end{document}